\newtheorem{observation}{Observation}
\newtheorem{theorem}{Theorem}
\newtheorem{assumption}{Assumption}
\newtheorem{lemma}{Lemma}
\newlength\inputlen
\crefname{equation}{Eq.}{Eqs.}
\crefname{figure}{Fig.}{Figs.}
\crefname{section}{Sec.}{Secs.}
\crefname{subsection}{Sec.}{Secs.}
\crefname{theorem}{Thm.}{Thms.}
\crefname{corollary}{Cor.}{Cors.}
\crefname{lemma}{Lemma}{Lemmas}
\crefname{observation}{Observation}{Observations}
\crefname{definition}{Defn.}{Defns.}
\crefname{assumption}{Assumption}{Assumptions}
\crefname{appendix}{Appx.}{Appx.}
\crefname{algocf}{Alg.}{Algs.}
\Crefname{algocf}{Algorithm}{Algorithms}
\newif\ifcomments
\ifcomments\newcommand{\comments}[1]{#1}\else\newcommand{\comments}[1]{}\fi
\definecolor{clrgp}{rgb}{.9,0,.9}
\definecolor{gray}{rgb}{0.41, 0.41, 0.41}
\definecolor{forestgreen}{rgb}{0.13, 0.55, 0.13}
\titleformat*{\subparagraph}{\normalfont\itshape}
\renewcommand{\paragraph}[1]{\vspace{0.20ex}\noindent\textbf{#1}}
\renewcommand{\subparagraph}[1]{\vspace{0.20ex}\noindent\textit{#1}}
\definecolor{dark-red}{rgb}{0.4,0.15,0.15}
\definecolor{dark-blue}{rgb}{0.15,0.15,0.4}
\definecolor{medium-blue}{rgb}{0,0,0.5}
\setlist[enumerate,1]{label={\arabic*)}}
\newcommand{\titl}{The Limitations of Large Width in Neural Networks: A Deep Gaussian Process Perspective}
\newcommand{\authorinfo}{
  Geoff Pleiss \\
  Columbia University \\
  gmp2162@columbia.edu
  \And
  John P. Cunningham \\
  Columbia University \\
  jpc2181@columbia.edu
}
\title{\titl}
\author{\authorinfo}
\DeclareMathOperator*{\expectedvalue}{\mathbb{E}}
\DeclareMathOperator*{\variance}{Var}
\newcommand{\mparen}[1]{\ensuremath{\mathchoice{ \left( #1 \right) }{ (#1) }{ (#1) }{}}}
\newcommand{\mbracket}[1]{\ensuremath{\mathchoice{ \left[ #1 \right] }{ [#1] }{ [#1] }{}}}
\newcommand{\reals}{\ensuremath{\mathbb{R}}}
\newcommand{\GP}[2]{\ensuremath{\mathcal{GP} \mbracket{ #1, #2 }}}
\newcommand{\normaldist}[2]{\ensuremath{\mathcal{N} \mparen{ #1, #2 } }}
\newcommand{\Ev}[1]{\ensuremath{\expectedvalue \mbracket{ #1 }}}
\newcommand{\Evover}[2]{\ensuremath{\expectedvalue_{#1} \mbracket{ #2 }}}
\newcommand{\Var}[1]{\ensuremath{\variance \mbracket{ #1 }}}
\newcommand{\intd}[1]{\,\mathrm{d}{#1}}
\newif\ifboldmatrix
\ifboldmatrix\newcommand{\boldmatrix}[1]{\mathbf{#1}}\else\newcommand{\boldmatrix}[1]{#1}\fi
\newcommand{\balpha}{\ensuremath{\boldsymbol{\alpha}}}
\newcommand{\bb}{\ensuremath{\mathbf{b}}}
\newcommand{\bfn}{\ensuremath{\mathbf{f}}}
\newcommand{\bg}{\ensuremath{\mathbf{g}}}
\newcommand{\bk}{\ensuremath{\mathbf{k}}}
\newcommand{\bsigma}{\ensuremath{\boldsymbol{\sigma}}}
\newcommand{\bt}{\ensuremath{\mathbf{t}}}
\newcommand{\bw}{\ensuremath{\mathbf{w}}}
\newcommand{\bx}{\ensuremath{\mathbf{x}}}
\newcommand{\bxi}{\ensuremath{\boldsymbol{\xi}}}
\newcommand{\by}{\ensuremath{\mathbf{y}}}
\newcommand{\bz}{\ensuremath{\mathbf{z}}}
\newcommand{\bzero}{\ensuremath{\mathbf{0}}}
\newcommand{\bF}{\ensuremath{\boldmatrix{F}}}
\newcommand{\bI}{\ensuremath{\boldmatrix{I}}}
\newcommand{\bK}{\ensuremath{\boldmatrix{K}}}
\newcommand{\bL}{\ensuremath{\boldmatrix{L}}}
\newcommand{\bW}{\ensuremath{\boldmatrix{W}}}
\newcommand{\bX}{\ensuremath{\boldmatrix{X}}}
\begin{document}
\maketitle

\begin{abstract}
  Large width limits have been a recent focus of deep learning research: modulo computational practicalities, do wider networks outperform narrower ones?
Answering this question has been challenging, as conventional networks gain representational power with width, potentially masking any negative effects.
Our analysis in this paper decouples capacity and width via the generalization of neural networks to Deep Gaussian Processes (Deep GP),
a class of nonparametric hierarchical models that subsume neural nets.
In doing so, we aim to understand how width affects (standard) neural networks once they have sufficient capacity for a given modeling task.
Our theoretical and empirical results on Deep GP suggest that \emph{large width can be detrimental to hierarchical models}.
Surprisingly, we prove that even nonparametric Deep GP converge to Gaussian processes, effectively becoming shallower without any increase in representational power.
The posterior, which corresponds to a mixture of data-adaptable basis functions, becomes less data-dependent with width.
Our tail analysis demonstrates that width and depth have opposite effects: depth accentuates a model's non-Gaussianity, while width makes models increasingly Gaussian.
We find there is a ``sweet spot'' that maximizes test performance before the limiting GP behavior prevents adaptability,
occurring at $\text{width}=1$ or $\text{width}=2$ for nonparametric Deep GP.
These results make strong predictions about the same phenomenon in conventional neural networks trained with L2 regularization
(analogous to a Gaussian prior on parameters):
we show that such neural networks may need up to $500-1000$ hidden units for sufficient capacity---depending on the dataset---%
but further width degrades performance.

\end{abstract}

\section{Introduction}
\label{sec:intro}

Research has shown that deeper neural networks tend to be more expressive and efficient than wider networks under a variety of metrics \citep[e.g.][]{montufar2014number,romero2014fitnets,telgarsky2016benefits,poole2016exponential,raghu2017expressive,nguyen2017loss,lu2017expressive,chen2020towards}.
Nevertheless, there is resurgent interest in wide models due in part to empirical successes \citep[e.g.][]{zagoruyko2016wide} and theoretical analyses of limiting behavior.
When randomly initialized to create a distribution over functions, neural networks converge to Gaussian processes ({\bf GP}) as width increases.
This result, first proved for 2-layer networks \citep{neal1995bayesian}, has been extended to deeper networks \citep{lee2017deep,matthews2018gaussian}, convolutional networks \citep{garriga2019deep,novak2019bayesian}, and other architectures \cite{yang2019tensor,hron2020infinite}.
A similar limit exists for gradient-trained networks, which behave increasingly like kernel machines under the neural tangent kernel \citep[e.g.][]{jacot2018neural,du2019gradient,allen2019learning,arora2019exact,lee2019wide,geiger2020scaling,yang2020tensor}.

While these limits simplify analyses, there is something unsettling about reducing neural networks to kernel methods.
\citet[][p. 161]{neal1995bayesian} describes the GP limit as ``disappointing,'' noting that
``infinite networks do not have hidden units that represent `hidden features'\ldots
often seen [as the] interesting aspect of neural network learning.''
Recent work indeed shows that
learned hierarchical features can be exponentially more efficient than the fixed shallow representations of kernels
\citep[e.g.][]{arora2019exact,allen2019can,allen2020backward,bai2020beyond,bengio2005curse,chen2020towards,ghorbani2019limitations,ghorbani2020neural,li2020learning,yehudai2019power}.
At the same time, wider networks can more accurately model complex functions \citep{goodfellow2016deep}.
Thus, wide limits appear to confound opposing phenomenon: increased capacity makes them more expressive, yet the loss of hierarchical features seems to make them less expressive.
This may explain the mixed empirical performance of limiting models:
outperforming finite width models in some scenarios \citep[e.g.][]{arora2020harnessing,lee2020finite,garriga2019deep,geiger2020scaling},
yet falling short on more complex tasks \citep[e.g.][]{arora2019exact,bai2020beyond,fort2020deep,shankar2020neural,lee2019wide}.

This paper aims to decouple these effects of large width.
Our goal is to understand the inductive biases of wide networks,
after a network has ``sufficient'' capacity for a given modeling task.  We ask:
\emph{
	If we control for the effects of increased capacity, what---if any---value remains in wide networks?
}

To achieve this control, we note that a typical neural network layer corresponds to a finite basis, where elementwise nonlinearities transform each hidden feature into a \emph{single basis function}.
In order to decouple width from capacity, one could generalize these layers so that each nonlinearity produces any number of basis functions;
if each hidden feature gives rise to an infinite and universal basis, then hidden layers would have infinite representational capacity \emph{regardless of width}.
This generalization is in fact a well-studied class of hierarchical models---Deep Gaussian Processes ({\bf Deep GP}) \citep{damianou2013deep,damianou2015deep,bui2016deep,cutajar2017random,salimbeni2017doubly,havasi2018inference,dutordoir2020bayesian,dutordoir2021gpflux}---%
where standard neural net layers are replaced with vector-valued Gaussian processes.
Indeed, typical neural networks are a degenerate Deep GP subclass \cite{aitchison2020bigger,agrawal2020wide,duvenaud2014avoiding,ober2020global}.

We therefore have a generalization of neural networks where capacity is controlled,
from which we can glean insights about conventional networks that have sufficient representational power for a given modeling task.
Surprisingly, despite using Gaussian processes as the primary hierarchical component, we prove that \emph{Deep GP converge to (single-layer) GP in their infinite width limit} (\cref{thm:dgp_gp}).
Troubling implications immediately ensue:
large width is strictly detrimental to Deep GP, as the limiting model collapses to a shallower version of itself.
We support this theorem with an analysis of neural network and Deep GP posteriors, which \emph{become less adaptable as width increases}.
Specifically, we show that the posterior mean corresponds to a mixture of functions drawn from data-dependent (and thus adaptive) reproducing kernel Hilbert spaces, formalizing the above claim from  \citet{neal1995bayesian}.
As width increases, this mixture collapses to the data-independent kernel of the limiting GP, implying that wider models have less feature learning.
Finally, we present a novel tail analysis which indicates that \emph{width and depth have opposite effects}:
depth accentuates non-Gaussianity, sharpening peaks and fattening tails, whereas width increases Gaussianity (\cref{thm:mean_concentration,thm:tails}).

Our theoretical results hold for Deep GP and conventional (parametric) neural networks alike.
Experiments confirm that---%
after a model achieves sufficient capacity\footnote{
	We offer a formal notion of ``sufficient capacity'' in \cref{sec:dgp_sufficient_capacity}.
}---\emph{width can become harmful to model fit and performance}.
For nonparametric Deep GP, a width of 1 or 2 often achieves the best performance.
Neural networks---because of their parametric nature---naturally require more hidden units before achieving optimal accuracy.
Nevertheless, for Bayesian neural networks and conventional (optimized) neural networks trained with L2 regularization,
performance degrades after a certain width.
On small datasets ($N \leq 1000$) with low dimensionality,
we find that models with $\leq16$ hidden units achieve best test set performance.
On larger datasets like CIFAR10, this ``sweet spot'' occurs later (at $\approx 500$ hidden units for sufficiently deep models), yet performance degrades beyond this width.
We note that these trends do not necessarily hold for models that do not have a probabilistic interpretation---%
i.e. optimized neural networks trained without (or nearly without) L2 regularization.
Nevertheless, our findings suggest that narrower models have better inductive biases,
and wide models perform well \emph{in spite of}---not because of---large width.

\section{Setup}

\subsection{Related Work}
\label{sec:related}

\paragraph{Effects of width.}
Works have shown that, given finite parameters, deeper models are more expressive than wider models \citep{montufar2014number,telgarsky2016benefits,lu2017expressive,poole2016exponential,raghu2017expressive}.
Similarly to our work, \citet{aitchison2020bigger} recognises the link between finite neural networks and Deep GP,
and argues that finite neural networks have flexibility in the top-layer representation that is absent in the infinite-width limit.
\citet{halverson2021neural} draw a connection to quantum field theory to argue that neural networks become ``simpler'' near their infinite-width limit.
In the non-probabilistic setting, it is worth noting that wide models have been shown to have favorable optimization landscapes \citep{nguyen2017loss,li2018visualizing,soltanolkotabi2018theoretical,arora2019fine,du2019gradient}
and are resistant to overfitting via double descent \citep{belkin2019reconciling,nakkiran2020deep,cao2020generalization}.
Our work controls for these factors by examining nonparametric hierarchical models with exact Bayesian inference,
and thus does not disagree with these other works.
Infinite width limits have received renewed interest
in Bayesian \citep{neal1995bayesian,lee2019wide,matthews2018gaussian,garriga2019deep,novak2019bayesian,yang2019tensor,hron2020infinite}
and non-Bayesian \citep{jacot2018neural,du2019gradient,allen2019learning,arora2019exact,lee2019wide,yang2020tensor,chizat2018global,mei2018mean,golikov2020towards,yang2020feature} settings.
Most of these works show that neural networks converge to kernel methods,
though recent work suggests that  this limiting behavior can be
avoided with different parameterizations \citep[e.g.][]{chizat2018global,mei2018mean,golikov2020towards,yang2020feature}.
Similarly to \citet{lee2019wide}, our Deep GP limit analysis sequentially increases the width of each layer,
though we hypothesize a similar proof exists where the width of all layers increases simultaneously (akin to \citep{matthews2018gaussian}).

\paragraph{Deep GP}
are introduced by \citet{damianou2013deep}.
A large portion of Deep GP research has thus far focused on scalable approximate inference methods
\citep{bui2016deep,dai2016variational,wang2016sequential,cutajar2017random,salimbeni2017doubly,havasi2018inference,dutordoir2021gpflux,ober2020global}.
Though prior work has studied tail properties of neural networks \citep{vladimirova2019understanding,zavatoneveth2021exact} and Deep GP with RBF kernels \citep{lu2020interpretable},
our work is---to the best of our knowlege---the first general result for Deep GP tails.
\citet{duvenaud2014avoiding} and \citet{dunlop2018deep} investigate pathological behaviors that arise with depth,
while \citet{agrawal2020wide} note that ``bottlenecked'' Deep GP have better performance and correlations among predictive tasks.
Our work complements these analysis by characterizing the effects of width.

\paragraph{Connections between Deep GP and neural networks.}
Many researchers have noted connections between neural networks and Deep GP \citep[e.g.][]{gal2016dropout,louizos2016structured,cutajar2017random,dutordoir2021deep}.
\citet{duvenaud2014avoiding} suggest that infinitely-wide neural networks with intermediate bottleneck layers are nonparametric Deep GP.
\citet{agrawal2020wide} formalize this connection,
but note that not all Deep GP can be constructed from bottlenecked neural networks
(see \cref{sec:proof_dgp_gp}).
In contrast to these prior works, we avoid reducing Deep GP to neural networks,
and instead reduce neural networks to degenerate Deep GP.

\subsection{A Covariance Perspective on Gaussian Process Limiting Behavior}
\label{sec:kernelized_nn}

To decouple the effects of increasing width and capacity, we first prove a new result about GP limits for a more general class of models, including Deep GP as well as typical neural networks. This result forms a necessary foundation for the subsequent theorems that are a main contribution of this work.
To begin, note that the proof technique introduced by \citet{neal1995bayesian} and extended by others \citep{lee2017deep,matthews2018gaussian,garriga2019deep,novak2019bayesian,yang2019tensor,hron2020infinite}
relies on the multivariate central limit theorem, which requires a model with additive structure.
Deep GP do not generally decompose in an additive manner, so we establish a more general proof technique.
For simplicity, we first present it in the context of neural networks, and then extend it to a more general class of models.

Consider the 2-layer neural network $f_2(\bfn_1(\bx))$, with
$\bfn_1: \reals^D \to \reals^{H_1}$ and $f_2 : \reals^{H_1} \to \reals$:
\begin{equation}
  \bfn_1(\cdot) = \bW_1^\top (\cdot) + \beta \bb_1, \qquad f_2(\cdot) = {\textstyle \frac{1}{\sqrt{H_1}}} \bw_2^\top \bsigma(\cdot) + \beta b_2.
  \label{eqn:2layer_nn}
\end{equation}
$\bsigma(\cdot)$ is an elementwise nonlinearity,
$\beta$ is a positive constant,
and $\bW_1$, $\bb_1$, $\bw_2$, and $b_2$ are i.i.d. Normal.
With randomly initialized parameters, $f_2(\bfn_1(\cdot)): \reals^D \to \reals$ is a prior distribution over functions,
and this distribution converges to a GP in the infinite width limit \citep{neal1995bayesian}.

\begin{lemma}\label{lemma:nn_cf}
  \vspace{0.5em}
  The neural network defined in \cref{eqn:2layer_nn} is a Gaussian process if and only if---%
  for any finite set of inputs $\bX = [\bx_1, \ldots, \bx_N]$---%
  the conditional prior covariance $\Evover{\bfn_2 \mid \bX, \bW_1, \bb_1}{ \bfn_2 \bfn_2^\top }$
  is almost surely equal to the marginal prior covariance $\Evover{\bfn_2 \mid \bX}{ \bfn_2 \bfn_2^\top }$,
  where $\bfn_2 \mid \bX \triangleq [f_2(\bfn_1(\bx_1)), \ldots, f_2(\bfn_1(\bx_N))]$.
\end{lemma}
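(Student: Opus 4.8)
Fix a finite design $\bX = [\bx_1,\ldots,\bx_N]$ and write $\bfn_2 \mid \bX = [f_2(\bfn_1(\bx_1)),\ldots,f_2(\bfn_1(\bx_N))]$ as in the statement. The plan is to exploit the two stages of randomness separately—the first-layer weights $(\bW_1,\bb_1)$ and the second-layer weights $(\bw_2,b_2)$. Conditioned on $(\bW_1,\bb_1)$, the feature vectors $\bsigma(\bfn_1(\bx_i))$ are fixed and $\bfn_2$ is a \emph{linear} image of the Gaussian vector $(\bw_2,b_2)$; since these weights are zero-mean, $\bfn_2 \mid \bX,\bW_1,\bb_1$ is a centered multivariate Gaussian whose covariance is exactly the conditional second moment $\bSigma(\bW_1,\bb_1) \triangleq \Evover{\bfn_2\mid\bX,\bW_1,\bb_1}{\bfn_2\bfn_2^\top}$. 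Taking the expectation over $(\bW_1,\bb_1)$ and using the tower rule (the conditional mean is $\bzero$), the marginal covariance is $\overline{\bSigma} \triangleq \Evover{\bfn_2\mid\bX}{\bfn_2\bfn_2^\top} = \Evover{\bW_1,\bb_1}{\bSigma(\bW_1,\bb_1)}$. Thus the law of $\bfn_2\mid\bX$ is a centered Gaussian \emph{mixture} indexed by the random covariance $\bSigma(\bW_1,\bb_1)$, and—at each fixed $\bX$—the lemma reduces to the claim that this mixture is Gaussian if and only if $\bSigma(\bW_1,\bb_1) = \overline{\bSigma}$ almost surely.

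The reverse direction is immediate: if $\bSigma(\bW_1,\bb_1)=\overline{\bSigma}$ a.s., the conditional law does not depend on $(\bW_1,\bb_1)$, so the marginal is simply $\normaldist{\bzero}{\overline{\bSigma}}$, and quantifying over all finite $\bX$ yields a GP. For the forward direction I would pass to characteristic functions. Integrating the conditional Gaussian characteristic function over $(\bW_1,\bb_1)$ gives the mixture characteristic function $\Evover{\bW_1,\bb_1}{\exp(-\tfrac12\,\bt^\top\bSigma(\bW_1,\bb_1)\,\bt)}$ at each $\bt\in\reals^N$. If $\bfn_2\mid\bX$ is Gaussian, this must coincide with $\exp(-\tfrac12\,\bt^\top\overline{\bSigma}\,\bt)$ for every $\bt$.

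The crux is converting this identity into an almost-sure statement. Define the nonnegative scalar $Q = \bt^\top\bSigma(\bW_1,\bb_1)\,\bt \ge 0$, which has finite mean $\Ev{Q} = \bt^\top\overline{\bSigma}\,\bt$ (finite because the Gaussian marginal has finite covariance). Matching characteristic functions reads $\Ev{e^{-Q/2}} = e^{-\Ev{Q}/2}$. Since $x\mapsto e^{-x/2}$ is strictly convex, Jensen's inequality holds with equality if and only if $Q$ equals its mean almost surely; hence $\bt^\top\bSigma(\bW_1,\bb_1)\,\bt = \bt^\top\overline{\bSigma}\,\bt$ a.s. for each fixed $\bt$.

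It then remains to upgrade this per-direction identity to the full matrix, which I expect to be the main obstacle, since the null set above depends on $\bt$. To handle this I would invoke the previous step only for the \emph{finitely many} directions $\bt\in\{\be_i\}_{i\le N}\cup\{\be_i+\be_j\}_{i<j}$; intersecting these finitely many almost-sure events gives a single full-measure event on which all the corresponding quadratic forms of $\bSigma(\bW_1,\bb_1)$ and $\overline{\bSigma}$ agree. Polarization, $[\bSigma]_{ij}=\tfrac12[(\be_i+\be_j)^\top\bSigma(\be_i+\be_j)-\be_i^\top\bSigma\be_i-\be_j^\top\bSigma\be_j]$ with $\bSigma=\bSigma(\bW_1,\bb_1)$, then recovers every entry, so $\bSigma(\bW_1,\bb_1)=\overline{\bSigma}$ almost surely. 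Because both the Gaussianity hypothesis and the conclusion are stated ``for any finite $\bX$,'' establishing this equivalence at each fixed $\bX$ proves both implications of the lemma. The delicate points to verify carefully are the equality case of Jensen (requiring strict convexity together with $\Ev{Q}<\infty$) and this final reduction from the $\bt$-dependent almost-sure identities to a single almost-sure matrix identity.
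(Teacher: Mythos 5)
Your proposal is correct and follows essentially the same route as the paper's proof: condition on $(\bW_1,\bb_1)$ to see $\bfn_2\mid\bX$ as a centered Gaussian mixture, lower-bound its characteristic function via Jensen applied to the strictly convex exponential, and characterize equality by the conditional covariance being almost surely constant. Your explicit handling of the $\bt$-dependent null sets via polarization over finitely many directions is a detail the paper glosses over, and it is a worthwhile addition, but it does not change the underlying argument.
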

\begin{proof}
  By definition, $f_2(\bfn_1(\cdot))$ is a GP if and only if
  $\bfn_2 \mid \bX$ is multivariate Gaussian for any $\bX$.
  From \cref{eqn:2layer_nn}, we have
  $
  p(\bfn_2 \mid \bX, \bW_1, \bb_1) = \normaldist{\bzero}{\bK_{\bW_1, \bb_1}(\bX,\bX)},
  $
  where $[\bK_{\bW_1, \bb_1}(\bX,\bX)]_{ij} = \beta^2 + \frac{1}{H_1} \bsigma(\bW_1^\top \bx_i + \beta \bb_1)^\top \bsigma(\bW_1^\top \bx_j + \beta \bb_1)$ is the appropriate kernel Gram matrix.
  Using Jensen's inequality, we have a lower bound on the characteristic function of $\bfn_2 \mid \bX$:
  \begin{align}
    \Evover{\bfn_2 \mid \bX}{ \exp \left( i \bt^\top \bfn_2 \right) }
    &=
    \Evover{\bW_1, \bb_1}{
      \Evover{\bfn_2 \mid \bX, \bW_1, \bb_1}{ \exp \left( i \bt^\top \bfn_2 \right) }
    }
    \tag{law of total expectation}
    \\
    &=
    \mathbb{E}_{\bW_1, \bb_1} \biggl[
      \exp \left( -\tfrac 1 2 \bt^\top \bK_{\bW_1, \bb_1}(\bX,\bX) \bt \right)
    \biggr]
    \tag{char. func. of a Gaussian}
    \\
    &\geq \exp \left(
      -\tfrac 1 2 \bt^\top \Evover{ \bW_1, \bb_1 }{ \bK_{\bW_1, \bb_1}(\bX,\bX)
      } \bt
    \right).
    \tag{convexity of $\exp$}
  \end{align}
  This lower bound happens to be the characteristic function
  of $\normaldist{\bzero}{\Evover{\bW_1, \bb_1}{ \bK_{\bW_1, \bb_1}(\bX, \bX) }}.$
  Since $\exp$ is strictly convex, the characteristic function of $\bfn_2 \mid \bX$ equals the Gaussian lower bound $\forall \bt$
  if and only if $p(\bK_{\bW_1, \bb_1}(\bX,\bX) \mid \bW_1, \bb_1) = \Evover{\bfn \mid \bX, \bW_1, \bb_1}{\bfn_2 \bfn_2^\top}$ is a constant with probability 1.
\end{proof}
Seeing that $\frac{1}{H_1} \bsigma(\bW_1^\top \bx_i + \beta \bb_1)^\top \bsigma(\bW_1^\top \bx_j + \beta \bb_1)$
becomes a.s. constant as $H_1 \to \infty$,
\cref{lemma:nn_cf} re-establishes the result of \citet{neal1995bayesian} (see \cref{sec:additive_dgp_proof}).
Critically, unlike \citeauthor{neal1995bayesian}'s proof, \cref{lemma:nn_cf} neither relies on the central limit theorem nor requires $f_2(\bfn_1(\cdot))$ to be a neural network;
it holds if $p(\bfn_2 \mid \bfn_1(\bx_1), \ldots, \bfn_1(\bx_N))$ is Gaussian.
Therefore, we can generalize it to a larger class of models:
\begin{lemma}\label{lemma:dgp_cf}
  \vspace{0.5em}
  Let $f_2(\bfn_1(\cdot)) : \reals^D \to \reals$ be a hierarchical model where $f_2(\cdot) : \reals^{H_1} \to \reals$ is a GP and $\bfn_1(\cdot) : \reals^D \to \reals^{H_1}$ is a random vector-valued function
  (including a multilayer hierarchical model).
  Then $f_2(\bfn_1(\cdot))$ is a GP if and only if
  $\Evover{\bfn_2 \mid \bX, \bfn_1(\cdot)}{ \bfn_2 \bfn_2^\top } = \Evover{\bfn_2 \mid \bX}{ \bfn_2 \bfn_2^\top }$
  a.s. for all $\bX = [\bx_1, \ldots, \bx_N]$.
\end{lemma}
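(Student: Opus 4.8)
The plan is to mirror the proof of \cref{lemma:nn_cf} almost verbatim, with the single conceptual change that the conditioning variable is no longer the first-layer parameters $(\bW_1, \bb_1)$ but the entire random function $\bfn_1(\cdot)$. The argument of \cref{lemma:nn_cf} never used the specific parametric form of $f_2$ beyond two facts: that conditioning on the first layer renders $\bfn_2 \mid \bX$ Gaussian, and that the resulting conditional covariance is a measurable function of the conditioning variable. Both facts survive the generalization, so I expect the same Jensen/strict-convexity dichotomy to deliver the claim.

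First I would recall that, by definition of a GP, $f_2(\bfn_1(\cdot))$ is a GP if and only if $\bfn_2 \mid \bX$ is multivariate Gaussian for every finite input set $\bX$. Next I would condition on $\bfn_1(\cdot)$. Because $f_2$ is a (zero-mean) GP and because conditioning on the path $\bfn_1(\cdot)$ fixes the $N$ evaluation locations $\bfn_1(\bx_1), \ldots, \bfn_1(\bx_N)$, the vector $\bfn_2 = [f_2(\bfn_1(\bx_1)), \ldots, f_2(\bfn_1(\bx_N))]$ is simply a finite-dimensional marginal of the GP $f_2$ at a (now fixed) set of inputs. Hence $p(\bfn_2 \mid \bX, \bfn_1(\cdot)) = \normaldist{\bzero}{\bK_{\bfn_1}(\bX,\bX)}$, where $[\bK_{\bfn_1}(\bX,\bX)]_{ij}$ is the kernel of $f_2$ evaluated at $(\bfn_1(\bx_i), \bfn_1(\bx_j))$ and $\bK_{\bfn_1}(\bX,\bX) = \Evover{\bfn_2 \mid \bX, \bfn_1(\cdot)}{\bfn_2 \bfn_2^\top}$.

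With conditional Gaussianity in hand, the remainder is identical to \cref{lemma:nn_cf}. I would write the characteristic function of $\bfn_2 \mid \bX$ via the law of total expectation, evaluate the inner conditional characteristic function as $\exp(-\tfrac12 \bt^\top \bK_{\bfn_1}(\bX,\bX)\bt)$, and apply Jensen's inequality (convexity of $\exp$) to obtain the lower bound $\exp(-\tfrac12 \bt^\top \Evover{\bfn_1(\cdot)}{\bK_{\bfn_1}(\bX,\bX)} \bt)$, which is exactly the characteristic function of $\normaldist{\bzero}{\Evover{\bfn_2 \mid \bX}{\bfn_2 \bfn_2^\top}}$. Since $\exp$ is strictly convex, equality holds for all $\bt$ if and only if the quadratic form $\bt^\top \bK_{\bfn_1}(\bX,\bX) \bt$ is a.s. constant in the randomness of $\bfn_1(\cdot)$ for each $\bt$; ranging over $\bt$, this is equivalent to $\bK_{\bfn_1}(\bX,\bX)$ being a.s. equal to its mean, i.e. $\Evover{\bfn_2 \mid \bX, \bfn_1(\cdot)}{\bfn_2 \bfn_2^\top} = \Evover{\bfn_2 \mid \bX}{\bfn_2 \bfn_2^\top}$ a.s.

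The main obstacle is the conditional-Gaussianity step, which is where the proof genuinely departs from \cref{lemma:nn_cf}. In the parametric case Gaussianity of $\bfn_2 \mid \bX, \bW_1, \bb_1$ followed from linearity in $\bw_2$; here it must instead be read off the GP assumption on $f_2$, and one must be slightly careful that this conditioning is well posed---namely that there is a joint law of $(\bfn_1(\cdot), f_2)$ under which $f_2$ is independent of $\bfn_1(\cdot)$ and Gaussian conditional on the (random) input locations, so that the regular conditional distribution of $\bfn_2$ given $\bfn_1(\cdot)$ is a.s. $\normaldist{\bzero}{\bK_{\bfn_1}(\bX,\bX)}$ with a measurable covariance map. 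Once this measurability and conditional Gaussianity are secured, no further structure of $\bfn_1(\cdot)$ is needed, so the statement covers an arbitrary random vector-valued $\bfn_1(\cdot)$, including a multilayer hierarchy.
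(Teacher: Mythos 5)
Your proposal is correct and follows essentially the same route as the paper: the paper itself proves \cref{lemma:dgp_cf} by observing that the proof of \cref{lemma:nn_cf} only used conditional Gaussianity of $\bfn_2$ given the first layer, and you reproduce exactly that law-of-total-expectation/Jensen/strict-convexity argument with $(\bW_1,\bb_1)$ replaced by $\bfn_1(\cdot)$. Your added remark about securing a well-posed regular conditional distribution with a measurable covariance map is a reasonable technical caveat that the paper leaves implicit.
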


The covariance perspective from \cref{lemma:nn_cf,lemma:dgp_cf} is revealing about GP limits.
As $\Evover{\bfn_2 \mid \bX, \bfn_1(\cdot)}{ \bfn_2 \bfn_2^\top }$ converges to $\Evover{\bfn_2 \mid \bX}{ \bfn_2 \bfn_2^\top }$
the model output becomes less and less dependent on $\bfn_1(\cdot)$.
In other words, $f_2(\bfn_1(\cdot))$ \emph{loses its hierarchical nature}.
We reiterate that \cref{lemma:dgp_cf} has no requirements about $f_2(\bfn_1(\cdot))$ transitioning from a finite to infinite basis,
nor %
does it require $f_2(\bfn_1(\cdot))$ to have additive structure.
We demonstrate its generality in the next section with surprising---and troubling---implications.

\section{Deep Gaussian Processes Collapse to Shallow Gaussian Processes}
\label{sec:dgp}
Deep GP \citep{damianou2013deep,damianou2015deep,bui2016deep,salimbeni2017doubly} are hierarchical models where layers $\bfn_1(\cdot) \ldots f_L(\cdot)$ are (vector-valued) GP:
\begin{equation}
  \text{DGP}(\bx) = f_{L} \circ \ldots \circ \bfn_{1} \left( \bx \right),
  \quad
  \bfn_{i}(\cdot) = [f_{i}^{(1)}( \cdot ), \ldots, f_{i}^{(H_i)}( \cdot ) ],
  \quad
  f_{i}^{(j)}( \cdot ) \stackrel{\text{i.i.d}}{\sim} \GP{ 0 }{ k_i(\cdot, \cdot) }.
  \label{eqn:dgp_def}
\end{equation}
$H_i$ is the width of the $i^\text{th}$ GP layer, and the output dimensions of each $\bfn_i(\cdot)$ are independent.
By using GP as the primary hierarchical building blocks, Deep GP are generally nonparametric and, assuming the GP layers use universal kernels \citep{micchelli2006universal}, have infinite representational capacity (see \cref{sec:dgp_capacity}).

\paragraph{Deep GP versus GP.}
Deep GP seek to offer more expressivity: conventional single-layer GP---though also nonparametric---are inherently limited by the choice of the prior covariance function \cite{bui2016deep,salimbeni2017doubly}.
For example, a GP with a RBF covariance is not suitable for data with discontinuities or sharp changes.
However, stacking two RBF GP together---$f_2(\bfn_1(\cdot))$---can overcome this limitation, since $\bfn_1(\bx)$ can encode a warping of $\bx$ that ``smoothes'' the input data for $f_2(\cdot)$ (as we will show in \cref{fig:adaptive_vs_gaussian_rbf}).
Empirically, Deep GP have been shown to offer much more accurate predictive posteriors than standard GP \citep[e.g.][]{damianou2013deep,damianou2015deep,cutajar2017random,salimbeni2017doubly,havasi2018inference,blomqvist2019deep,dutordoir2020bayesian}.

\paragraph{Deep GP versus neural networks.}
(Bayesian) feed-forward neural networks are a strict subclass of Deep GP, albeit a degenerate one \cite{aitchison2020bigger,louizos2016structured,ober2020global}.
The first neural network layer is a GP with a linear kernel, while subsequent layers are GP with the kernel
$k(\bz, \bz') = \beta^2 + \frac{1}{H_{i-1}} \sum_{i=1}^{H_{i-1}} \sigma( z_i ) \: \sigma( z_i )$.
A neural network, unlike other Deep GP, does not have infinite capacity.
Put loosely, a single neural network hidden unit corresponds to a single basis,
while in general a single Deep GP unit corresponds to a potentially-infinite basis.
See \citep{agrawal2020wide,aitchison2020bigger,cutajar2017random,duvenaud2014avoiding,dutordoir2021deep,ober2020global} and \cref{sec:dgp_degenerate} for more discussion on this connection.
The critical takeaway is that all of our Deep GP results apply to neural networks as well.

\subsection{Wide Deep GP are Gaussian Processes}
Having established a model where width does not effect capacity,
we now establish what remaining effects width has.
Empirical evidence suggests that the choice of width impacts Deep GP predictions \citep{bui2016deep,havasi2018inference}.
In practice it is common to make Deep GP as wide as comparably-sized neural networks;
\citet{salimbeni2017doubly} for example train Deep GP with $\geq30$ units per layer.

\emph{Surprisingly, here we prove that---in the limit of infinite width---Deep GP collapse to single-layer Gaussian processes.}
Our proof relies on the conditional covariance analysis of the previous section. 
If the GP layers have non-pathological covariance functions\footnote{
  Any textbook kernel (isotropic kernels, dot product kernels, etc.)
  or any covariance function with a Fourier-Steiljes representation is ``non-pathological;''
  see \cref{sec:assumptions} for formal assumptions.
}---the Deep GP conditional covariance becomes almost surely constant with width (see \cref{lemma:dgp_cond_covar}, \cref{sec:proof_dgp_gp}).
Combining this with \cref{lemma:dgp_cf}:
\begin{theorem}
  \vspace{0.5em}
  Let $f_{L} \circ \ldots \circ \bfn_{1} \left( \bx \right)$ be a zero-mean Deep GP (Eq.~\ref{eqn:dgp_def}),
  where each layer satisfies \cref{ass:kernels_harmonizable,ass:kernels_dimensionality} (non-pathological prior covariances that scale with dimensionality---see \cref{sec:assumptions}).
  Then $\lim_{H_{L-1} \to \infty} \cdots \lim_{H_1 \to \infty} f_{L} \circ \ldots \circ \bfn_{1} \left( \bx \right)$
  converges in distribution to a (single-layer) GP.
  \label{thm:dgp_gp}
\end{theorem}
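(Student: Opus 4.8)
The plan is to prove the theorem by induction on the $L-1$ compositions, peeling off one GP layer at a time starting from the innermost, and invoking \cref{lemma:dgp_cf} at each stage; the engine of the argument is that each layer's \emph{conditional} covariance concentrates to a deterministic quantity as its input width grows, which is precisely the condition \cref{lemma:dgp_cf} demands for Gaussianity. For the base case I would treat the two innermost layers $\bfn_2(\bfn_1(\cdot))$. Conditioned on the warping $\bfn_1(\cdot)$, each coordinate $f_2^{(j)}$ is a GP, so the conditional covariance $\Evover{\bfn_2 \mid \bX, \bfn_1(\cdot)}{\bfn_2\bfn_2^\top}$ is block-diagonal with identical blocks $[\,\cdot\,]_{ij}=k_2(\bfn_1(\bx_i),\bfn_1(\bx_j))$. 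By \cref{ass:kernels_harmonizable,ass:kernels_dimensionality}, $k_2$ depends on its high-dimensional arguments only through the normalized statistics $\tfrac{1}{H_1}\bfn_1(\bx_i)^\top\bfn_1(\bx_j)$ (and the analogous norms), and continuously so via its Fourier--Stieltjes representation. Since the coordinates $f_1^{(m)}$ are i.i.d.\ with finite second moments, the strong law of large numbers gives $\tfrac{1}{H_1}\bfn_1(\bx_i)^\top\bfn_1(\bx_j)\to k_1(\bx_i,\bx_j)$ almost surely as $H_1\to\infty$, and continuity promotes this to almost-sure convergence of the full conditional covariance to a deterministic matrix $\overline{\bK}_2$ (the content of \cref{lemma:dgp_cond_covar}). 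Feeding an a.s.-constant conditional covariance into the characteristic-function identity behind \cref{lemma:nn_cf,lemma:dgp_cf}---via dominated convergence, since $\lvert\exp(-\tfrac12\bt^\top\bK\bt)\rvert\le 1$---shows the characteristic function of $\bfn_2\mid\bX$ converges to $\exp(-\tfrac12\bt^\top\overline{\bK}_2\bt)$. Hence $\bfn_2(\bfn_1(\cdot))$ converges in distribution to a vector-valued GP whose coordinates are i.i.d.\ with an effective kernel $\widetilde k_2$.

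For the inductive step I would observe that the limiting object from the previous stage is again a width-$H_i$ vector-valued GP with i.i.d.\ zero-mean Gaussian coordinates---exactly the structure just exploited for $\bfn_1$. Finite second moments again guarantee the SLLN, while the \emph{next} kernel $k_{i+1}$, assumed non-pathological by hypothesis, supplies the required continuity. Thus the same collapse applies verbatim to $\bfn_{i+1}(\widetilde{\bfn}_i(\cdot))$, replacing layers $1,\ldots,i$ by a single effective GP layer. Iterating $L-1$ times---and applying the final scalar GP $f_L$ on the last collapse---yields a single scalar GP, the claimed limit.

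The main obstacle I anticipate is the limit-interchange and continuity bookkeeping rather than any single hard estimate. Two points need care. First, almost-sure convergence of the \emph{finitely many} input statistics must transfer to almost-sure convergence of $k_{i+1}$ evaluated at the warped inputs; this is exactly where the precise harmonizability hypothesis of \cref{ass:kernels_harmonizable} does the work, and is the reason the result is stated for non-pathological covariances rather than arbitrary ones. Second, because the widths are sent to infinity \emph{sequentially}, I must verify that the distributional limit of one stage is a legitimate, assumption-satisfying input distribution for the next, so that the nested limits genuinely compose: concretely, that each effective kernel $\widetilde k_i$ still produces i.i.d.\ coordinates with finite second moments (immediate, since the limit is Gaussian) and that the continuity requirement only ever falls on the \emph{outer} kernel $k_{i+1}$, which is assumed non-pathological. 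Once these are in place, the characteristic-function convergence at each stage composes to give convergence in distribution of the full $L$-layer model to a single-layer GP.
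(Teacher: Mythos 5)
Your proposal is correct and follows essentially the same route as the paper: concentration of the layer-wise conditional covariance via the strong law of large numbers under the harmonizability/dimensional-scaling assumptions (the paper's \cref{lemma:dgp_cond_covar}), fed into the characteristic-function criterion of \cref{lemma:dgp_cf} with dominated convergence and L\'evy continuity, then induction over layers for the sequential limits. Your only simplification is describing the kernel's dependence as being through normalized inner products, whereas \cref{ass:kernels_harmonizable} requires the SLLN to be applied pointwise inside the Lebesgue--Stieltjes spectral integral before exchanging limit and integral---but you correctly defer this bookkeeping to the harmonizability hypothesis, which is exactly where the paper handles it.
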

(See \cref{sec:proof_dgp_gp} for proof.)
The implications of \cref{thm:dgp_gp} are paradoxical and unsettling.
Deep GP are motivated as a more powerful model than standard GP.
However, as we make the model wider, we arrive back where we started---a Gaussian process (although one with a different prior covariance).

A neural network gains representational power in its GP limit, transitioning from a finite-basis model to a nonparametric model.
The Deep GP limit on the other hand has no additional representational power, since Deep GP are already universal approximators at any width.
(Indeed this fact motivates their use as a control.)
The only difference between finite and infinite width Deep GP is the prior distribution itself:
transitioning from non-Gaussian to Gaussian with increasing width.
In the next section, we investigate how this transition affects model performance.

\section{Large Width Limits the Adaptability of Hierarchical Posteriors}
\label{sec:posterior}
Even with \cref{thm:dgp_gp} and its troubling suggestions, it is not immediately clear exactly what is lost in the infinite-width limit.
Here, we quantify specific differences in the predictive capabilities of narrow versus wide models.
In particular, we analyze Deep GP/neural network posterior distributions, rather than focusing on a single model trained through optimization.
We show that these posteriors correspond to a mixture of \emph{data-dependent adaptable bases};
however, as width increases this mixture collapses to the (data-independent) basis of the limiting GP.
This result formalizes the often vague notion of \emph{feature learning}, and demonstrates that it is indeed lost in kernel limits.

\begin{figure}
  \centering
  \includegraphics[width=0.95\linewidth]{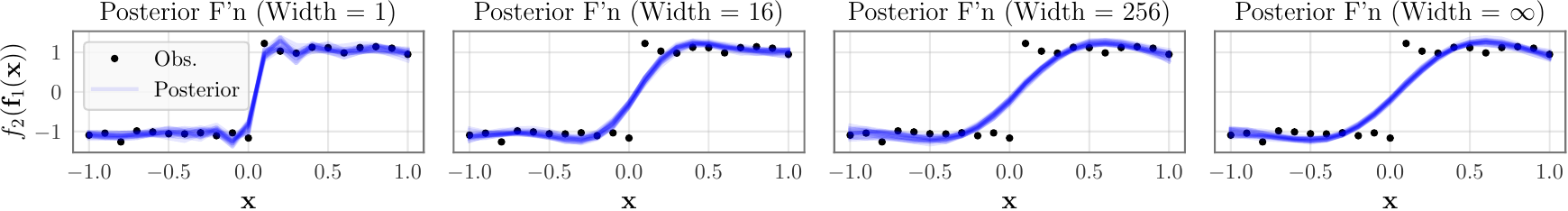}\\
  \vspace{0.2em}
  \includegraphics[width=0.95\linewidth]{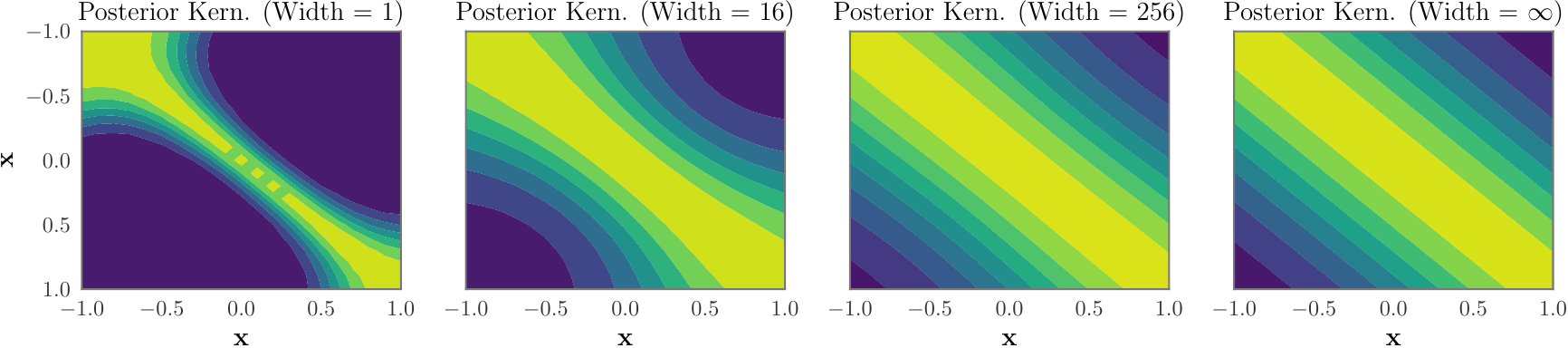}
  \caption{
    {\bf Top:} Posterior of 2-layer RBF Deep GP fit to a noisy step function.
    A width-1 Deep GP fits the discontinuity at $\bx = 0$.
    As width increases, the Deep GP converges to a GP with a stationary covariance unable to fit the step.
    {\bf Bottom:} Average posterior covariance $\Evover{\bfn_1(\bx), \bfn_1(\bx') \mid \by}{ k_2(\bfn_1(\bx), \bfn_1(\bx')) }$.
    The $\text{width} = 1$ posterior covariance is non-stationary, with little covariance around $\bx = 0$.
    As width increases, the posterior covariance becomes stationary (as seen by the kernel's constant diagonals).
  }
  \label{fig:adaptive_vs_gaussian_rbf}
\end{figure}

\paragraph{Hierarchical posteriors correspond to a data-adaptable bases.}
Consider the (finite-width) 2-layer Deep GP $f_2(\bfn_1(\cdot))$, where $k_1(\cdot, \cdot)$ and $k_2(\cdot, \cdot)$ are the covariance functions of $\bfn_1(\cdot)$ and $f_2(\cdot)$.
Given training data $\bX, \by$, define $\bF_1 \triangleq[\bfn_1(\bx_1), \ldots, \bfn_1(\bx_N)]$
and $\bfn_2 \triangleq [f_2(\bfn_1(\bx_1)), \ldots, f_2(\bfn_1(\bx_N))]$.
Let $\bx^*$ be a test input, and let $\bfn_1^*$ and $f_2^*$ equal $\bfn_1(\bx^*)$ and $f_2(\bfn_1(\bx^*))$
(see \cref{fig:2layer_model} in \cref{sec:dgp_posterior_factorization} for a graphical model).
Crucially, $\bfn_2$ and $f^*_2$ only depend on $\bF_1$ and $\bfn^*_1$ through the covariances $\bK_2(\bF_1, \bF_1)$, $\bk_2(\bF_1, \bfn^*_1)$, and $k_2(\bfn^*_1, \bfn^*_1)$
(which we abbreviate as $\bK_2$, $\bk_2^*$, and $k_2^{**}$):
\begin{align}
  p(\bfn_2 \mid \bK_2)
  \sim \normaldist{\bzero}{\bK_2},
  \quad
  p(f^*_2 \mid k^{**}_2, \bk^*_2, \bK_2, \bfn_2)
  \sim \normaldist{\bk_2^{*\top} \bK_2^{-1} \bfn_2}{\:\: k_2^{**} - \bk_2^{*\top} \bK_2^{-1} \bk_2^*},
  \nonumber
\end{align}
By D-separation \citep[e.g.][Ch. 8]{bishop2006pattern}, we can factorize the posterior distribution as:
\begin{align}
  p(f_2^*, \bfn_2, \bK_2, \bk_2^*, k_2^{**} \mid \by) \: = \:
  p(f_2^* \mid \bfn_2, \bK_2, \bk_2^*, k_2^{**}) \:
  p(\bfn_2 \mid \bK_2, \by) \:
  p(\bK_2, \bk_2^*, k_2^{**} \mid \by).
  \label{eqn:2layer_dgp_factorization}
\end{align}
See derivation in \cref{sec:dgp_posterior_factorization}.
Applying the factorization in \cref{eqn:2layer_dgp_factorization}, the posterior mean is:
\begin{align}
  \Evover{f_2^* \mid \by}{ f_2^* }
  &= \Evover{\bK_2, \bk_2^* \mid \by}{
    \Evover{ \bfn_2 \mid \bK_2, \by }{
      \bk_2^{*\top} \bK_2^{-1} \bfn_2
    }
  }
  = \expectedvalue_{ \bK_2, \bk_2^* \mid \by} \Bigl[ \:
    \bk_2^{*\top}
    \overbracket{
      \bK_2^{-1} \Evover{ \bfn_2 \mid \bK_2, \by }{ \bfn_2 }
    }^{ \balpha }
  \: \Bigr]
  \label{eqn:dgp_posterior_mean_kernel} \\
  &=
  \Evover{\bfn_1(\bx^*), \bfn_1(\bx_1), \ldots, \bfn_1(\bx_N) \mid \by}{ \textstyle \sum_{i=1}^N \: \alpha_i \: k_2(\bfn_1(\bx_i), \bfn_1(\bx^*))},
  \label{eqn:dgp_posterior_mean}
\end{align}
where the second line follows from $\bK_2$ and $\bk_2^*$ being deterministic given $\bfn_1(\bx^*)$, $\bfn_1(\bx_1)$, $\ldots$, $\bfn_1(\bx_N)$.
The term inside the \cref{eqn:dgp_posterior_mean} expectation is a function from the reproducing kernel Hilbert space (RKHS) defined by $k_2(\bfn_1(\cdot), \bfn_1(\cdot))$.
We can thus interpret this expectation as an infinite mixture of functions from different Hilbert spaces.
Because the mixture distribution $p(\bfn_1(\bx^*), \bfn_1(\bx_1), \ldots, \bfn_1(\bx_N) \mid \by)$ depends on $\by$, \cref{eqn:dgp_posterior_mean} is an \emph{adaptive data-dependent mixture of RKHS}.

\paragraph{Adaptability is lost in the Gaussian process limit.}
What happens to \cref{eqn:dgp_posterior_mean} as $f_2(\bfn_1(\cdot))$ becomes a Gaussian process in the limit of infinite-width?
Recall from \cref{lemma:dgp_cf} that
the conditional prior covariance becomes deterministic as $f_2(\bfn_1(\cdot))$ converges to a GP.
In other words, the prior and posterior distributions over $\bK_2$ and $\bk_2^*$ become atomic:
$ p(\bK_2, \bk_2^*) =  p(\bK_2, \bk_2^* \mid \by) = \delta \:[ \: \bK_{\text{lim}}, \bk_\text{lim}^* \: ], $
where $\bK_{\text{lim}}$ and $\bk_{\text{lim}}^*$ are shorthand for $\Ev{ \bfn_2 \bfn_2^\top }$ and $\Ev{ \bfn_2 f_2^*}$ respectively.
\cref{eqn:dgp_posterior_mean_kernel} thus collapses to:
\begin{align}
  \lim_{H_1 \to \infty} \: \Evover{f_2^* \mid \by}{ f_2^* }
  =
  \Evover{ \delta [ \bK_\text{lim}, \bk^*_\text{lim} ] }{ \:
    \bk_2^{*\top} \balpha
  \: }
  = {\textstyle \sum_{i=1}^N} \: \alpha_i \: k_{\text{lim}} ( \bx_i, \bx^* ),
  \label{eqn:limiting_dgp_posterior_mean}
\end{align}
which is no longer a mixture of functions from different RKHS.
It is instead a function from a single RKHS (that of the limiting GP prior).\footnote{
  To rigorously argue that the infinite-width posterior collapses in this way,
  we can invoke Proposition 1 from \citet{hron2020exact}.
  See \cref{sec:dgp_posterior_collapse} for details.
}
In other words, while Deep GP (and neural networks) perform \emph{kernel learning (or feature learning)} to adapt to training data,
this ability is lost with large width.

\paragraph{Example.}
Consider a Deep GP with RBF covariances
$
  k_1( \bx, \bx' ) = \exp \left( {- \Vert \bx - \bx' \Vert^2/(2 D) } \right)
$
and
$
  k_2( \bfn_1(\bx), \bfn_1(\bx') ) = \exp \left( - \Vert \bfn_1(\bx) - \bfn_1(\bx') \Vert^2 / (2 H_1) \right).
$
As we show in \cref{sec:limiting_kernels}, this Deep GP converges to a GP with 
$
  k_{\text{lim}}(\bx, \bx') = \exp ( \exp( -\Vert \bx - \bx' \Vert^2 / (2 D) ) - 1 ).
$
Note that this limiting covariance is \emph{stationary}
and is ill-equipped to model the data step in \cref{fig:adaptive_vs_gaussian_rbf}.
However, because $\bfn_1(\cdot)$ is nonlinear, $k_2(\bfn_1(\bx), \bfn_1(\bx'))$
is \emph{nonstationary}.
\cref{fig:adaptive_vs_gaussian_rbf} (top left) shows that the width-1 Deep GP posterior accurately models this data.
The posterior covariance $\Evover{\bfn_1(\bx), \bfn_1(\bx') \mid \by}{ k_2(\bfn_1(\bx), \bfn_1(\bx')) }$ (bottom left) features long-range correlations near $\bx = \pm1$ and short-range correlations near $\bx = 0$.
As width increases, we lose this nonstationarity and the posterior becomes a worse fit.

\section{The Difference Between Width and Depth: A Tail Analysis}
\label{sec:tails}
Our work so far has troubling implications for large width.
On the other hand, empirical evidence has shown that depth improves Deep GP performance---as it does for neural nets \citep[e.g.][]{salimbeni2017doubly,havasi2018inference,ober2020global} (though pathologies can emerge \cite{dunlop2018deep,duvenaud2014avoiding}).
Through a novel tail analysis, we show that width makes Deep GP priors more Gaussian, while depth makes them less Gaussian.
In other words, \emph{width and depth have opposite effects on Deep GP tails}, results that again also apply to typical neural networks.

\paragraph{Deep GP/neural networks are sharply peaked and heavy tailed.}
The proof technique used in \cref{lemma:nn_cf} 
can be used to similarly bound the moment generating function of Deep GP marginals:
\begin{equation}
  \Evover{\bfn_2}{ e^{\bt^\top \bfn_2} } =
  \Evover{\bF_1}{ \Evover{\bfn_2 \mid \bF_1}{ e^{\bt^\top \bfn_2} }}
  \geq
  \exp \left( \frac 1 2 \bt^\top \Evover{\bF_1}{ \bK_2(\bF_1, \bF_1) } \bt \right)
  =
  \Evover{ \bg \sim \normaldist{\bzero}{\bK_\text{lim}}}{ e^{\bt^\top \bg} },
  \label{eqn:mgf_jensen}
\end{equation}
where $\bK_\text{lim} \! = \! \Evover{\bfn_2}{ \bfn_2 \bfn_2^\top } \! = \! \Evover{\bF_1}{\bK_2(\bF_1, \! \bF_1)}$.
Generalizing these bounds to deeper models, we have:
\begin{theorem}
  Let $f_L \circ \ldots \circ \bfn_1(\cdot)$ be a zero-mean Deep GP.
  Given a finite set of inputs $\bX = [\bx_1, \ldots, \bx_N]$, define
  $\bfn_\ell = [(f_\ell \circ \ldots \circ \bfn_1(\bx_1)), \ldots, (f_\ell \circ \ldots \circ \bfn_1(\bx_N))]$ for $\ell \in [1, L]$,
  and define $\bK_\text{lim} = \Evover{\bfn_L}{ \bfn_L \bfn_L^\top }$.
  Then,
  $
  p( \bfn_L = \bzero) \geq \normaldist{\bg = \bzero; \bzero}{ \bK_\text{lim} }.
  $
  \label{thm:mean_concentration}
\end{theorem}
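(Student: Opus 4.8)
The plan is to use the single structural fact that drives the whole argument: conditioned on the penultimate layer $\bfn_{L-1}$, the output vector $\bfn_L$ is \emph{exactly} Gaussian, $\bfn_L \mid \bfn_{L-1} \sim \normaldist{\bzero}{\bK_L}$, with random Gram matrix $\bK_L \triangleq \Evover{\bfn_L \mid \bfn_{L-1}}{\bfn_L \bfn_L^\top}$. Thus the law of $\bfn_L$ is a mixture of zero-mean Gaussians, and since the density of a mixture at a point is the mixture of the component densities, the density at the origin is
\[
  p(\bfn_L = \bzero) = \Evover{\bfn_{L-1}}{ (2\pi)^{-N/2} \left( \det \bK_L \right)^{-1/2} }.
\]
The comparison object has the closed form $\normaldist{\bzero;\bzero}{\bK_\text{lim}} = (2\pi)^{-N/2}(\det \bK_\text{lim})^{-1/2}$, and the tower property identifies its covariance with the mixture's mean matrix, $\bK_\text{lim} = \Evover{\bfn_L}{\bfn_L \bfn_L^\top} = \Evover{\bfn_{L-1}}{\bK_L}$. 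The theorem therefore reduces to the single scalar (strict) Jensen inequality
\[
  \Evover{\bfn_{L-1}}{ \left( \det \bK_L \right)^{-1/2} } > \left( \det \Evover{\bfn_{L-1}}{ \bK_L } \right)^{-1/2}
\]
for the map $g(\bK) \triangleq (\det \bK)^{-1/2}$ evaluated at the random positive-definite matrix $\bK_L$. (Equivalently, one can integrate the characteristic-function bound $\Evover{\bfn_{L-1}}{e^{-\frac12 \bt^\top \bK_L \bt}} \geq e^{-\frac12 \bt^\top \bK_\text{lim} \bt}$ of \cref{eqn:mgf_jensen} over $\bt$; by Fubini this yields exactly the same inequality.)

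The second step is to show $g$ is convex on the positive-definite cone. I would write $g(\bK) = \exp\!\left( -\tfrac12 \log\det\bK \right)$: since $\bK \mapsto \log\det\bK$ is concave, $-\tfrac12\log\det\bK$ is convex, and composing a convex function with the convex, nondecreasing map $\exp$ is again convex. Jensen then immediately gives the non-strict version $p(\bfn_L = \bzero) \geq \normaldist{\bzero;\bzero}{\bK_\text{lim}}$.

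The crux is upgrading this to a \emph{strict} inequality and tying it to the hypothesis. Here I would use that $\log\det$ is strictly concave along every nondegenerate segment---along $\bK + \theta \bV$ its second derivative is $-\trace\!\left( (\bK^{-1/2}\bV\bK^{-1/2})^2 \right)$, which vanishes only when $\bV = \bzero$---so that $g$, being $\exp$ (strictly convex, increasing) composed with a strictly convex function, is strictly convex along every segment. For a strictly convex function, equality in Jensen forces the argument to be almost surely constant; hence the inequality is strict exactly when $\bK_L = \Evover{\bfn_L \mid \bfn_{L-1}}{\bfn_L \bfn_L^\top}$ is not almost surely constant. This is precisely the stated hypothesis at $\ell = L$ (the only instance the one-step argument requires; the assumption ``not constant for any $\ell \in [2,L]$'' supplies it, and an inductive version conditioning layer-by-layer would show the peak sharpens with each nondegenerate layer, matching the depth narrative).

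I expect the main obstacle to be regularity bookkeeping rather than the convexity itself. The component-density expression requires $\bK_L$ to be almost surely positive definite, so that $\det \bK_L > 0$ and the mixture density is well defined; I would derive this from the non-pathological kernel assumptions together with distinctness of the inputs, noting that the degenerate event $\det \bK_L = 0$ only strengthens the claim (it sends the left-hand side to $+\infty$ against a finite right-hand side). The remaining care is to confirm that strict convexity along segments genuinely yields strict Jensen for a possibly continuously distributed, matrix-valued $\bK_L$; the clean route is the equality characterization of Jensen above, which needs only non-degeneracy of $\bK_L$ rather than any pointwise second-order estimate.
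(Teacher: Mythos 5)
Your proof is correct, and it rests on the same engine as the paper's: conditional Gaussianity of $\bfn_L$ given $\bfn_{L-1}$ followed by a Jensen gap. The execution differs in where Jensen is applied. The paper bounds the characteristic function pointwise, $\Evover{\bfn_{L-1}}{\exp(-\tfrac12 \bt^\top \bK_L \bt)} \geq \exp(-\tfrac12\bt^\top \bK_\text{lim}\bt)$ using only convexity of $\exp$, and then recovers the density at the origin by Fourier inversion; you instead write the density at the origin directly as the mixture $\Evover{\bfn_{L-1}}{(2\pi)^{-N/2}(\det\bK_L)^{-1/2}}$ and apply Jensen to $\bK \mapsto (\det\bK)^{-1/2}$ on the positive-definite cone. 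As you note, integrating the paper's bound over $\bt$ reproduces your inequality, so the two are Fubini-equivalent; what your route buys is a cleaner treatment of \emph{strictness}. The paper asserts the characteristic-function bound is strict ``for some $\bt$'' and concludes the integrals are strictly ordered, which implicitly needs strictness on a set of positive measure; your argument gets strictness in one step from strict convexity of $\exp(-\tfrac12\log\det(\cdot))$ along nondegenerate segments together with the equality characterization in Jensen's inequality, invoking the hypothesis only at $\ell = L$ (which the ``not constant for any $\ell\in[2,L]$'' assumption supplies). The cost is a slightly heavier key lemma (strict concavity of $\log\det$ rather than convexity of $\exp$) and the same regularity caveats the paper also glosses over ($\bK_L$ almost surely nonsingular so the mixture density is well defined, mirroring the integrability needed for the paper's Fourier inversion); you flag these appropriately.
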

\begin{theorem}
  \label{thm:tails}
  Let $\bt \in \reals^N$.
  Using the same setup, notation, and assumptions as \cref{thm:mean_concentration},
  the odd moments of $\bt^\top \bfn_L$ are zero and the even moments larger than $2$ are super-Gaussian,
  i.e. $\Evover{\bfn_L}{ (\bt^\top \bfn_L)^{r} } \geq \Evover{ \bg \sim \normaldist{\bzero}{\bK_\text{lim}} }{ (\bt^\top \bg)^r }$
  for all even $r \geq 4$.
  Moreover, if $k_L(\cdot, \cdot)$ is bounded almost everywhere, the moment generating function $\Evover{\bfn_L}{ \exp( \bt^\top \bfn_L) }$ exists and is similarly super-Gaussian.
\end{theorem}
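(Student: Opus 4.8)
The plan is to condition on everything below the top layer, exploiting that a single GP renders its output conditionally Gaussian. Writing $\bF_{L-1} = [\bfn_{L-1}(\bx_1), \ldots, \bfn_{L-1}(\bx_N)]$ for the penultimate representation fed into $f_L$, the fact that $f_L$ is a zero-mean GP gives $\bfn_L \mid \bF_{L-1} \sim \normaldist{\bzero}{\bK_L(\bF_{L-1}, \bF_{L-1})}$. Hence the scalar $\bt^\top \bfn_L \mid \bF_{L-1}$ is univariate zero-mean Gaussian with \emph{random} variance $s^2 \triangleq \bt^\top \bK_L(\bF_{L-1}, \bF_{L-1}) \bt \geq 0$, a nonnegative PSD quadratic form. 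The tower rule together with $\bK_\text{lim} = \Evover{\bF_{L-1}}{\bK_L(\bF_{L-1}, \bF_{L-1})}$ yields the identity $\bt^\top \bK_\text{lim} \bt = \Ev{s^2}$, against which every comparison is made. With this reduction, all three claims become statements about moments of the single nonnegative variable $s^2$.

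For the odd moments I would simply note that a zero-mean Gaussian is symmetric about the origin, so $\Evover{\bfn_L \mid \bF_{L-1}}{(\bt^\top \bfn_L)^r} = 0$ for every odd $r$; marginalizing over $\bF_{L-1}$ by the tower rule preserves the zero. For the even moments I would plug in the univariate Gaussian moment formula $\Evover{\bfn_L \mid \bF_{L-1}}{(\bt^\top \bfn_L)^r} = (r-1)!! \,(s^2)^{r/2}$ and marginalize to get $(r-1)!!\,\Ev{(s^2)^{r/2}}$, whereas the comparison Gaussian $\bg \sim \normaldist{\bzero}{\bK_\text{lim}}$ gives $(r-1)!!\,(\Ev{s^2})^{r/2}$. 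For even $r \geq 4$ the map $x \mapsto x^{r/2}$ is convex on $[0,\infty)$, so Jensen's inequality delivers $\Ev{(s^2)^{r/2}} \geq (\Ev{s^2})^{r/2}$, which is exactly the claimed super-Gaussianity; the inequality is strict whenever $s^2$ is non-degenerate, which is guaranteed by the non-constancy hypothesis inherited from \cref{thm:mean_concentration}.

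For the moment generating function, the conditional MGF of a Gaussian gives $\Evover{\bfn_L \mid \bF_{L-1}}{e^{\bt^\top \bfn_L}} = \exp(\tfrac12 s^2)$, which is precisely the structure already exhibited for the two-layer case in \cref{eqn:mgf_jensen}. Super-Gaussianity is then Jensen's inequality once more, now for the convex map $u \mapsto e^{u/2}$, yielding $\Ev{e^{s^2/2}} \geq e^{\Ev{s^2}/2} = \exp(\tfrac12 \bt^\top \bK_\text{lim} \bt)$, the Gaussian MGF. Existence is the one step requiring the boundedness hypothesis: if $|k_L| \leq C$ almost everywhere, then every entry of $\bK_L$ is bounded by $C$, so $s^2 \leq C \|\bt\|_1^2$ almost surely, and $\Ev{e^{s^2/2}}$ is finite.

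I expect the only genuine obstacle to be this last point—translating the almost-everywhere bound on the kernel into an almost-sure bound on the quadratic form $s^2$ and thereby confirming finiteness of the marginal MGF; everything else collapses to an application of Jensen's inequality to a nonnegative scalar and the standard Gaussian moment and MGF formulas. It is worth emphasizing in the write-up that \emph{without} the boundedness assumption the tails of $s^2$ can be heavy enough for the MGF to diverge, which is exactly consistent with the heavy-tailed picture the theorem is meant to capture.
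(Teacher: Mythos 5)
Your proposal is correct and follows essentially the same route as the paper's proof: condition on the penultimate layer so that $\bt^\top \bfn_L$ is conditionally Gaussian with random variance $s^2 = \bt^\top \bK_L(\bF_{L-1},\bF_{L-1})\bt$, invoke the standard Gaussian moment and MGF formulas, and apply Jensen's inequality to the convex maps $z \mapsto z^{r/2}$ (for even $r \geq 4$) and $\exp$, with the boundedness of $k_L$ giving an almost-sure bound on $s^2$ and hence existence of the MGF. The only cosmetic difference is that the paper writes the marginalization over $\bF_{L-1}$ as an explicit cascade of nested conditional expectations through all layers (which it then reuses to illustrate how the Jensen gap compounds with depth), whereas you collapse it into a single tower-rule step; the two are identical in content.
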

(See \cref{sec:proof_tails} for proofs.)
\cref{thm:mean_concentration} states that Deep GP marginals are more sharply peaked than a moment-matched Gaussian,
while \cref{thm:tails} states that they are also more heavy tailed.

\paragraph{Increasing depth leads to sharper peaks and heavier tails.}
To understand how depth affects this tail behavior, we examine the Jensen gap in \cref{eqn:mgf_jensen}.
Consider a 3-layer Deep GP $f_3(\bfn_2(\bfn_1(\cdot)))$.
If we extend \cref{eqn:mgf_jensen} to 3-layer models,
we see that the Jensen gap cascades:
\begin{equation}
  \underbracket{\Evover{\bF_1}{\Evover{\bF_2 \mid \bF_1}{ \exp \left( \frac 1 2 \bt^\top \bK_3 \bt \right) }}}_{\text{MGF of 3-layer Deep GP marginal}}
  \geq
  \underbracket{\Evover{\bF_1}{ \exp \left( \frac 1 2 \bt^\top \!\! \Evover{\bF_2 \mid \bF_1}{ \bK_3 } \bt \right)}}_{\text{MGF of 2-layer Deep GP marginal}}
  \geq
  \underbracket{\exp\left( \frac 1 2 \bt^\top \!\! \Evover{\bF_1}{\Evover{\bF_2 \mid \bF_1}{ \bK_3 }} \bt \right)}_{\text{MGF of $\normaldist{\bzero}{\Evover{\bF_1}{\Evover{\bF_2 \mid \bF_1}{ \bK_3 }}}$}},
  \nonumber
\end{equation}
where $\bK_3$ is short for $\bK_3( \bF_2(\bF_1(\bX)), \bF_2(\bF_1(\bX)) )$.
The middle term is the moment generating function of a 2-layer Deep GP marginal
(where the second layer has covariance $\Evover{\bfn_2(\cdot)}{k_3(\bfn_2(\cdot), \bfn_2(\cdot))}$).
The right-most term is the moment generating function of a (single-layer) Gaussian.
Generalizing this cascade, we see that deeper models are more heavy-tailed.
A similar analysis on the characteristic function shows that the peak at the prior mean also becomes sharper with depth (see \cref{sec:proof_tails}).

\begin{figure}[t!]
  \centering
  \includegraphics[width=\textwidth]{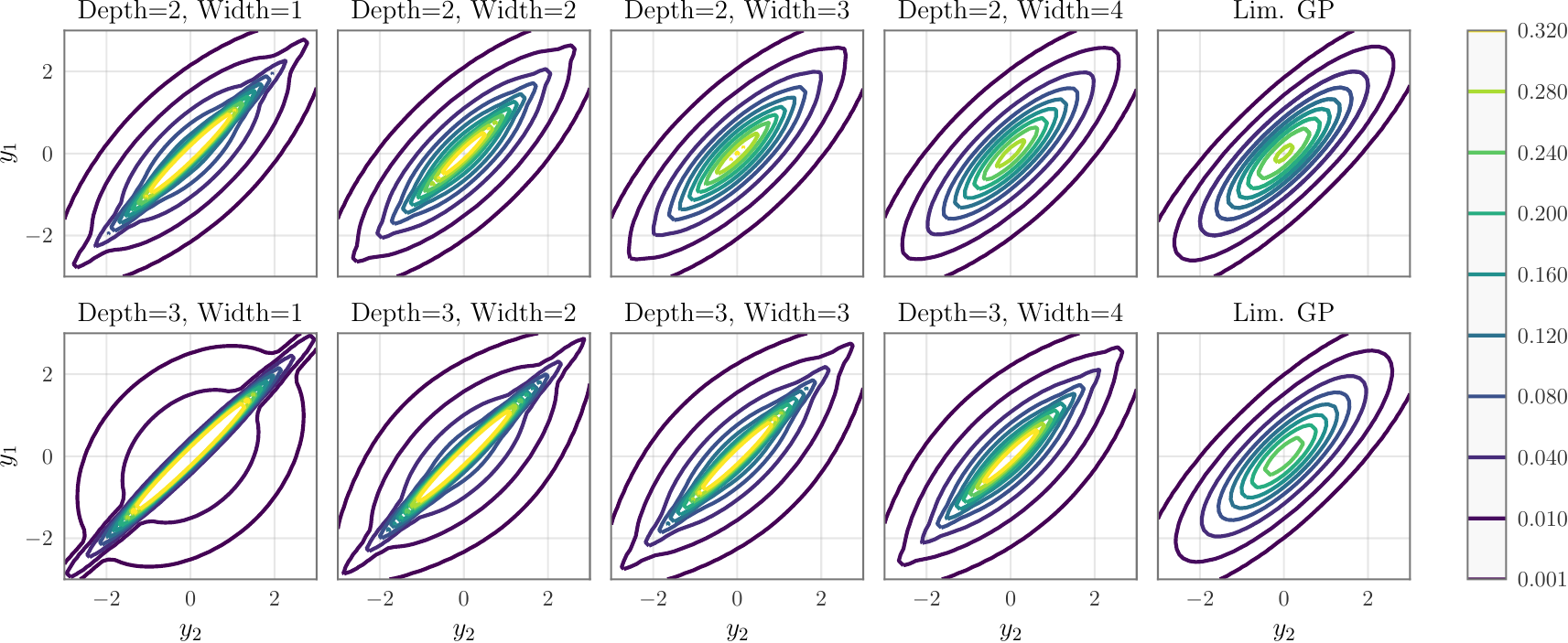}
  \caption{
    Marginal densities $p(y_1, y_2 \mid \bx_1, \bx_2)$
    for zero-mean Deep GP of various depths and widths on the $N=2$ dataset $\bx_1 = -0.5$, $\bx_2 = 0.5$.
    All 2-layer models have the same second moments (covariance is that of the 3-layer $\text{width} = 1$ RBF-RBF-RBF Deep GP).
    {\bf Left to right:} width increases, marginals become increasingly Gaussian, tails become thinner, and the peak at $[y_1, y_2] = \bzero$ loses density.
    {\bf Top to bottom:} depth increases, tails become fatter, and the peak becomes sharper.
  }
  \label{fig:density_depth_width}
\end{figure}

Adding additional layers to a Deep GP will change the model's prior covariance,
and thus the effects of depth cannot solely be explained by a tail analysis \citep{duvenaud2014avoiding,dunlop2018deep}.
Nevertheless, if we control for this change in covariance, we indeed see that depth leads to heavier tails.
In \cref{fig:density_depth_width} we compare
2-layer and 3-layer Deep GP.
The 3-layer models use GP layers with additively-decomposing RBF covariances,
while the 2-layer models use layers constructed to match the 3-layer models' prior covariance
(see \cref{sec:experimental_details} for construction details).
The $N=2$ marginal densities for the 3-layer models (bottom row) are more stretched than the 2-layer densities (top row).
We further confirm these effects in \cref{sec:additional_results}.

\paragraph{Increasing width leads to flatter peaks and Gaussian tails.}
Conversely, consider what happens when we make the model wider. 
We define the sequence of increasingly wide 2-layer Deep GP:
\begin{align}
  \left\{ \text{DGP}^{(m)}(\cdot) \triangleq \frac{1}{\sqrt{m}} \sum_{i=1}^m f^{(i)}_2(f^{(i)}_1(\cdot)) \right\},
  \qquad
  \begin{split}
    f_1^{(i)}(\cdot) \stackrel{\text{i.i.d}}{\sim} \GP{0}{k_1(\cdot, \cdot)},
    \\
    f_2^{(i)}(\cdot) \stackrel{\text{i.i.d}}{\sim} \GP{0}{k_2(\cdot, \cdot)}.
  \end{split}
  \label{eqn:additive_dgp}
\end{align}
$\text{DGP}^{(m)}(\cdot)$ is a width-$m$ Deep GP, where the second layer decomposes additively over the $m$ dimensions.
By linearity of expectation, each model in the sequence shares the same prior covariance:
$
  \mathbb{E}[\text{DGP}^{(1)}(\bx) \: \text{DGP}^{(1)}(\bx') ] =
  \mathbb{E}[\text{DGP}^{(2)}(\bx) \: \text{DGP}^{(2)}(\bx') ] =
  \ldots
  \triangleq k_\text{lim}(\bx, \bx').
$
Though each model has the same marginal covariance,
the \emph{conditional} covariance $\Evover{\bfn_2 \mid \bF_1}{ \bfn_2 \bfn_2^\top } = \frac 1 m \sum_{i=1}^m \bK_2 ( \bfn_1^{(i)}, \bfn_1^{(i)} )$
becomes increasingly concentrated around $\bK_\text{lim}(\bX, \bX)$ as $m$ increases.
This consequentially shrinks the Jensen gap in \cref{eqn:mgf_jensen},
and so the Deep GP marginals become increasingly Gaussian.
We again visualize this effect in \cref{fig:density_depth_width},
which depicts marginal densities from 2-layer and 3-layer Deep GP of various width
(see \cref{sec:experimental_details} for details).
Compared with the limiting GP (right), the width-1 densities (left) appear sharper near $[0, 0]$ and more stretched at the tails.
As width increases, the peaks and tails look increasingly Gaussian (see also \cref{fig:width_diff_depth_width} in \cref{sec:additional_results}).
In this sense, width has the opposite effect as depth---deeper marginals are less Gaussian, while wider marginals are more Gaussian.

\section{Experiments}
\subsection{Regression with Deep GP and Bayesian Neural Networks}
\label{sec:experiments_bayesian}

To isolate the effects of width and depth, each experiment compares Deep GP/Bayesian neural networks that share the same first and second prior moments,
and the Deep GP models use GP layers with universal kernels.
To remove any potential side effects from approximate inference methods,
we sample Deep GP/neural network posteriors using NUTS \citep{hoffman2014no} and do not use any stochastic inducing point \citep{salimbeni2017doubly,havasi2018inference} or finite basis \citep{cutajar2017random} approximations.
This inference is costly and scales cubically with $N$; therefore, we subsample all training datasets to $N \leq 1000$.
See \cref{sec:experimental_details} for experimental details.

\begin{figure}[t!]
  \centering
  \includegraphics[width=\linewidth]{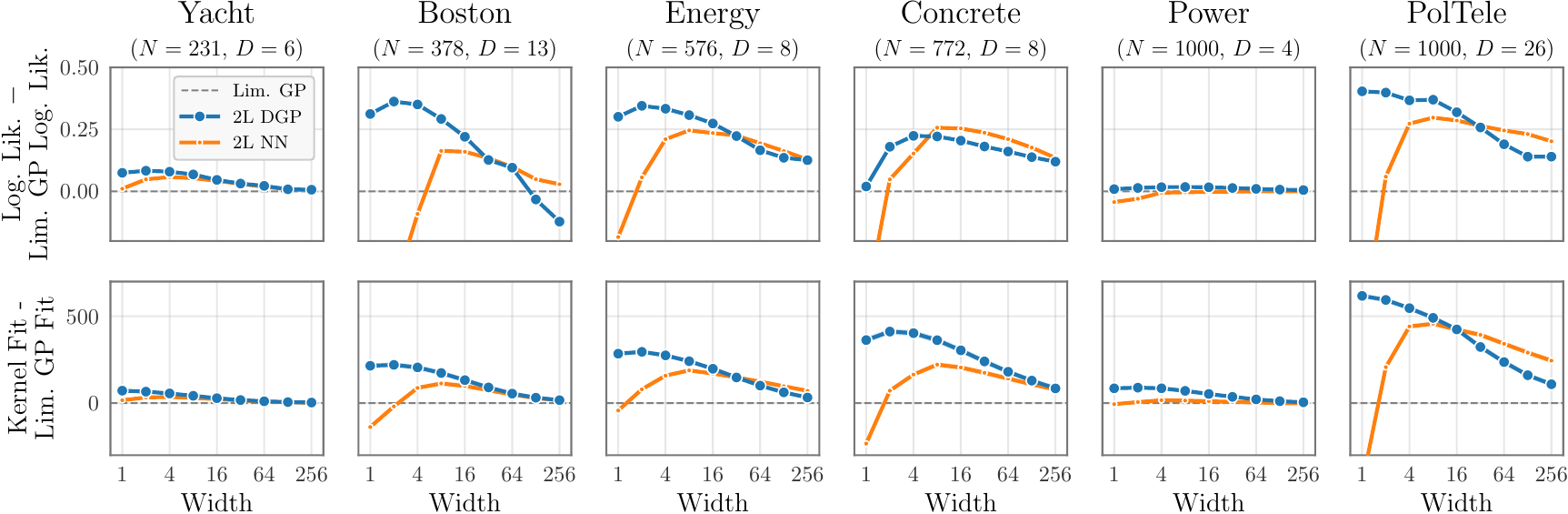}
  \caption{
    {\bf Top:} Test set log likelihood (LL) of 2-layer Deep GP (and neural networks) regression as a function of width (higher is better).
    Numbers are shifted so that $0$ corresponds to the limiting GP log likelihood.
    Narrow models achieve the best log likelihood, and performance degrades with width.
    {\bf Bottom:} Fit of the posterior kernel $k(\bfn_1(\cdot), \bfn_1(\cdot))$ on the training data, as measured by Gaussian log marginal likelihood (higher is better).
    $0$ corresponds to the limiting GP log marginal likelihood.
    Fit becomes increasingly worse with width.
  }
  \label{fig:dgp_width}
\end{figure}
\begin{table}[t!]
  \caption{%
    Test set log likelihood (LL) of Deep GP regression as a function of depth (higher is better).
    $\text{Depth}=1$ refers to the limiting GP.
    For each dataset, the models are constructed to have the same first and second moments.
    Unlike width, deeper models generally have better performance.
  }
  \label{tab:dgp_depth}
  \centering
  \resizebox{\linewidth}{!}{%
    \begin{tabular}{ccccccccc}
\toprule
\thead{Depth} & \thead{} & \thead{Yacht\\($N=231$, $D=6$)} & \thead{Boston\\($N=378$, $D=13$)} & \thead{Energy\\($N=576$, $D=8$)} & \thead{Concrete\\($N=772$, $D=8$)} & \thead{Power\\($N=1000$, $D=4$)} & \thead{PolTele\\($N=1000$, $D=26$)} \\
\midrule
1 &          &                        $-0.532$ &                          $-0.890$ &                         $-0.477$ &                           $-0.663$ &              $\mathbf{ -0.249 }$ &                            $-0.476$ \\
2 &          &                        $-0.520$ &                          $-0.684$ &                         $-0.434$ &                $\mathbf{ -0.573 }$ &                         $-0.260$ &                            $-0.381$ \\
3 &          &             $\mathbf{ -0.482 }$ &               $\mathbf{ -0.609 }$ &              $\mathbf{ -0.383 }$ &                           $-0.620$ &                         $-0.251$ &                 $\mathbf{ -0.318 }$ \\
\bottomrule
\end{tabular}

  }
\end{table}

\paragraph{Effect of width.}
We compare 2-layer Deep GP of various width on 6 regression datasets from the UCI dataset repository \citep{asuncion2007uci}
(see \cref{sec:additional_results} for 3-layer results).
The first GP layers use a RBF kernel for the prior covariance,
while the second layers use a sum of one-dimensional RBF covariance functions.
We additionally compare against the limiting (single-layer) GP with the same prior covariance ({\bf Lim. GP}).
For each dataset, we choose hyperparameters that maximize the Lim. GP log marginal likelihood.
In \cref{fig:dgp_width} (top row) we see a near-monotonic performance degradation as width increases.
The $\text{width} = 2$ optimum may represent the ``sweet spot'' for Deep GP width,
but it may instead be a side-effect of inference difficulties for $\text{width} = 1$ models
(see \cref{sec:additional_results} for a control experiment).
Regardless, as our theory predicts, \emph{width is detrimental to Deep GP predictive performance}.

We repeat the experiment for 2-layer neural networks
(and 3-layer models in \cref{sec:additional_results}),
where here the Lim. GP corresponds to the arc-cosine kernel \citep{cho2009kernel,lee2017deep}.
\cref{fig:dgp_width} indicates an optimal width with regards to test set log likelihood, usually between $8-16$ hidden units.
We expect this optimum exists (and differs from the Deep GP optimum) because narrow models have too few basis functions for these datasets.
Nevertheless, after sufficient capacity, \emph{width is harmful to Bayesian neural networks}.

\paragraph{Adaptable versus non-adaptable RKHS.}
One way to measure the ``fit'' of a kernel $k(\cdot,\cdot)$ on a regression training dataset $\bX, \by$ is the Gaussian log marginal likelihood
$\log \normaldist{ \by; \bzero }{\bK(\bX, \bX) + \sigma^2 \bI}$, where $\sigma^2$ is an observational noise parameter \citep[e.g.][]{rasmussen2006gaussian}.
To demonstrate how Deep GP/neural network posteriors correspond to adaptable RKHS mixtures,
the bottom row of \cref{fig:dgp_width} plots the ``kernel fit'' of $k_2(\bfn_1(\cdot), \bfn_1(\cdot))$ for posterior samples of $\bfn_1(\cdot)$ (see Eq.~\ref{eqn:dgp_posterior_mean}).
A higher fit corresponds to a model that is better adapted to the dataset $\bX, \by$.
We see that narrower Deep GP almost universally achieve better kernel fit than wider Deep GP,
which converge to the same fit as the limiting GP.
(Standard deviations, depicted by shaded regions, are generally imperceptible.)
Bayesian neural networks achieve best ``kernel fit'' at $8-16$ hidden units, and then converge to the limiting Deep GP with further width.

\paragraph{Effect of depth, controlling for covariance.}
\cref{tab:dgp_depth} displays Deep GP test set log likelihood as a function of depth.
Again, we isolate the tail effects of depth by ensuring that all models share the same first and second moments.
We construct a GP and a 2-layer Deep GP that match the moments of a 3-layer $\text{width} = 2$ Deep GP with RBF covariances,
and we use hyperparameters that maximize the limiting GP marginal likelihood for each dataset.
Note that computing the limiting covariance of $\geq 3$ layer models involves intractable integrals that we approximate with quadrature (see \cref{sec:limiting_kernels}).
Our findings confirm that---in this controlled setting---depth unlike width improves test set performance.

\begin{figure}[t!]
  \centering \makebox[\textwidth][c]{%
    \includegraphics[width=0.73\linewidth]{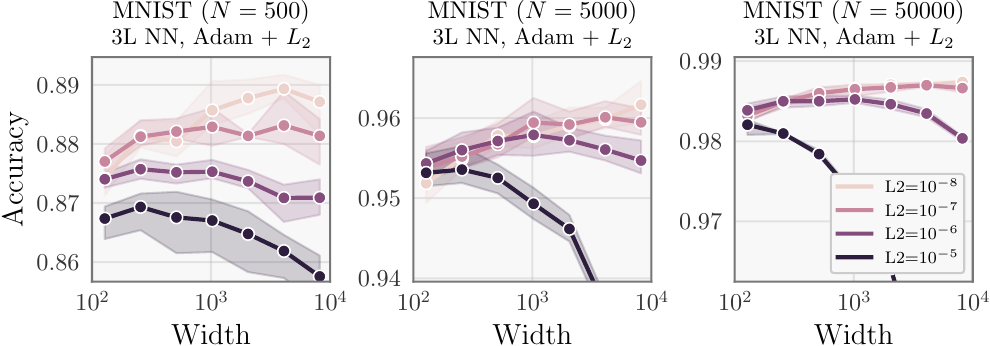}
    \hfil
    \includegraphics[width=0.2375\linewidth]{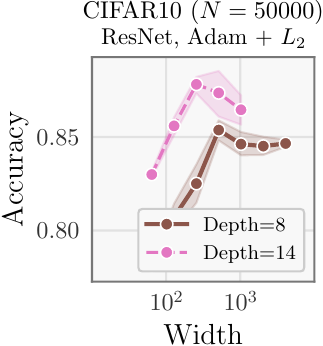}
  }
  \caption{%
    Effect of width on standard (non-Bayesian) neural networks.
    Shaded regions depict standard error.
    {\bf Left:} 3-layer MLP trained on subsets of MNIST.
    With large values of L2 regularization,
    model performance is maximized when width $\leq 1,\!000$.
    For small values of L2 regularization (e.g. $10^{-8}$, which corresponds to a prior of $\normaldist{0}{20,\!000}$ on the parameters),
    there is little accuracy loss with increasing width.
    It is possible that our theory does not apply to models with little L2 regularization which have little Bayesian interpretation.
    {\bf Right:} Wide ResNet models (8-layer and 14-layer variants) trained on CIFAR-10.
    For both depths, accuracy is optimal when width $\leq 500$.
  }
  \label{fig:mnist_cifar_acc}
\end{figure}

\subsection{Standard (Optimized, Non-Bayesian) Neural Networks}
\label{sec:experiments_optimized}
We now turn to standard (optimized, non-Bayesian) neural networks.
While our theoretical results primarily apply to full posteriors over models,
our goal is to see if our theory can also be predictive in ``real world'' neural networks without a Bayesian treatment.
There is reason to believe that our theory should be applicable in these settings, since standard neural network training with L2 regularization is equivalent to maximum a posteriori inference with Gaussian priors.
To that end, we ensure some correspondence between these experiments and our Bayesian experiments.
In particular, we measure the effects of width on networks with fixed values of L2 regularization,\footnote{
  In other words, we do not consider the regularization constant to be a hyperparameter that we optimize over
  for the purposes of these experiments.
}
which corresponds to a fixed prior on neural network parameters.
Additionally, models are trained without data augmentation, as data augmentation does not have a probabilistic interpretation \citep[]{izmailov2021bayesian}.

\cref{fig:mnist_cifar_acc} (left) depicts test set accuracy for increasingly wide models trained on MNIST \citep{lecun1998gradient}.
Each network is a MLP with 3 layers (i.e. 2 hidden layers).
Following the GP-limiting neural network construction in \cref{eqn:2layer_nn}, we scale the outputs of layer $\ell$ by $1/\sqrt{H_{\ell-1}}$.
We measure the effect of width over networks with various L2 regularization constants ($10^{-5}$, $10^{-6}$, $10^{-7}$, and $10^{-8}$)
which respectively correspond to priors of $\normaldist{0}{2}$, $\normaldist{0}{20}$, $\normaldist{0}{200}$, and $\normaldist{0}{2000}$ when $N=50,\!000$.
We train these sequences on various-sized subsets of the training data ($N=500$, $N=5,\!000$, and $N=50,\!000$).
From this figure we can observe several phenomena.
For larger values of L2 regularization, we see a distinct maximum in accuracy, typically around width $\approx 1,\!000$.
For smaller values of L2 regularization, wider models tend to perform better
(and indeed, for this dataset/model combination it appears that less regularization tends to be beneficial to overall performance).
We would note that these low regularization constants correspond to arguably unreasonable parametric priors
like $\normaldist{0}{2000}$, and so a Bayesian interpretation of these models may not be applicable.
In such settings, it is more likely that the interpolation analysis of \citet{belkin2019reconciling} is a better model of performance,
since this analysis explicitly focuses on the low-regularization setting.

\cref{fig:mnist_cifar_acc} (right)
depicts $8$- and $14$-layer ResNets \citep{he2016deep} trained on the CIFAR-10 dataset \citep{krizhevsky2009learning}.
We use the hyperparameters from the original ResNet paper,
which have been shown to be efficacious on both narrow and wide variants of ResNet models \citep{zagoruyko2016wide}.
(This training procedure uses a L2 coefficient of $10^{-4}$, which corresponds to a prior of $\normaldist{0}{0.2}$ for each parameter when $N=50,\!000$.)
For both depths, we observe that performance is optimal when width is between $500$ and $1,\!000$.
While it is possible that different hyperparameters may yield different outcomes,
these results indeed suggest that \emph{large width can adversely affect standard neural networks} once sufficient capacity is reached.

\section{Discussion}
\label{sec:discussion}

This paper shows that, across typical neural networks (with L2 regularization), Deep GP, and Bayesian neural networks,
\emph{large width can be detrimental to model performance}.

Even with these results, we can ask when width might be desirable?
First, we note that our results analyze exact posteriors or MAP solutions, and does not focus on practical considerations with regards to obtaining these solutions.
We do not consider the effect that width might have on approximate inference methods, which are commonly used with Bayesian neural networks and Deep GP in practice \citep[e.g.][]{salimbeni2017doubly,blundell2015weight,foong2020expressiveness}.
For conventional neural networks, poor conditioning and non-convexity make it challenging to obtain a MAP solution.
The optimization dynamics---which depend on numerous factors like learning rates, initializations, and choice of optimizer---%
may be improved by width, as wider models tend to have more favorable optimization landscapes \citep{nguyen2017loss,li2018visualizing,soltanolkotabi2018theoretical,arora2019fine,du2019gradient}.
Consequentially, wider models may obtain better performance due to these practical considerations.

Secondly---as noted in \cref{sec:experiments_optimized}---while we notice detrimental effects of width on neural networks with a Bayesian interpretation
(i.e. inferring a parameter posterior or optimizing parameters with L2 regularization),
we do not see these effects when such an interpretation does not exist (i.e. optimizing parameters with almost no L2 regularization).
Our theoretical findings assume that layers are conditionally Gaussian, and different priors may have different effects.
We note that much of the preliminary works on NTK assume no explicit regularization during training \citep{jacot2018neural,du2019gradient,lee2019wide}
(with the notable exception of \citet{wei2019regularization}),
and so our findings may be at odds with the empirical findings around these models \citep[e.g.][]{arora2020harnessing,garriga2019deep,geiger2020scaling}.
Moreover, recent work has proposed (non-Bayesian) infinite-width constructions
that avoid any limiting kernel behavior \citep[e.g.][]{chizat2018global,mei2018mean,golikov2020towards,yang2020feature},
and so our findings would not apply to these models.
We emphasize that our results do not conflict with these prior works, but rather reflect a different perspective.
The models we study correspond to a Gaussian prior on parameters, and so relaxing this correspondence may lessen the consequences of width that we observe.
Nevertheless, our results suggest that the inductive bias of width may be harmful, even if these undesirable effects can be avoided via careful construction.

Finally, it is worth considering when one might still choose a conventional shallow GP over a deep model.
An often-touted benefit of Gaussian processes is the ability to encode prior domain knowledge via the choice of covariance function.
In \cref{sec:dgp_second_moments}, we prove that certain prior covariances cannot be expressed by adaptable hierarchical models.
For example, a Deep GP that is composed of stationary GP layers cannot model anti-correlations a priori (\cref{thm:no_neg_covar}, \cref{sec:dgp_second_moments}),
whereas (single-layer) stationary GP can have positive and negative prior covariances.
Nevertheless, Deep GP are capable of modeling many common covariance functions, including the RBF, Mat\'ern, and rational quadratic kernels.
In \cref{sec:dgp_second_moments} we demonstrate a 2-layer Deep GP construction of any width that is capable of producing prior covariances that match  most isotropic kernels (\cref{thm:radial}, \cref{sec:dgp_second_moments}).
In other words, a Deep GP can match the first and second moments of most GP, while also offering an adaptable posterior.


\if@submission
\else
  \begin{ack}
		We would like to thank Elliott Gordon-Rodriguez for his help with the proofs.
    This work was supported by the Simons Foundation, McKnight Foundation, the Grossman Center, and the Gatsby Charitable Trust.
  \end{ack}
\fi

\addcontentsline{toc}{section}{References}
{
	\small
  \bibliographystyle{abbrvnat}
  \bibliography{citations}

\begin{thebibliography}{93}
\providecommand{\natexlab}[1]{#1}
\providecommand{\url}[1]{\texttt{#1}}
\expandafter\ifx\csname urlstyle\endcsname\relax
  \providecommand{\doi}[1]{doi: #1}\else
  \providecommand{\doi}{doi: \begingroup \urlstyle{rm}\Url}\fi

\bibitem[Agrawal et~al.(2020)Agrawal, Papamarkou, and Hinkle]{agrawal2020wide}
D.~Agrawal, T.~Papamarkou, and J.~Hinkle.
\newblock Wide neural networks with bottlenecks are deep {G}aussian processes.
\newblock \emph{Journal of Machine Learning Research}, 21\penalty0
  (175):\penalty0 1--66, 2020.

\bibitem[Aitchison(2020)]{aitchison2020bigger}
L.~Aitchison.
\newblock Why bigger is not always better: on finite and infinite neural
  networks.
\newblock In \emph{ICML}, 2020.

\bibitem[Aitchison et~al.(2021)Aitchison, Yang, and Ober]{aitchison2021deep}
L.~Aitchison, A.~Yang, and S.~W. Ober.
\newblock Deep kernel processes.
\newblock In \emph{ICML}, 2021.

\bibitem[Allen-Zhu and Li(2019)]{allen2019can}
Z.~Allen-Zhu and Y.~Li.
\newblock What can {R}es{N}et learn efficiently, going beyond kernels?
\newblock In \emph{NeurIPS}, 2019.

\bibitem[Allen-Zhu and Li(2020)]{allen2020backward}
Z.~Allen-Zhu and Y.~Li.
\newblock Backward feature correction: How deep learning performs deep
  learning.
\newblock \emph{arXiv preprint arXiv:2001.04413}, 2020.

\bibitem[Allen-Zhu et~al.(2019)Allen-Zhu, Li, and Liang]{allen2019learning}
Z.~Allen-Zhu, Y.~Li, and Y.~Liang.
\newblock Learning and generalization in overparameterized neural networks,
  going beyond two layers.
\newblock In \emph{NeurIPS}, 2019.

\bibitem[Arora et~al.(2019{\natexlab{a}})Arora, Du, Hu, Li, and
  Wang]{arora2019fine}
S.~Arora, S.~Du, W.~Hu, Z.~Li, and R.~Wang.
\newblock Fine-grained analysis of optimization and generalization for
  overparameterized two-layer neural networks.
\newblock In \emph{ICML}, 2019{\natexlab{a}}.

\bibitem[Arora et~al.(2019{\natexlab{b}})Arora, Du, Hu, Li, Salakhutdinov, and
  Wang]{arora2019exact}
S.~Arora, S.~S. Du, W.~Hu, Z.~Li, R.~Salakhutdinov, and R.~Wang.
\newblock On exact computation with an infinitely wide neural net.
\newblock In \emph{NeurIPS}, 2019{\natexlab{b}}.

\bibitem[Arora et~al.(2020)Arora, Du, Li, Salakhutdinov, Wang, and
  Yu]{arora2020harnessing}
S.~Arora, S.~S. Du, Z.~Li, R.~Salakhutdinov, R.~Wang, and D.~Yu.
\newblock Harnessing the power of infinitely wide deep nets on small-data
  tasks.
\newblock In \emph{ICLR}, 2020.

\bibitem[Asuncion and Newman(2007)]{asuncion2007uci}
A.~Asuncion and D.~Newman.
\newblock {UCI} machine learning repository, 2007.
\newblock URL \url{http://archive.ics.uci.edu/ml/index.php}.

\bibitem[Bai and Lee(2020)]{bai2020beyond}
Y.~Bai and J.~D. Lee.
\newblock Beyond linearization: On quadratic and higher-order approximation of
  wide neural networks.
\newblock In \emph{ICLR}, 2020.

\bibitem[Belkin et~al.(2019)Belkin, Hsu, Ma, and Mandal]{belkin2019reconciling}
M.~Belkin, D.~Hsu, S.~Ma, and S.~Mandal.
\newblock Reconciling modern machine-learning practice and the classical
  bias--variance trade-off.
\newblock \emph{Proceedings of the National Academy of Sciences}, 116\penalty0
  (32):\penalty0 15849--15854, 2019.

\bibitem[Bengio et~al.(2005)Bengio, Delalleau, and Le~Roux]{bengio2005curse}
Y.~Bengio, O.~Delalleau, and N.~Le~Roux.
\newblock The curse of dimensionality for local kernel machines.
\newblock 2005.

\bibitem[Billingsley(2013)]{billingsley2013convergence}
P.~Billingsley.
\newblock \emph{Convergence of probability measures}.
\newblock John Wiley \& Sons, 2013.

\bibitem[Bingham et~al.(2019)Bingham, Chen, Jankowiak, Obermeyer, Pradhan,
  Karaletsos, Singh, Szerlip, Horsfall, and Goodman]{bingham2019pyro}
E.~Bingham, J.~P. Chen, M.~Jankowiak, F.~Obermeyer, N.~Pradhan, T.~Karaletsos,
  R.~Singh, P.~Szerlip, P.~Horsfall, and N.~D. Goodman.
\newblock Pyro: Deep universal probabilistic programming.
\newblock \emph{The Journal of Machine Learning Research}, 20\penalty0
  (1):\penalty0 973--978, 2019.

\bibitem[Bishop(2006)]{bishop2006pattern}
C.~M. Bishop.
\newblock \emph{Pattern recognition and machine learning}.
\newblock Springer, 2006.

\bibitem[Blomqvist et~al.(2019)Blomqvist, Kaski, and
  Heinonen]{blomqvist2019deep}
K.~Blomqvist, S.~Kaski, and M.~Heinonen.
\newblock Deep convolutional {G}aussian processes.
\newblock In \emph{Joint European Conference on Machine Learning and Knowledge
  Discovery in Databases}, 2019.

\bibitem[Blundell et~al.(2015)Blundell, Cornebise, Kavukcuoglu, and
  Wierstra]{blundell2015weight}
C.~Blundell, J.~Cornebise, K.~Kavukcuoglu, and D.~Wierstra.
\newblock Weight uncertainty in neural network.
\newblock In \emph{ICML}, 2015.

\bibitem[Bui et~al.(2016)Bui, Hern{\'a}ndez-Lobato, Hernandez-Lobato, Li, and
  Turner]{bui2016deep}
T.~Bui, D.~Hern{\'a}ndez-Lobato, J.~Hernandez-Lobato, Y.~Li, and R.~Turner.
\newblock Deep {G}aussian processes for regression using approximate
  expectation propagation.
\newblock In \emph{ICML}, 2016.

\bibitem[Cao and Gu(2020)]{cao2020generalization}
Y.~Cao and Q.~Gu.
\newblock Generalization bounds of stochastic gradient descent for wide and
  deep neural networks.
\newblock In \emph{NeurIPS}, 2020.

\bibitem[Chen et~al.(2020)Chen, Bai, Lee, Zhao, Wang, Xiong, and
  Socher]{chen2020towards}
M.~Chen, Y.~Bai, J.~D. Lee, T.~Zhao, H.~Wang, C.~Xiong, and R.~Socher.
\newblock Towards understanding hierarchical learning: Benefits of neural
  representations.
\newblock In \emph{NeurIPS}, 2020.

\bibitem[Chizat and Bach(2018)]{chizat2018global}
L.~Chizat and F.~Bach.
\newblock On the global convergence of gradient descent for over-parameterized
  models using optimal transport.
\newblock In \emph{NeurIPS}, 2018.

\bibitem[Cho and Saul(2009)]{cho2009kernel}
Y.~Cho and L.~Saul.
\newblock Kernel methods for deep learning.
\newblock In \emph{NeurIPS}, 2009.

\bibitem[Cutajar et~al.(2017)Cutajar, Bonilla, Michiardi, and
  Filippone]{cutajar2017random}
K.~Cutajar, E.~V. Bonilla, P.~Michiardi, and M.~Filippone.
\newblock Random feature expansions for deep {G}aussian processes.
\newblock In \emph{ICML}, 2017.

\bibitem[Dai et~al.(2016)Dai, Damianou, Gonz{\'a}lez, and
  Lawrence]{dai2016variational}
Z.~Dai, A.~C. Damianou, J.~Gonz{\'a}lez, and N.~D. Lawrence.
\newblock Variational auto-encoded deep {G}aussian processes.
\newblock In \emph{ICLR}, 2016.

\bibitem[Damianou(2015)]{damianou2015deep}
A.~Damianou.
\newblock \emph{Deep {G}aussian processes and variational propagation of
  uncertainty}.
\newblock PhD thesis, University of Sheffield, 2015.

\bibitem[Damianou and Lawrence(2013)]{damianou2013deep}
A.~Damianou and N.~Lawrence.
\newblock Deep {G}aussian processes.
\newblock In \emph{AISTATS}, 2013.

\bibitem[Du et~al.(2019)Du, Lee, Li, Wang, and Zhai]{du2019gradient}
S.~Du, J.~Lee, H.~Li, L.~Wang, and X.~Zhai.
\newblock Gradient descent finds global minima of deep neural networks.
\newblock In \emph{ICML}, 2019.

\bibitem[Dunlop et~al.(2018)Dunlop, Girolami, Stuart, and
  Teckentrup]{dunlop2018deep}
M.~M. Dunlop, M.~A. Girolami, A.~M. Stuart, and A.~L. Teckentrup.
\newblock How deep are deep {G}aussian processes?
\newblock \emph{Journal of Machine Learning Research}, 19\penalty0
  (54):\penalty0 1--46, 2018.

\bibitem[Dutordoir et~al.(2020)Dutordoir, Wilk, Artemev, and
  Hensman]{dutordoir2020bayesian}
V.~Dutordoir, M.~Wilk, A.~Artemev, and J.~Hensman.
\newblock {B}ayesian image classification with deep convolutional {G}aussian
  processes.
\newblock In \emph{AISTATS}, 2020.

\bibitem[Dutordoir et~al.(2021{\natexlab{a}})Dutordoir, Hensman, van~der Wilk,
  Ek, Ghahramani, and Durrande]{dutordoir2021deep}
V.~Dutordoir, J.~Hensman, M.~van~der Wilk, C.~H. Ek, Z.~Ghahramani, and
  N.~Durrande.
\newblock Deep neural networks as point estimates for deep {G}aussian
  processes.
\newblock In \emph{NeurIPS}, 2021{\natexlab{a}}.

\bibitem[Dutordoir et~al.(2021{\natexlab{b}})Dutordoir, Salimbeni, Hambro,
  McLeod, Leibfried, Artemev, van~der Wilk, Hensman, Deisenroth, and
  John]{dutordoir2021gpflux}
V.~Dutordoir, H.~Salimbeni, E.~Hambro, J.~McLeod, F.~Leibfried, A.~Artemev,
  M.~van~der Wilk, J.~Hensman, M.~P. Deisenroth, and S.~John.
\newblock {GP}flux: A library for deep {G}aussian processes.
\newblock \emph{arXiv preprint arXiv:2104.05674}, 2021{\natexlab{b}}.

\bibitem[Duvenaud et~al.(2014)Duvenaud, Rippel, Adams, and
  Ghahramani]{duvenaud2014avoiding}
D.~Duvenaud, O.~Rippel, R.~Adams, and Z.~Ghahramani.
\newblock Avoiding pathologies in very deep networks.
\newblock In \emph{AISTATS}, 2014.

\bibitem[Foong et~al.(2020)Foong, Burt, Li, and
  Turner]{foong2020expressiveness}
A.~Y. Foong, D.~R. Burt, Y.~Li, and R.~E. Turner.
\newblock On the expressiveness of approximate inference in bayesian neural
  networks.
\newblock In \emph{NeurIPS}, 2020.

\bibitem[Fort et~al.(2020)Fort, Dziugaite, Paul, Kharaghani, Roy, and
  Ganguli]{fort2020deep}
S.~Fort, G.~K. Dziugaite, M.~Paul, S.~Kharaghani, D.~M. Roy, and S.~Ganguli.
\newblock Deep learning versus kernel learning: an empirical study of loss
  landscape geometry and the time evolution of the neural tangent kernel.
\newblock In \emph{NeurIPS}, 2020.

\bibitem[Gal and Ghahramani(2016)]{gal2016dropout}
Y.~Gal and Z.~Ghahramani.
\newblock Dropout as a {B}ayesian approximation: Representing model uncertainty
  in deep learning.
\newblock In \emph{ICML}, 2016.

\bibitem[Gardner et~al.(2018)Gardner, Pleiss, Weinberger, Bindel, and
  Wilson]{gardner2018gpytorch}
J.~R. Gardner, G.~Pleiss, K.~Q. Weinberger, D.~Bindel, and A.~G. Wilson.
\newblock {GP}y{T}orch: Blackbox matrix-matrix {G}aussian process inference
  with {GPU} acceleration.
\newblock In \emph{NeurIPS}, 2018.

\bibitem[Garriga-Alonso et~al.(2019)Garriga-Alonso, Rasmussen, and
  Aitchison]{garriga2019deep}
A.~Garriga-Alonso, C.~E. Rasmussen, and L.~Aitchison.
\newblock Deep convolutional networks as shallow {G}aussian processes.
\newblock In \emph{ICLR}, 2019.

\bibitem[Geiger et~al.(2020)Geiger, Jacot, Spigler, Gabriel, Sagun, d’Ascoli,
  Biroli, Hongler, and Wyart]{geiger2020scaling}
M.~Geiger, A.~Jacot, S.~Spigler, F.~Gabriel, L.~Sagun, S.~d’Ascoli,
  G.~Biroli, C.~Hongler, and M.~Wyart.
\newblock Scaling description of generalization with number of parameters in
  deep learning.
\newblock \emph{Journal of Statistical Mechanics: Theory and Experiment},
  2020\penalty0 (2), 2020.

\bibitem[Genton(2001)]{genton2001classes}
M.~G. Genton.
\newblock Classes of kernels for machine learning: a statistics perspective.
\newblock \emph{Journal of Machine Learning Research}, 2\penalty0
  (Dec):\penalty0 299--312, 2001.

\bibitem[Ghorbani et~al.(2019)Ghorbani, Mei, Misiakiewicz, and
  Montanari]{ghorbani2019limitations}
B.~Ghorbani, S.~Mei, T.~Misiakiewicz, and A.~Montanari.
\newblock Limitations of lazy training of two-layers neural networks.
\newblock In \emph{NeurIPS}, 2019.

\bibitem[Ghorbani et~al.(2020)Ghorbani, Mei, Misiakiewicz, and
  Montanari]{ghorbani2020neural}
B.~Ghorbani, S.~Mei, T.~Misiakiewicz, and A.~Montanari.
\newblock When do neural networks outperform kernel methods?
\newblock In \emph{NeurIPS}, 2020.

\bibitem[Golikov(2020)]{golikov2020towards}
E.~Golikov.
\newblock Towards a general theory of infinite-width limits of neural
  classifiers.
\newblock In \emph{ICML}, 2020.

\bibitem[Goodfellow et~al.(2016)Goodfellow, Bengio, Courville, and
  Bengio]{goodfellow2016deep}
I.~Goodfellow, Y.~Bengio, A.~Courville, and Y.~Bengio.
\newblock \emph{Deep learning}.
\newblock MIT press Cambridge, 2016.

\bibitem[Halverson et~al.(2021)Halverson, Maiti, and
  Stoner]{halverson2021neural}
J.~Halverson, A.~Maiti, and K.~Stoner.
\newblock Neural networks and quantum field theory.
\newblock \emph{Machine Learning: Science and Technology}, 2\penalty0 (3),
  2021.

\bibitem[Havasi et~al.(2018)Havasi, Hern{\'a}ndez-Lobato, and
  Murillo-Fuentes]{havasi2018inference}
M.~Havasi, J.~M. Hern{\'a}ndez-Lobato, and J.~J. Murillo-Fuentes.
\newblock Inference in deep {G}aussian processes using stochastic gradient
  {H}amiltonian {M}onte {C}arlo.
\newblock In \emph{NeurIPS}, 2018.

\bibitem[He et~al.(2016)He, Zhang, Ren, and Sun]{he2016deep}
K.~He, X.~Zhang, S.~Ren, and J.~Sun.
\newblock Deep residual learning for image recognition.
\newblock In \emph{CVPR}, 2016.

\bibitem[Hoffman and Gelman(2014)]{hoffman2014no}
M.~D. Hoffman and A.~Gelman.
\newblock The no-u-turn sampler: adaptively setting path lengths in
  {H}amiltonian {M}onte {C}arlo.
\newblock \emph{Journal of Machine Learning Research}, 15\penalty0
  (1):\penalty0 1593--1623, 2014.

\bibitem[Hron et~al.(2020{\natexlab{a}})Hron, Bahri, Novak, Pennington, and
  Sohl-Dickstein]{hron2020exact}
J.~Hron, Y.~Bahri, R.~Novak, J.~Pennington, and J.~Sohl-Dickstein.
\newblock Exact posterior distributions of wide bayesian neural networks.
\newblock In \emph{ICML Workshop on Uncertainty and Robustness in Deep
  Learning}, 2020{\natexlab{a}}.

\bibitem[Hron et~al.(2020{\natexlab{b}})Hron, Bahri, Sohl-Dickstein, and
  Novak]{hron2020infinite}
J.~Hron, Y.~Bahri, J.~Sohl-Dickstein, and R.~Novak.
\newblock Infinite attention: {NNGP} and {NTK} for deep attention networks.
\newblock In \emph{ICML}, 2020{\natexlab{b}}.

\bibitem[Izmailov et~al.(2021)Izmailov, Vikram, Hoffman, and
  Wilson]{izmailov2021bayesian}
P.~Izmailov, S.~Vikram, M.~D. Hoffman, and A.~G. Wilson.
\newblock What are bayesian neural network posteriors really like?
\newblock In \emph{ICML}, 2021.

\bibitem[Jacot et~al.(2018)Jacot, Gabriel, and Hongler]{jacot2018neural}
A.~Jacot, F.~Gabriel, and C.~Hongler.
\newblock Neural tangent kernel: Convergence and generalization in neural
  networks.
\newblock In \emph{NeurIPS}, 2018.

\bibitem[Kingma and Ba(2015)]{kingma2014adam}
D.~P. Kingma and J.~Ba.
\newblock Adam: A method for stochastic optimization.
\newblock In \emph{ICLR}, 2015.

\bibitem[Krizhevsky(2009)]{krizhevsky2009learning}
A.~Krizhevsky.
\newblock Learning multiple layers of features from tiny images.
\newblock 2009.

\bibitem[LeCun et~al.(1998)LeCun, Bottou, Bengio, and
  Haffner]{lecun1998gradient}
Y.~LeCun, L.~Bottou, Y.~Bengio, and P.~Haffner.
\newblock Gradient-based learning applied to document recognition.
\newblock \emph{Proceedings of the IEEE}, 86\penalty0 (11):\penalty0
  2278--2324, 1998.

\bibitem[Lee et~al.(2018)Lee, Bahri, Novak, Schoenholz, Pennington, and
  Sohl-Dickstein]{lee2017deep}
J.~Lee, Y.~Bahri, R.~Novak, S.~S. Schoenholz, J.~Pennington, and
  J.~Sohl-Dickstein.
\newblock Deep neural networks as {G}aussian processes.
\newblock In \emph{ICLR}, 2018.

\bibitem[Lee et~al.(2019)Lee, Xiao, Schoenholz, Bahri, Novak, Sohl-Dickstein,
  and Pennington]{lee2019wide}
J.~Lee, L.~Xiao, S.~S. Schoenholz, Y.~Bahri, R.~Novak, J.~Sohl-Dickstein, and
  J.~Pennington.
\newblock Wide neural networks of any depth evolve as linear models under
  gradient descent.
\newblock In \emph{NeurIPS}, 2019.

\bibitem[Lee et~al.(2020)Lee, Schoenholz, Pennington, Adlam, Xiao, Novak, and
  Sohl-Dickstein]{lee2020finite}
J.~Lee, S.~Schoenholz, J.~Pennington, B.~Adlam, L.~Xiao, R.~Novak, and
  J.~Sohl-Dickstein.
\newblock Finite versus infinite neural networks: an empirical study.
\newblock In \emph{NeurIPS}, 2020.

\bibitem[Li et~al.(2018)Li, Xu, Taylor, Studer, and
  Goldstein]{li2018visualizing}
H.~Li, Z.~Xu, G.~Taylor, C.~Studer, and T.~Goldstein.
\newblock Visualizing the loss landscape of neural nets.
\newblock In \emph{NeurIPS}, 2018.

\bibitem[Li et~al.(2020)Li, Ma, and Zhang]{li2020learning}
Y.~Li, T.~Ma, and H.~R. Zhang.
\newblock Learning over-parametrized two-layer neural networks beyond {NTK}.
\newblock In \emph{COLT}, 2020.

\bibitem[Louizos and Welling(2016)]{louizos2016structured}
C.~Louizos and M.~Welling.
\newblock Structured and efficient variational deep learning with matrix
  {G}aussian posteriors.
\newblock In \emph{ICML}, 2016.

\bibitem[Lu et~al.(2020)Lu, Yang, Hao, and Shafto]{lu2020interpretable}
C.-K. Lu, S.~C.-H. Yang, X.~Hao, and P.~Shafto.
\newblock Interpretable deep {G}aussian processes with moments.
\newblock In \emph{AISTATS}, 2020.

\bibitem[Lu et~al.(2017)Lu, Pu, Wang, Hu, and Wang]{lu2017expressive}
Z.~Lu, H.~Pu, F.~Wang, Z.~Hu, and L.~Wang.
\newblock The expressive power of neural networks: A view from the width.
\newblock In \emph{NeurIPS}, 2017.

\bibitem[Matthews et~al.(2018)Matthews, Rowland, Hron, Turner, and
  Ghahramani]{matthews2018gaussian}
A.~G. d.~G. Matthews, M.~Rowland, J.~Hron, R.~E. Turner, and Z.~Ghahramani.
\newblock {G}aussian process behaviour in wide deep neural networks.
\newblock In \emph{ICLR}, 2018.

\bibitem[Mei et~al.(2018)Mei, Montanari, and Nguyen]{mei2018mean}
S.~Mei, A.~Montanari, and P.-M. Nguyen.
\newblock A mean field view of the landscape of two-layer neural networks.
\newblock \emph{Proceedings of the National Academy of Sciences}, 115\penalty0
  (33):\penalty0 E7665--E7671, 2018.

\bibitem[Micchelli et~al.(2006)Micchelli, Xu, and
  Zhang]{micchelli2006universal}
C.~A. Micchelli, Y.~Xu, and H.~Zhang.
\newblock Universal kernels.
\newblock \emph{Journal of Machine Learning Research}, 7\penalty0 (12), 2006.

\bibitem[Mont{\'u}far et~al.(2014)Mont{\'u}far, Pascanu, Cho, and
  Bengio]{montufar2014number}
G.~Mont{\'u}far, R.~Pascanu, K.~Cho, and Y.~Bengio.
\newblock On the number of linear regions of deep neural networks.
\newblock In \emph{NeurIPS}, 2014.

\bibitem[Nakkiran et~al.(2020)Nakkiran, Kaplun, Bansal, Yang, Barak, and
  Sutskever]{nakkiran2020deep}
P.~Nakkiran, G.~Kaplun, Y.~Bansal, T.~Yang, B.~Barak, and I.~Sutskever.
\newblock Deep double descent: Where bigger models and more data hurt.
\newblock In \emph{ICLR}, 2020.

\bibitem[Neal(1995)]{neal1995bayesian}
R.~M. Neal.
\newblock \emph{{B}ayesian learning for neural networks}.
\newblock PhD thesis, University of Toronto, 1995.

\bibitem[Nguyen and Hein(2017)]{nguyen2017loss}
Q.~Nguyen and M.~Hein.
\newblock The loss surface of deep and wide neural networks.
\newblock In \emph{ICML}, 2017.

\bibitem[Novak et~al.(2019)Novak, Xiao, Lee, Bahri, Yang, Hron, Abolafia,
  Pennington, and Sohl-Dickstein]{novak2019bayesian}
R.~Novak, L.~Xiao, J.~Lee, Y.~Bahri, G.~Yang, J.~Hron, D.~A. Abolafia,
  J.~Pennington, and J.~Sohl-Dickstein.
\newblock {B}ayesian deep convolutional networks with many channels are
  {G}aussian processes.
\newblock In \emph{ICLR}, 2019.

\bibitem[Ober and Aitchison(2021)]{ober2020global}
S.~W. Ober and L.~Aitchison.
\newblock Global inducing point variational posteriors for {B}ayesian neural
  networks and deep {G}aussian processes.
\newblock In \emph{ICML}, 2021.

\bibitem[Paszke et~al.(2019)Paszke, Gross, Massa, Lerer, Bradbury, Chanan,
  Killeen, Lin, Gimelshein, Antiga, et~al.]{paszke2019pytorch}
A.~Paszke, S.~Gross, F.~Massa, A.~Lerer, J.~Bradbury, G.~Chanan, T.~Killeen,
  Z.~Lin, N.~Gimelshein, L.~Antiga, et~al.
\newblock {P}y{T}orch: An imperative style, high-performance deep learning
  library.
\newblock In \emph{NeurIPS}, 2019.

\bibitem[Poole et~al.(2016)Poole, Lahiri, Raghu, Sohl-Dickstein, and
  Ganguli]{poole2016exponential}
B.~Poole, S.~Lahiri, M.~Raghu, J.~Sohl-Dickstein, and S.~Ganguli.
\newblock Exponential expressivity in deep neural networks through transient
  chaos.
\newblock In \emph{NeurIPS}, 2016.

\bibitem[Raghu et~al.(2017)Raghu, Poole, Kleinberg, Ganguli, and
  Sohl-Dickstein]{raghu2017expressive}
M.~Raghu, B.~Poole, J.~Kleinberg, S.~Ganguli, and J.~Sohl-Dickstein.
\newblock On the expressive power of deep neural networks.
\newblock In \emph{ICML}, 2017.

\bibitem[Rahimi et~al.(2007)Rahimi, Recht, et~al.]{rahimi2007random}
A.~Rahimi, B.~Recht, et~al.
\newblock Random features for large-scale kernel machines.
\newblock In \emph{NeurIPS}, 2007.

\bibitem[Rasmussen and Williams(2006)]{rasmussen2006gaussian}
C.~E. Rasmussen and C.~Williams.
\newblock \emph{{G}aussian processes for machine learning}, volume~1.
\newblock MIT Press, 2006.

\bibitem[Romero et~al.(2015)Romero, Ballas, Kahou, Chassang, Gatta, and
  Bengio]{romero2014fitnets}
A.~Romero, N.~Ballas, S.~E. Kahou, A.~Chassang, C.~Gatta, and Y.~Bengio.
\newblock Fitnets: Hints for thin deep nets.
\newblock In \emph{ICLR}, 2015.

\bibitem[Salimbeni and Deisenroth(2017)]{salimbeni2017doubly}
H.~Salimbeni and M.~Deisenroth.
\newblock Doubly stochastic variational inference for deep {G}aussian
  processes.
\newblock In \emph{NeurIPS}, 2017.

\bibitem[Schoenberg(1938)]{schoenberg1938metric}
I.~J. Schoenberg.
\newblock Metric spaces and completely monotone functions.
\newblock \emph{Annals of Mathematics}, pages 811--841, 1938.

\bibitem[Shankar et~al.(2020)Shankar, Fang, Guo, Fridovich-Keil, Ragan-Kelley,
  Schmidt, and Recht]{shankar2020neural}
V.~Shankar, A.~Fang, W.~Guo, S.~Fridovich-Keil, J.~Ragan-Kelley, L.~Schmidt,
  and B.~Recht.
\newblock Neural kernels without tangents.
\newblock In \emph{ICML}, 2020.

\bibitem[Soltanolkotabi et~al.(2018)Soltanolkotabi, Javanmard, and
  Lee]{soltanolkotabi2018theoretical}
M.~Soltanolkotabi, A.~Javanmard, and J.~D. Lee.
\newblock Theoretical insights into the optimization landscape of
  over-parameterized shallow neural networks.
\newblock \emph{IEEE Transactions on Information Theory}, 65\penalty0
  (2):\penalty0 742--769, 2018.

\bibitem[Telgarsky(2016)]{telgarsky2016benefits}
M.~Telgarsky.
\newblock Benefits of depth in neural networks.
\newblock In \emph{COLT}, 2016.

\bibitem[Vladimirova et~al.(2019)Vladimirova, Verbeek, Mesejo, and
  Arbel]{vladimirova2019understanding}
M.~Vladimirova, J.~Verbeek, P.~Mesejo, and J.~Arbel.
\newblock Understanding priors in {B}ayesian neural networks at the unit level.
\newblock In \emph{ICML}, 2019.

\bibitem[Wang et~al.(2016)Wang, Brubaker, Chaib-Draa, and
  Urtasun]{wang2016sequential}
Y.~Wang, M.~Brubaker, B.~Chaib-Draa, and R.~Urtasun.
\newblock Sequential inference for deep {G}aussian process.
\newblock In \emph{AISTATS}, 2016.

\bibitem[Wei et~al.(2019)Wei, Lee, Liu, and Ma]{wei2019regularization}
C.~Wei, J.~Lee, Q.~Liu, and T.~Ma.
\newblock Regularization matters: Generalization and optimization of neural
  nets vs their induced kernel.
\newblock In \emph{NeurIPS}, 2019.

\bibitem[Yaglom(1987)]{yaglom1987correlation}
A.~Yaglom.
\newblock \emph{Correlation theory of stationary and related random functions}.
\newblock Springer Series in Statistics, New York, 1987.

\bibitem[Yang(2019)]{yang2019tensor}
G.~Yang.
\newblock Tensor programs {I}: Wide feedforward or recurrent neural networks of
  any architecture are gaussian processes.
\newblock In \emph{NeurIPS}, 2019.

\bibitem[Yang(2020)]{yang2020tensor}
G.~Yang.
\newblock Tensor programs {II}: Neural tangent kernel for any architecture.
\newblock \emph{arXiv preprint arXiv:2006.14548}, 2020.

\bibitem[Yang and Hu(2021)]{yang2020feature}
G.~Yang and E.~J. Hu.
\newblock Tensor programs {IV}: {F}eature learning in infinite-width neural
  networks.
\newblock In \emph{ICML}, 2021.

\bibitem[Yehudai and Shamir(2019)]{yehudai2019power}
G.~Yehudai and O.~Shamir.
\newblock On the power and limitations of random features for understanding
  neural networks.
\newblock In \emph{NeurIPS}, 2019.

\bibitem[Zagoruyko and Komodakis(2016)]{zagoruyko2016wide}
S.~Zagoruyko and N.~Komodakis.
\newblock Wide residual networks.
\newblock In \emph{BMVC}, 2016.

\bibitem[Zavatone-Veth and Pehlevan(2021)]{zavatoneveth2021exact}
J.~A. Zavatone-Veth and C.~Pehlevan.
\newblock Exact priors of finite neural networks.
\newblock In \emph{NeurIPS}, 2021.

\end{thebibliography}
}

%
%

\clearpage


\makeatletter

  \newcommand{\suptitle}{Supplementary Information for: \titl}
  \renewcommand{\@title}{\suptitle}
  \newcommand{\thanks}[1]{\footnotemark[1]}
  \renewcommand{\@author}{\authorinfo}

  \par
  \begingroup
    \renewcommand{\thefootnote}{\fnsymbol{footnote}}
    \renewcommand{\@makefnmark}{\hbox to \z@{$^{\@thefnmark}$\hss}}
    \renewcommand{\@makefntext}[1]{%
      \parindent 1em\noindent
      \hbox to 1.8em{\hss $\m@th ^{\@thefnmark}$}#1
    }
    \thispagestyle{empty}
    \@maketitle
    \@thanks
  \endgroup
  \let\maketitle\relax
  \let\thanks\relax
\makeatother

\appendix

\section{Broader Impact}
\label{sec:broader_impact}

This paper analyzes two existing classes of models: Deep GP and neural networks.
We believe that our findings will be of interest to researchers and machine learning practitioners,
offering useful guidance for Deep GP and neural network architectures.
Because we are neither introducing new algorithms nor introducing new use cases of existing algorithms, we do not foresee any major ethical impacts from this work.
However, we do note that this paper primarily focuses on how width affects performance metrics (e.g. accuracy, log likelihood, etc.)
and does not focus on other metrics that may be of interest to practitioners and society at large (e.g. interpretability, energy usage, fairness, etc.).

\section{Deep Gaussian Processes}

In this section we discuss various Deep GP facts presented throughout the main paper.

\subsection{Capacity of Deep GP}
\label{sec:dgp_capacity}

Here we formalize the claim that Deep GP have ``infinite capacity.''
Standard Gaussian processes are nonparametric, and---if the prior covariance is a universal kernel \citep{micchelli2006universal}---%
then any function (or an arbitrarily precise approximation thereof) is a draw from its prior.
A Deep GP composes multiple GP as different layers.
If all GP layers use universal kernels for covariance priors, then any (arbitrarily precise approximation of a) function $h(\cdot)$ is a draw from the Deep GP prior.
(Draw the identity function from the first $L-1$ GP layers, and then draw $h(\cdot)$ from the last GP layer.)
In this sense, Deep GP as well as standard GP can model any function to arbitrary precision and in this sense have infinite capacity.

\subsection{Connection Between Neural Networks and Deep GP}
\label{sec:dgp_degenerate}

Throughout this paper we note that feed-forward (Bayesian) neural networks are a degenerate subclass of Deep Gaussian processes.
We will now formalize this connection, which has also been noted in several previous works \citep[e.g.][]{aitchison2020bigger,ober2020global,louizos2016structured}.

To show that the neural network defined in \cref{eqn:2layer_nn} is a (degenerate) Deep GP, we must show that each of its layers corresponds to a (degenerate, vector-valued) Gaussian process.
Recall that a Gaussian process $g(\cdot) \sim \mathcal{GP}$ is a distribution over functions
where every finite marginal distribution $\bfn = [g(\bx_1), \ldots, g(\bx_N)]$ is multivariate Gaussian consistent with some covariance function.
The first layer of the \cref{eqn:2layer_nn} neural network is given by
\[
  f_1^{(i)}(\bx) = \bw_1^{(i)\top} (\bx) + \beta b_1^{(i)},
\]
Thus, the first layer corresponds to a vector-valued Gaussian process with the (degenerate) linear prior covariance.
The second layer of the \cref{eqn:2layer_nn} neural network is given by
\[
  f_2(\bz) = {\textstyle \frac{1}{\sqrt{H_1}}} \bw_2^\top \bsigma(\bz) + \beta b_2,
\]
where again the entries of $\bw_2$ and $b_2$ are i.i.d. unit Normal.
We have that $\bfn_2 \mid \bF_1 = \normaldist{\bzero}{\beta + \bsigma(\bF_1) \bsigma(\bF_1)^\top}$, where $\bF_1 = [\bfn_1(\bx_1), \ldots, \bfn_1(\bx_N)]$
and $\bsigma(\bF_1)$ corresponds to the elementwise nonlinearity $\bsigma(\cdot)$ applied to each entry of $\bF_1$.
Thus, the second layer also corresponds to a Gaussian process with a degenerate prior covariance.

\paragraph{Neural networks versus Deep GP.}
While (Bayesian) neural networks meet the definition of a Deep GP,
their covariance functions only correspond to a finite basis and therefore they do not have the same properties as nonparametric Deep GP
(i.e. the ability to model any function to arbitrary precision).
In this sense, it is common to treat neural networks and Deep GP as two separate classes of models with different predictive properties.
However, we emphasize that the theoretical results in this paper make no assumptions about whether or not a Deep GP is nonparametric,
and therefore the behaviors that we analyze are inherent to both classes of models.
In this sense, it is useful for our purposes to group nonparametric Deep GP and neural networks into a single class of models.

We also note that Deep GP and (Bayesian) neural networks can both be generalized to other hierarchical models, such as Deep Kernel Processes \citep{aitchison2021deep}.

\subsection{Factorization of Deep GP Posterior}
\label{sec:dgp_posterior_factorization}

\begin{figure}[t!]
  \centering
  \begin{tikzpicture}[node distance=1cm]
    \node[obs]                 (X)  {$\bX$} ; %
    \node[obs,    below=of X]  (x*) {$\bx^*$} ; %
    \node[latent, right=of X]  (F1)  {$\bF_1$} ; %
    \node[latent, below=of F1]  (f*1) {$\bfn^*_1$} ; %
    \node[latent, dashed, right=of F1, yshift=0.1cm]   (KFF)  {$\bK_2$} ; %
    \node[latent, dashed, right=of F1, yshift=-0.90cm]    (KF*)  {$\bk_2^*$} ; %
    \node[latent, dashed, right=of f*1, yshift=-0.1cm] (K**)  {$k_2^{**}$} ; %
    \node[latent, right=of KFF, yshift=-0.2cm] (f2)   {$\bfn_2$} ; %
    \node[latent, below=of f2]  (f*2) {$f^*_2$} ; %
    \node[obs,    right=of f2]  (y)  {$\by$} ; %

    \edge {X} {F1} ; %
    \edge {x*, F1} {f*1} ; %
    \edge {F1} {KFF} ; %
    \edge {F1, f*1} {KF*} ; %
    \edge {f*1} {K**} ; %
    \edge {KFF} {f2} ; %
    \edge {KFF, KF*, K**, f2} {f*2} ; %
    \edge {f2} {y} ; %
  \end{tikzpicture}
  \caption{
    2-layer Deep GP.
    $\bX, \by$ are the training data; $\bx^*$ is some unobserved test input.
    $\bF_1$ and $\bfn^*_1$ are the first layer outputs for the training inputs and test input, respectively.
    $\bfn_2$ and $f^*_2$ are the second layer outputs for the train/test inputs,
    which only depend on $\bF_1$, $\bfn^*_1$ through the prior covariance matrices
    $\bK_2 = \bK_2(\bF_1, \bF_1)$,
    $\bk_2^* = \bk_2(\bF_1, \bfn_1^*)$, and
    $k_2^{**} = k_2(\bfn_1^*, \bfn_1^*)$.
  }
  \label{fig:2layer_model}
\end{figure}
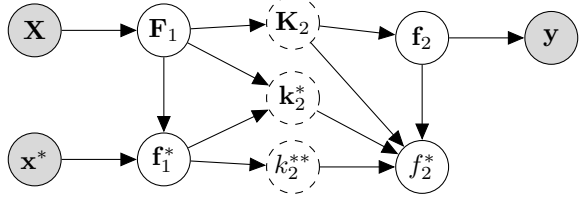

Here we supply additional details for the \cref{sec:posterior} derivation of the Deep GP posterior mean.
Consider a 2-layer zero-mean Deep-GP $f_2(\bfn_1(\cdot))$.
\cref{fig:2layer_model} depicts the relationships between these variables, using the same notation as in \cref{sec:posterior}.
Now consider the posterior distribution
\begin{equation*}
  p(f_2^*, \bfn_2, \bF_1, \bfn_1 \mid \by) \: = \:
  p(f_2^* \mid \bfn_2, \bfn_1^*, \bF_1, \by) \:
  p(\bfn_2 \mid \bF_1, \by, \bfn_1^*) \:
  p(\bfn_1^*, \bF_1 \mid \by),
\end{equation*}
where we have omitted the dependence on $\bX$ and $\bx^*$ for clarity.
Apply the rules of D-separation using \cref{fig:2layer_model},
we see that $f^*_2$ only depends on $\by$ through $\bfn_2$, and thus $f^*_2$ is conditionally independent from $\by$ given $\bfn_2$.
Furthermore, we see that $\bfn_2$ is only connected to $\bfn^*_1$ through $f^*_2$, and so $\bfn_2$ is conditionally independent from $\bfn^*_1$ if $f^*_2$ is marginalized out.
Thus, we can simplify the posterior factorization to:
\begin{equation}
  p(f_2^*, \bfn_2, \bF_1, \bfn_1 \mid \by) \: = \:
  p(f_2^* \mid \bfn_2, \bfn_1^*, \bF_1) \:
  p(\bfn_2 \mid \bF_1, \by) \:
  p(\bfn_1^*, \bF_1 \mid \by),
  \label{eqn:2layer_dgp_factorization_nokernel}
\end{equation}
Crucially, $\bfn_2$ and $f^*_2$ only depend on $\bF_1$ and $\bfn^*_1$ through $\bK_2$, $\bk_2^*$, and $k_2^{**}$:
\begin{align}
  p(\bfn_2 \mid \bK_2)
  \sim \normaldist{\bzero}{\bK_2},
  \quad
  p(f^*_2 \mid k^{**}_2, \bk^*_2, \bK_2, \bfn_2)
  \sim \normaldist{\bk_2^{*\top} \bK_2^{-1} \bfn_2}{\:\: k_2^{**} - \bk_2^{*\top} \bK_2^{-1} \bk_2^*},
  \nonumber
\end{align}
(If $f_2(\bfn_1(\cdot))$ is a neural network or any other degenerate Deep GP,
the $\bK_2^{-1}$ term can be replaced with its pseudoinverse.)
This relationship is also depicted graphically in \cref{fig:2layer_model}.
$\bK_2$, $\bk_2^*$, and $k_2^{**}$ are deterministic given $\bF_1$ and $\bfn_1^*$,
and we do not ultimately care about the values of $\bF_1$ and $\bfn_1^*$ since they are intermediate latent variables.
Therefore, we can rewrite the factorization in \cref{eqn:2layer_dgp_factorization_nokernel} where we replace $\bF_1$, $\bfn_1^*$
with $\bK_2$, $\bk_2^*$, and $k_2^{**}$:
\begin{align}
  p(f_2^*, \bfn_2, \bK_2, \bk_2^*, k_2^{**} \mid \by) \: = \:
  p(f_2^* \mid \bfn_2, \bK_2, \bk_2^*, k_2^{**}) \:
  p(\bfn_2 \mid \bK_2, \by) \:
  p(\bK_2, \bk_2^*, k_2^{**} \mid \by).
  \label{eqn:2layer_dgp_factorization_supp}
\end{align}
Applying the factorization in \cref{eqn:2layer_dgp_factorization_supp}, the posterior mean is:
\begin{align}
  \Evover{f_2^* \mid \by}{ f_2^* }
  &= \Evover{\bK_2, \bk_2^*, k_2^{**} \mid \by}{
    \Evover{ \bfn_2 \mid \bK_2, \by }{
      \expectedvalue_{ f_2^* \mid \bfn_2, \bK_2, \bk_2^*, k_2^{**} } \Bigl[ \: f_2^* \: \Bigr]
    }
  }
  \nonumber
  \\
  &= \Evover{\bK_2, \bk_2^* \mid \by}{
    \Evover{ \bfn_2 \mid \bK_2, \by }{
      \bk_2^{*\top} \bK_2^{-1} \bfn_2
    }
  }
  \label{eqn:posterior_mean_supp_step2}
  \\
  &= \expectedvalue_{ \bK_2, \bk_2^* \mid \by} \Bigl[ \:
    \bk_2^{*\top}
    \overbracket{
      \bK_2^{-1} \Evover{ \bfn_2 \mid \bK_2, \by }{ \bfn_2 }
    }^{ \balpha }
  \: \Bigr]
  \label{eqn:posterior_mean_supp_step3}
\end{align}
(Again, if $f_2(\bfn_1(\cdot))$ is a neural network,
the $\bK_2^{-1}$ term in \cref{eqn:posterior_mean_supp_step2,eqn:posterior_mean_supp_step3} can be replaced with its pseudoinverse.)
Finally, since $\bK_2, \bk^*_2$ are deterministic transforms of $\bfn_1(\bx^*)$, $\bfn_1(\bx_1)$, $\ldots$, $\bfn_1(\bx_N)$,
we can rewrite \cref{eqn:posterior_mean_supp_step3} as:
\begin{align*}
  \Evover{f_2^* \mid \by}{ f_2^* }
  &=
  \Evover{\bfn_1(\bx^*), \bfn_1(\bx_1), \ldots, \bfn_1(\bx_N) \mid \by}{ \textstyle \sum_{i=1}^N \: \alpha_i \: k_2(\bfn_1(\bx_i), \bfn_1(\bx^*))},
\end{align*}
which completes the derivation of \cref{eqn:dgp_posterior_mean} in \cref{sec:posterior}.

\subsection{A Rigorous Argument for Deep GP Posterior Collapse in the Infinite-Width Limit}
\label{sec:dgp_posterior_collapse}

In \cref{sec:posterior} (Eq.~\ref{eqn:limiting_dgp_posterior_mean}),
we argue that the posterior mean of an infinite-width Deep GP collapses to the posterior of the limiting GP.
To make this argument mathematically rigorous (and to demonstrate that this limiting posterior does not ``blow up''), we need to establish that convergence to a GP prior also implies convergence to the corresponding GP posterior.
\citet[][Proposition 1]{hron2020exact} proves that this is indeed the case, with only mild assumptions on the likelihood:
\newtheorem*{hronprop}{Proposition 1 of \citet{hron2020exact}}
\begin{hronprop}
  Assume $P_{f_n} \Rightarrow P_{f}$ (where $P_{f_n}$ in some sequence of priors, $P_{f}$ is some limiting prior, and $\Rightarrow$ denotes convergence in distribution)
  on the usual Borel product $\sigma$-algebra, Assumption~1 from \citet{hron2020exact} holds for the chosen likelihood $\ell$,
  and that $\int \ell \intd P_f > 0$. Then,
  \[
    P_{f_n \mid D} \Rightarrow P_{f \mid D}
  \]
  with $P_{f_n \mid D}$ and $P_{f \mid D}$ the Bayesian posteriors induced by the likelihood $\ell$
  and respectively the priors $P_{f_n}$ and $P_{f}$.
\end{hronprop}
Common likelihoods, such as the Gaussian likelihood for regression or the categorical likelihood for multiclass classification,
satisfy the assumptions of this proposition.

\subsection{Formalizing the Notion of ``Sufficient Capacity'' for Neural Networks}
\label{sec:dgp_sufficient_capacity}
Throughout the paper, we argue that width harms model performance once a model has ``sufficient capacity'' for a given dataset.
We intentionally keep this notion vague, since there are various measures of capacity that---%
while useful for analyzing trends in network architectures---%
are an imperfect quantification of the power of a neural network.
Nevertheless, under any standard definition of capacity, such as VC dimension, neural networks have finite capacity whereas nonparametric Deep GP with universal covariance functions have infinite capacity.
Our theory in \cref{sec:dgp,sec:posterior,sec:tails} suggests that width controls a capacity/adaptability trade-off for parametric neural networks,
analogous to other classic machine learning trade-offs.
While the optimal capacity of a neural network depends on several hard-to-measure factors and dataset-dependent features,
it stands to reason that---after sufficient width---additional neurons make the prior distribution increasingly Gaussian while offering little additional gains in modeling precision (as suggested by the orange lines in \cref{fig:dgp_width,fig:dgp3l_width}).
This is the regime that we refer to as ``sufficient capacity.''

\section{What Prior Covariance Functions can be Modeled by Deep Gaussian Processes?}
\label{sec:dgp_second_moments}
The functional properties of standard Gaussian processes are largely determined by the choice of prior covariance function \citep{rasmussen2006gaussian}.
Any positive definite function is a valid GP covariance, making it possible to encode many types of functional priors.
For Deep GP, it is reasonable to assume that its prior second moment also has significant influence on its inductive bias and functional properties.
To that end, it is of interest to determine what covariances can be modeled by Deep GP a priori.

We present two theoretical results in this section.
The first is a negative result (\cref{thm:no_neg_covar}),
which states that Deep GP with stationary GP layers can only model non-negative covariance functions a priori.
This is in contrast to standard GP, which are capable of expressing anti-correlations with stationary covariance priors.
The second is a positive result (\cref{thm:radial}),
which demonstrates a Deep GP construction capable of modeling most isotropic covariance priors.
We note that isotropic functions (e.g. RBF, Mat\'ern, rational quadratic, etc.) are some of the most common covariance priors.

\begin{theorem}
  Let $\text{\emph{DGP}}(\cdot) = f_L \circ \cdots \circ \bfn_1(\cdot)$ be a $L$-layer zero-mean Deep GP where $f_L(\cdot)$ has a mean-square continuous stationary prior covariance.
  Then $\Ev{\text{\emph{DGP}}(\bx) \: \text{\emph{DGP}}(\bx')} > 0$ for all $\bx, \bx'$.
  \label{thm:no_neg_covar}
\end{theorem}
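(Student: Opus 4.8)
The plan is to condition on the top GP layer, reduce the prior second moment to an expectation of its stationary kernel evaluated at a \emph{difference} of the penultimate representations, and then combine Bochner's theorem with the Gaussian i.i.d.\ structure of that penultimate layer. First I would write $\bz = \bfn_{L-1} \circ \cdots \circ \bfn_1(\bx)$ and $\bz' = \bfn_{L-1} \circ \cdots \circ \bfn_1(\bx')$ for the inputs fed to $f_L$ (so $\bz, \bz' \in \reals^{H_{L-1}}$; this uses $L \geq 2$, which the composition notation presumes). Since $f_L$ is a zero-mean GP whose stationary covariance I write as $k_L(\bz, \bz') = \psi(\bz - \bz')$, the tower property gives
\begin{equation*}
  \Ev{\text{DGP}(\bx)\,\text{DGP}(\bx')}
  = \Evover{\bz, \bz'}{ \Evover{f_L \mid \bz, \bz'}{ f_L(\bz)\, f_L(\bz') } }
  = \Evover{\bz, \bz'}{ \psi(\bz - \bz') }.
\end{equation*}

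Next I would invoke Bochner's theorem. Mean-square continuity of $f_L$ makes $\psi$ continuous and positive definite, so there is a finite non-negative spectral measure $\mu$ on $\reals^{H_{L-1}}$ with $\psi(\btau) = \int \cos(\bomega^\top \btau)\, d\mu(\bomega)$ (using that $\psi$ is real and even). Because $\mu$ is finite and the integrand is bounded, Fubini lets me exchange the expectation and the spectral integral, reducing the claim to showing that the \emph{expected characteristic function} is strictly positive for every frequency:
\begin{equation*}
  \Ev{\text{DGP}(\bx)\,\text{DGP}(\bx')}
  = \int \Evover{\bz, \bz'}{ \cos(\bomega^\top(\bz - \bz')) }\, d\mu(\bomega),
  \qquad
  \Evover{\bz, \bz'}{ \cos(\bomega^\top(\bz - \bz')) } = \Re\, \Evover{\bz, \bz'}{ e^{\,i\, \bomega^\top(\bz - \bz')} }.
\end{equation*}

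The crux is to show $\Ev{e^{\,i\,\bomega^\top(\bz-\bz')}} > 0$. Here I would condition on the input $\bw, \bw'$ to the last vector-valued layer (the output of layers $1$ through $L-2$, or simply $\bx, \bx'$ when $L=2$). By the Deep GP definition, given $\bw, \bw'$ the $H_{L-1}$ coordinates of $\bfn_{L-1}$ are i.i.d.\ zero-mean jointly Gaussian pairs $(z_j, z_j')$, so each difference $z_j - z_j'$ is Gaussian with common variance $v = k_{L-1}(\bw,\bw) - 2\,k_{L-1}(\bw,\bw') + k_{L-1}(\bw',\bw') \geq 0$. The factorization across coordinates then yields
\begin{equation*}
  \Evover{\bz, \bz' \mid \bw, \bw'}{ e^{\,i\,\bomega^\top(\bz - \bz')} }
  = \prod_{j=1}^{H_{L-1}} e^{-\frac{1}{2}\omega_j^2 v}
  = e^{-\frac{1}{2}\Vert \bomega \Vert^2 v} > 0,
\end{equation*}
which is real, so taking expectation over $\bw, \bw'$ preserves strict positivity. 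Integrating this strictly positive quantity against the non-negative measure $\mu$ (which is nonzero, since $k_L(\bzero) = \mu(\reals^{H_{L-1}}) > 0$ for a nondegenerate top GP) gives a strictly positive value, completing the argument.

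The step I expect to be the main obstacle is precisely this positivity of the expected characteristic function: it is where Gaussianity of the penultimate layer is indispensable, since it is what collapses the per-coordinate factors into the real Gaussian form $e^{-\frac{1}{2}\Vert\bomega\Vert^2 v}$. For a non-Gaussian inner layer the quantity $\Ev{e^{\,i\,\bomega^\top(\bz-\bz')}}$ could be negative, and the anti-correlation-free conclusion would fail. The remaining care is routine: checking the Fubini interchange (finite $\mu$, bounded integrand), and confirming $v \geq 0$ as a genuine variance.
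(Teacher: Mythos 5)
Your proof is correct and follows essentially the same route as the paper's: rewrite the second moment via the tower property and stationarity, apply Bochner's theorem, swap the order of integration, and conclude from strict positivity of the resulting characteristic-function integrand against the positive spectral measure. The only difference is cosmetic and sits in the last step: the paper invokes its general Jensen lower bound on Deep GP characteristic functions (\cref{eqn:cf_bound}), whereas you condition on the layer-$(L-2)$ output and evaluate the conditional Gaussian characteristic function exactly as $e^{-\frac{1}{2}\Vert\bomega\Vert^2 v} > 0$ (real, so positivity survives the outer expectation) --- an equally valid, slightly more self-contained way to reach the same conclusion, and your explicit remark that $\mu$ must be a nonzero measure is a small point the paper leaves implicit.
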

\begin{proof}[Proof of Theorem~\ref{thm:no_neg_covar}]
  Throughout the proof, we will use the shorthand $\bfn_{\ell} = \bfn_\ell \circ \ldots \circ \bfn_1(\bx)$
  and $\bfn_{\ell}' = \bfn_\ell \circ \ldots \circ \bfn_1(\bx')$.
  Because $k_L(\cdot, \cdot)$ is stationary, we can express $\Ev{ \text{DGP}(\bx) \: \text{DGP}(\bx') }$ as:
  \begin{align*}
    \Ev{ \text{DGP}(\bx) \: \text{DGP}(\bx') }
    &= \int k_L(\bfn_{L-1} - \bfn_{L-1}') \intd p(\bfn_{L-1}, \bfn_{L-1}').
  \end{align*}
  Moreover, by Bochner's theorem, we can express $k_2(\bfn_{L-1} - \bfn_{L-1}')$ as the Fourier transform of some positive finite measure $\mu(\bxi)$:
  \begin{align}
    \Ev{ \text{DGP}(\bx) \: \text{DGP}(\bx') }
    &= \int \left( \int \exp( i \: \bxi^\top (\bfn_{L-1} - \bfn_{L-1}') )
    \intd \mu(\bxi) \right) \intd p(\bfn_{L-1}, \bfn_{L-1}').
    \label{eqn:stationary_bochner}
  \end{align}
  Note that $|\exp(i \: \cdot)|$ is bounded everywhere, and $\mu(\bxi)$ and $p(\bfn_{L-1}, \bfn_{L-1}')$ are finite measures.
  Therefore we can switch the order of integration in \cref{eqn:stationary_bochner}:
  \begin{align}
    \Ev{ \text{DGP}(\bx) \: \text{DGP}(\bx') }
    &= \int \left( \int \exp( i \: \bxi^\top (\bfn_{L-1} - \bfn_{L-1}' ) \intd p(\bfn_{L-1}, \bfn_{L-1}') \right) \intd \mu(\bxi).
    \label{eqn:stationary_bochner_switched}
    \\
    &= \int \left(
      \int \exp \left( i \:
        \begin{bmatrix} \bxi \\ -\bxi \end{bmatrix}^\top
        \begin{bmatrix} \bfn_{L-1} \\ \bfn_{L-1}' \end{bmatrix}
      \right)
      \intd p \left( \begin{bmatrix} \bfn_{L-1} \\ \bfn_{L-1}' \end{bmatrix} \right)
    \right) \intd \mu(\bxi).
    \nonumber
  \end{align}
  Applying the characteristic function lower bound for Deep GP marginals (see \cref{sec:proof_tails}, Eq.~\ref{eqn:cf_bound}):
  \begin{align}
    \Ev{ \text{DGP}(\bx) \: \text{DGP}(\bx') }
    &\geq \int \exp \left(
      - \frac 1 2
      \begin{bmatrix} \bxi \\ -\bxi \end{bmatrix}^\top
      \Ev{
        \begin{bmatrix} \bfn_{L-1} \\ \bfn_{L-1}' \end{bmatrix}
        \begin{bmatrix} \bfn_{L-1} \\ \bfn_{L-1}' \end{bmatrix}^\top
      }
      \begin{bmatrix} \bxi \\ -\bxi \end{bmatrix}
    \right) \intd \mu(\bxi).
    \label{eqn:stationary_bound}
  \end{align}
  The integrand in \cref{eqn:stationary_bound} is a real-valued exponential, and so it is strictly positive.
  Since $\mu(\bxi)$ is a positive measure, we have that $\Ev{ \text{DGP}(\bx) \: \text{DGP}(\bx') } > 0$.
\end{proof}

\begin{theorem}
  Let $k_\text{lim}(\bx, \bx') = \varphi(\Vert \bx - \bx' \Vert_2^2)$ be a mean-square continuous isotropic covariance function
  that is valid on $\reals^D \times \reals^D$ for all $D \in \mathbb N$.
  For any width $H_1 \in \mathbb N$, the exists a 2-layer Deep GP $f_2(\bfn_1(\cdot))$ with $\bfn_1(\cdot) : \reals^D \to \reals^{H_1}$
  and $f_2(\cdot) : \reals^{H_1} \to \reals$ where $\Ev{f_2(\bfn_1(\bx)) f_2(\bfn_1(\bx'))} = k_\text{lim}(\bx, \bx')$.
  \label{thm:radial}
\end{theorem}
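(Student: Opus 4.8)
My plan is to reduce the claim to a single scalar identity and then lean on a classical characterization of isotropic covariances that are valid in every dimension. Writing $r = \Vert \bx - \bx' \Vert$, the prior second moment of a $2$-layer zero-mean Deep GP factorizes as $\Ev{f_2(\bfn_1(\bx)) f_2(\bfn_1(\bx'))} = \Evover{\bfn_1}{ k_2(\bfn_1(\bx), \bfn_1(\bx')) }$, so it suffices to exhibit a first-layer covariance $k_1$ and a second-layer covariance $k_2$ whose composition yields this expectation equal to $\varphi(r^2)$. The key tool is Schoenberg's theorem: because $\varphi(\Vert \cdot \Vert^2)$ is a valid isotropic covariance on $\reals^D$ for \emph{every} $D$, the radial profile $\varphi$ must be completely monotone and hence a Gaussian scale mixture, $\varphi(t) = \int_0^\infty e^{-st} \intd \nu(s)$ for some finite positive measure $\nu$ of total mass $\varphi(0)$. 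This representation is the conceptual crux, and it is precisely the ``valid in all dimensions'' hypothesis that supplies it.

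For the construction I would take the first layer to be a (degenerate) \emph{linear} GP, $f_1^{(j)}(\bx) = \bw_j^\top \bx$ with $\bw_j \sim \normaldist{\bzero}{\bI}$ i.i.d.\ across $j$; then $\bfn_1(\bx) - \bfn_1(\bx')$ is exactly $\normaldist{\bzero}{r^2 \bI_{H_1}}$, i.e.\ $r \bg$ with $\bg \sim \normaldist{\bzero}{\bI_{H_1}}$, so the induced difference is isotropic Gaussian with spread governed directly by $r^2$ at \emph{any} width $H_1$. I would then \emph{define} the second-layer covariance through its Bochner spectral measure rather than in closed form: set $k_2(\bz, \bz') = \int_{\reals^{H_1}} e^{i \bomega^\top (\bz - \bz')} \intd \Lambda(\bomega)$, where $\Lambda$ is the rotation-invariant positive measure on $\reals^{H_1}$ whose radial variable $\Vert \bomega \Vert$ is the pushforward of $\nu$ under $s \mapsto \sqrt{2 s}$. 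Because $\Lambda$ is a positive finite measure, Bochner's theorem makes $k_2$ a bona fide stationary covariance on $\reals^{H_1}$ automatically, so validity of the GP layer comes for free.

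To finish, I would compute the composed second moment by exchanging the first-layer expectation with the spectral integral (justified by Fubini, since $\lvert e^{i \bomega^\top \cdot} \rvert \le 1$ and $\Lambda$ is finite) and using the Gaussian characteristic function $\Evover{\bg}{ e^{i r \bomega^\top \bg} } = e^{-\frac 1 2 r^2 \Vert \bomega \Vert^2}$, which gives $\Evover{\bfn_1}{ k_2(\bfn_1(\bx), \bfn_1(\bx')) } = \int_{\reals^{H_1}} e^{-\frac 1 2 r^2 \Vert \bomega \Vert^2} \intd \Lambda(\bomega) = \int_0^\infty e^{-s r^2} \intd \nu(s) = \varphi(r^2) = k_\text{lim}(\bx, \bx')$, for every $H_1$. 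The main obstacle is conceptual rather than computational: it is the appeal to Schoenberg's theorem that turns the ``valid in all dimensions'' hypothesis into the Gaussian-scale-mixture representation, together with the measure-theoretic bookkeeping of constructing the rotation-invariant $\Lambda$ realizing $\nu$ and checking mean-square continuity so the spectral representations are legitimate. I would also remark why ``all dimensions'' is essential: for a covariance valid only on the fixed space $\reals^{H_1}$, $\varphi$ need not be completely monotone (it could encode anti-correlations, cf.\ \cref{thm:no_neg_covar}), and the matched $k_2$ would instead be a scale mixture of the extremal Bessel-type Schoenberg function $\Omega_{H_1}$, for which validity is a strictly more delicate argument.
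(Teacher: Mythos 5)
Your proposal is correct and follows essentially the same route as the paper's proof: both invoke Schoenberg's characterization of isotropic covariances valid in all dimensions as Gaussian scale mixtures, take a linear first layer so that $\bfn_1(\bx)-\bfn_1(\bx')$ is Gaussian with variance $\Vert\bx-\bx'\Vert_2^2$, define $k_2$ via a Bochner spectral measure matched to the mixing measure, and close the computation with the Gaussian characteristic function after an exchange of integrals. The only difference is how arbitrary width is reached: you build a rotation-invariant spectral measure directly on $\reals^{H_1}$, whereas the paper proves the width-$1$ case and extends to width $m$ via the additive construction of \cref{eqn:additive_dgp}; the two are equivalent, and your explicit $s\mapsto\sqrt{2s}$ pushforward actually tracks the scale bookkeeping in the mixture representation more carefully than the paper does.
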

\begin{proof}[Proof of Theorem~\ref{thm:radial}]
  A classic result from \citet[][Thm.~2]{schoenberg1938metric} is that, for any mean-square continuous isotropic covariance function $\varphi(\Vert \bx - \bx' \Vert_2^2)$
  that is valid on $\reals^D \times \reals^D$ for all $D \in \mathbb N$, there exists some positive finite measure $\mu(\beta)$
  such that
  \begin{equation}
    \varphi(\Vert \bx - \bx' \Vert_2^2) = \int \exp \left( - \frac 1 2 \Vert \bx - \bx' \Vert_2^2 \: \beta \right) \intd \mu(\beta).
  \end{equation}
  Let $\text{DGP}^{(1)}(\cdot) = f_2(f_1(\cdot))$ be a 2-layer zero-mean Deep GP with width $H_1 = 1$,
  and let $k_1(\bx, \bx') = \bx^\top \bx'$ and $k_2(z, z') = \int \exp( i \: \beta (z - z') ) \intd \mu(\beta)$.
  (By Bochner's theorem, we know that $k_2(\cdot, \cdot)$ is a valid covariance function.)
  Define $\tau = f_1(\bx) - f_1(\bx')$.
  Since $f_1(\bx)$ and $f_1(\bx')$ are jointly Gaussian:
  \[
    p \left( \begin{bmatrix} f_1(\bx) \\ f_1(\bx') \end{bmatrix} \right)
    =
    \normaldist{%
      \begin{bmatrix} 0 \\ 0 \end{bmatrix}
    }{%
      \begin{bmatrix} \bx^\top \bx & \bx^\top \bx' \\ \bx^{\prime \top} \bx & \bx^{\prime \top} \bx' \end{bmatrix}
    }
  \]
  we have that $p(\tau) = \normaldist{\bzero}{\Vert \bx - \bx' \Vert_2^2}$.
  Substituting $\tau$ into \cref{eqn:stationary_bochner_switched}, we have:
  \begin{align}
    \Ev{ \text{DGP}^{(1)}(\bx) \: \text{DGP}^{(1)}(\bx') }
    &= \int \left( \int \exp( i \: \beta \tau ) \intd p(\tau) \right) \intd \mu(\beta).
    \nonumber \\
    &= \int \exp \left( - \frac 1 2 \Vert \bx - \bx' \Vert_2^2 \: \beta \right) \intd \mu(\beta).
  \end{align}
  Thus we have a width-1 Deep GP with prior covariance $k_\text{lim}(\cdot, \cdot)$.
  We can extend this construction to 2-layer Deep GP of any width using the additive sequence defined in \cref{eqn:additive_dgp}.
\end{proof}

\section{Additional Results}
\label{sec:additional_results}

\afterpage{%
  \begin{figure}[t!]
    \centering
    \includegraphics[width=\linewidth]{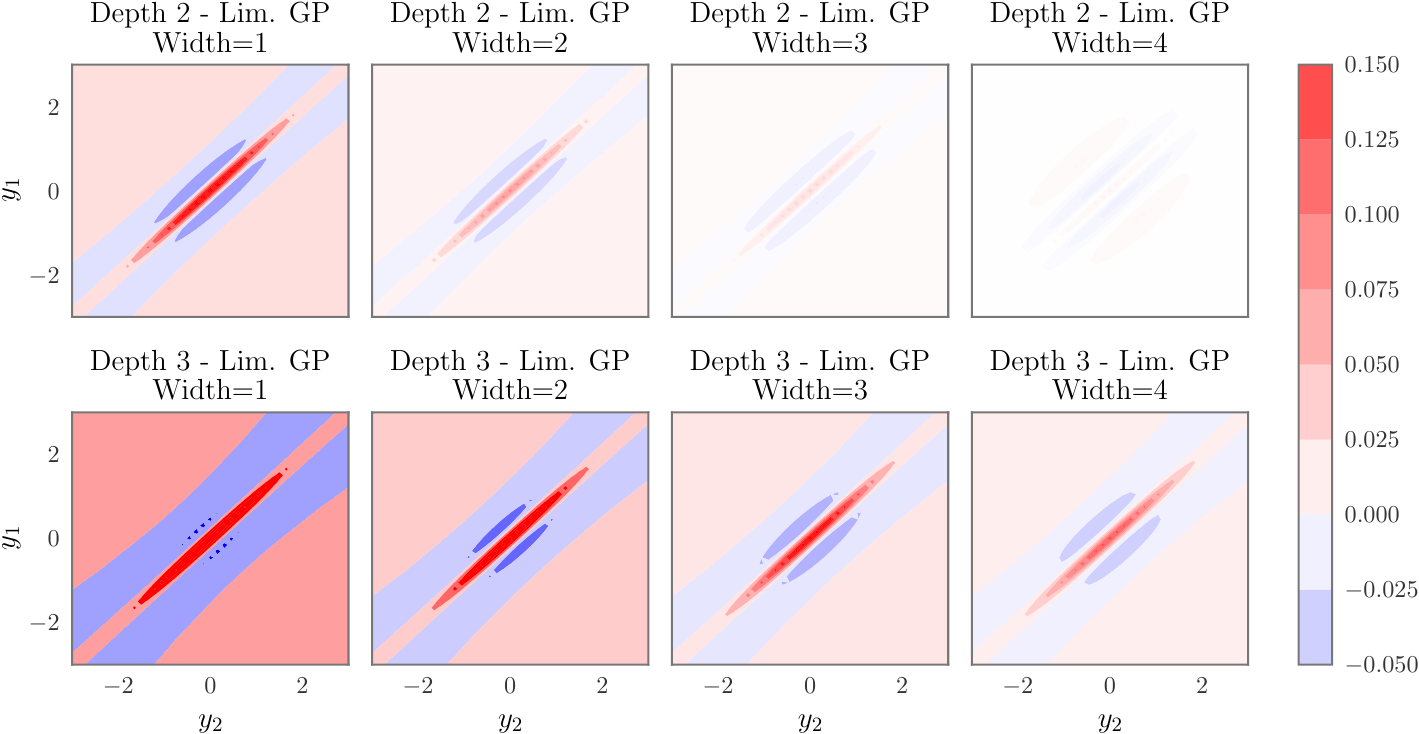}
    \caption{
      We depict the \emph{difference} between the Deep GP marginal density $p_{\text{DGP}}(y_1, y_2 \mid \bx_1, \bx_2)$
      and the limiting GP marginal density $p_{\text{Lim. GP}}(y_1, y_2 \mid \bx_1, \bx_2)$
      on the $N=2$ dataset $\bx_1 = -0.5$, $\bx_2 = 0.5$.
      Red regions correspond to values of $y_1, y_2$ where the Deep GP has more density, and vice versa for the blue regions.
      All Deep GP have heavier tails and a sharper peak than the limiting GP.
      \label{fig:gp_diff_depth_width}
    }
  \end{figure}

  \begin{figure}[t!]
    \centering
    \includegraphics[width=\linewidth]{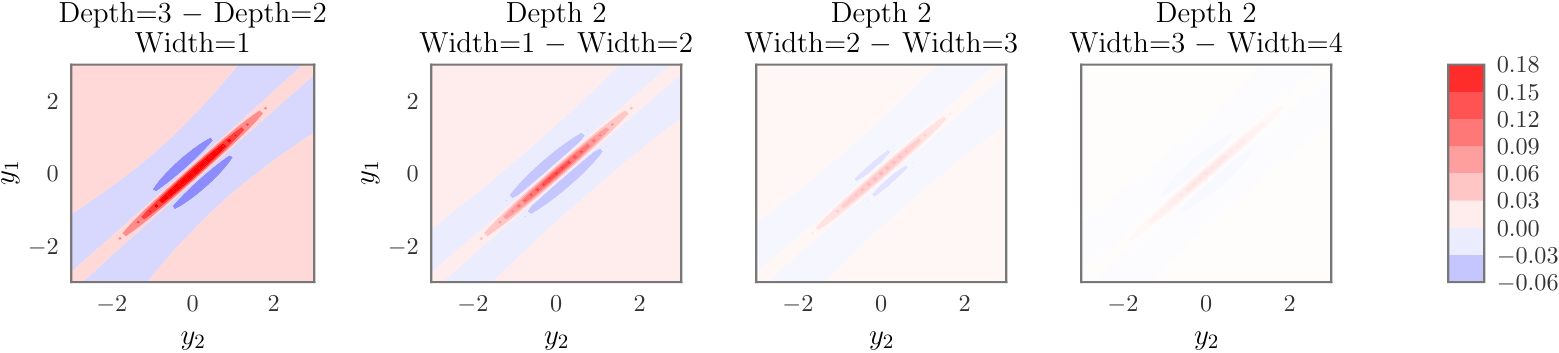}
    \caption{
      We depict the \emph{difference} between the marginal densities of various Deep GP on the $N=2$ dataset $\bx_1 = -0.5$, $\bx_2 = 0.5$.
      Red regions correspond to values of $y_1, y_2$ where the deeper/narrower Deep GP has more density, and vice versa for the blue regions.
      {\bf Left:} Comparing Deep GP of different depth.
      The 3-layer (width-1) Deep GP has a sharper peak and heavier tails than the 2-layer model, as indicated by the red regions.
      {\bf Right:} Comparing 2-layer Deep GP of different width.
      Width-$j$ models have sharper peaks and heavier tails than width-$j+1$ models, as indicated by the red regions.
      All models have the same first and second moments.
      \label{fig:width_diff_depth_width}
    }
  \end{figure}

  \begin{figure}[t!]
    \centering
    \includegraphics[width=0.6\linewidth]{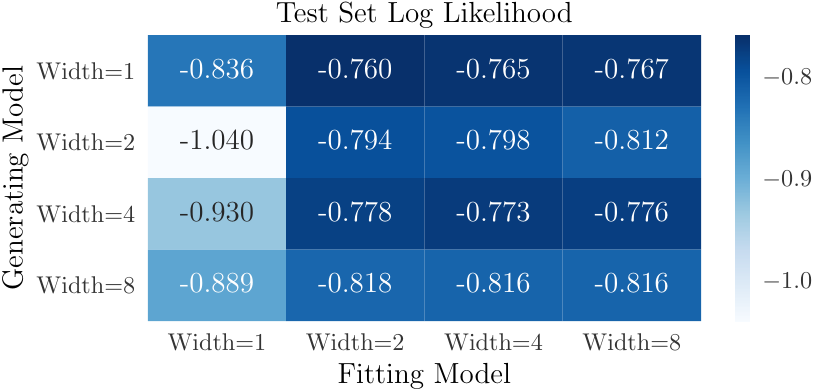}
    \caption{
      Control experiment to test how well NUTS sampling captures the Deep GP posterior.
      We generate datasets from $\text{width} \in \{ 1, 2, 4, 8 \}$ Deep GP,
      and then fit $\text{width} \in \{ 1, 2, 4, 8 \}$ Deep GP to these datasets.
      The $\text{width} \in \{ 2, 4, 8 \}$ models achieve roughly the same test set log likelihood on each dataset (higher is better).
      $\text{Width}=1$ Deep GP tend to achieve worse log likelihoods, even on a dataset generated from a $\text{width}=1$ Deep GP.
      \label{fig:control_experiment}
    }
  \end{figure}

  \clearpage
}

\subsection{Comparing Tails of Wider versus Deeper Models}
To further demonstrate that Deep GP are heavy tailed and sharply peaked, \cref{fig:gp_diff_depth_width} displays the \emph{difference} between Deep GP marginal densities and the limiting GP marginal density, i.e.:
\[
  p_\text{DGP}(y_1, y_2 \mid \bx_1, \bx_2) - p_\text{Lim. GP}(y_1, y_2 \mid \bx_1, \bx_2).
\]
Red areas correspond to values of $\by$ where the Deep GP has more density, while blue areas correspond to values where the limiting GP has more density.
Note that Deep GP of all widths and depths have red values near the $[0, 0]$ mean (corresponding to a sharper peak than the limiting GP) and red values in the upper left and lower right quadrants (corresponding to heavier tails than the limiting GP).

We also note that deeper/narrower models have heavier tails and sharper peaks than shallower/wider models (\cref{fig:width_diff_depth_width}).
The left plot shows the difference between the marginal densities of Depth-3 and Depth-2 Deep GP (with the same first and second moments).
The remaining plots show the difference between Depth-2 Deep GP of varying width (again, with the same first and second moments).

\afterpage{%
  \begin{figure}[t!]
    \centering
    \includegraphics[width=\linewidth]{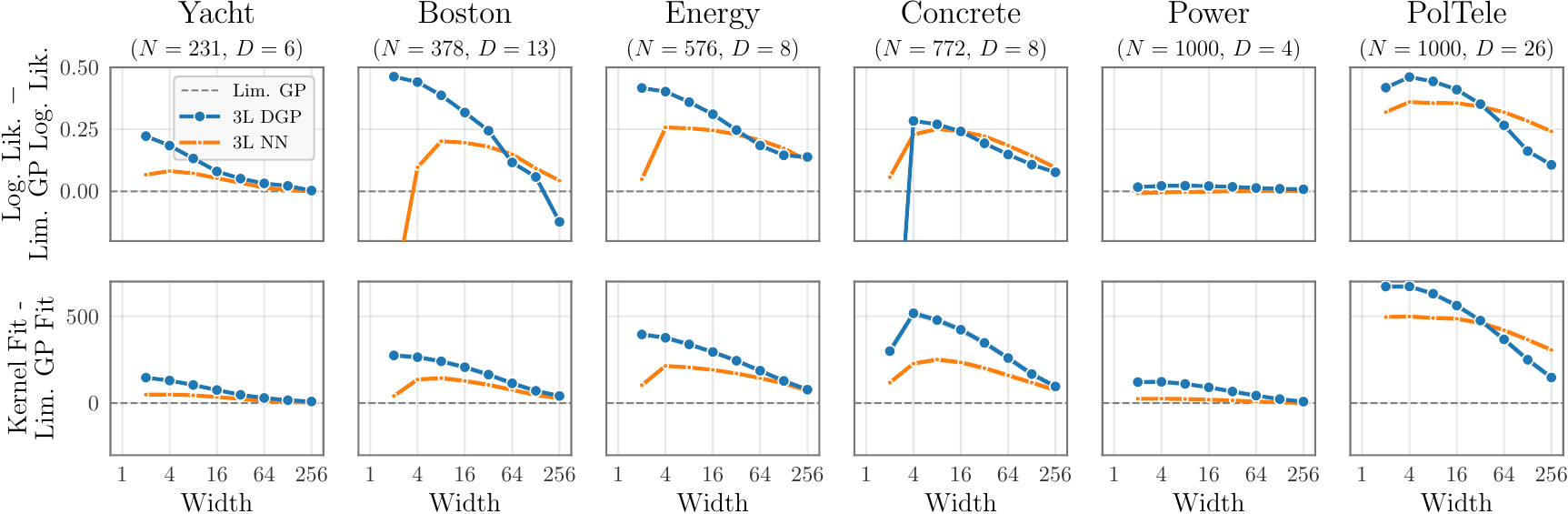}
    \caption{
      {\bf Top:} Test set log likelihood (LL) of {\bf 3-layer Deep GP} (and neural networks) as a function of width on regression datasets (higher is better).
      Numbers are shifted so that $0$ corresponds to the limiting GP log likelihood.
      {\bf Bottom:} Fit of the posterior kernel $k(\bfn_2(\bfn_1(\cdot)), \bfn_2(\bfn_1(\cdot)))$ on the training data, as measured by Gaussian log marginal likelihood (higher is better).
      $0$ corresponds to the limiting GP log marginal likelihood.
    }
    \label{fig:dgp3l_width}
  \end{figure}

  \begin{figure}[t!]
    \centering
    \includegraphics[width=\linewidth]{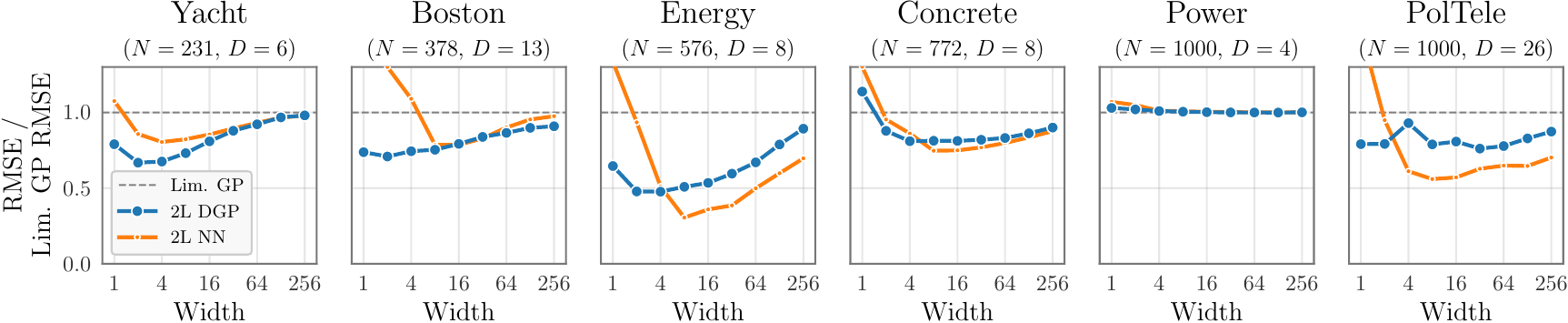}
    \caption{
      Test set root mean squared error (RMSE) of {\bf 2-layer} Deep GP (and neural networks) as a function of width on regression datasets (lower is better).
      Numbers are scaled so that $1$ corresponds to the limiting GP RMSE.
    }
    \label{fig:dgp_width_rmse}
  \end{figure}

  \begin{figure}[t!]
    \centering
    \includegraphics[width=\linewidth]{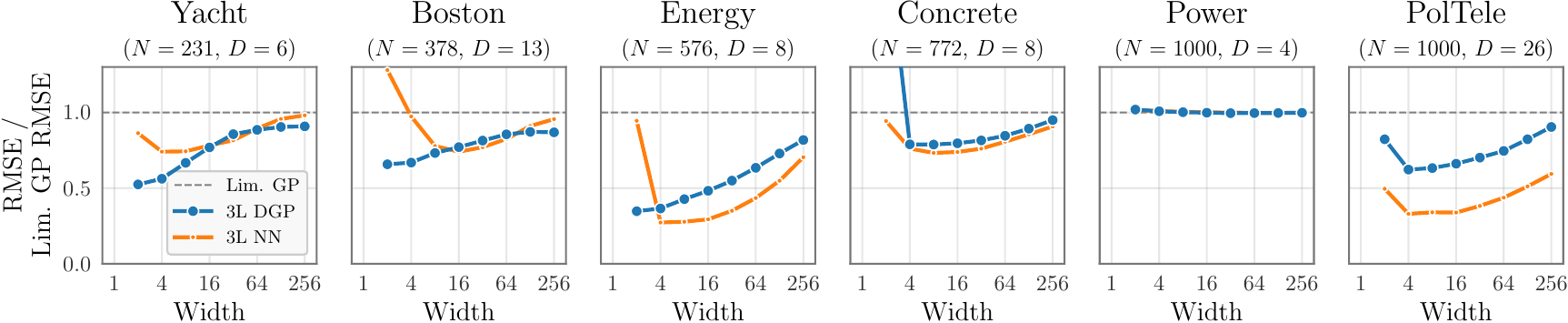}
    \caption{
      Test set root mean squared error (RMSE) of {\bf 3-layer} Deep GP (and neural networks) as a function of width on regression datasets (lower is better).
      Numbers are scaled so that $1$ corresponds to the limiting GP RMSE.
    }
    \label{fig:dgp3l_width_rmse}
  \end{figure}

  \begin{table}[t!]
    \caption{%
      Root mean squared error (RMSE) of Deep GP on UCI regression datasets as a function of depth (lower is better).
      All models for a given dataset have the same prior covariance.
    }
    \label{tab:dgp_depth_rmse}
    \centering
    \resizebox{\linewidth}{!}{%
      \begin{tabular}{ccccccccc}
\toprule
\thead{Depth} & \thead{} & \thead{Yacht\\($N=231$, $D=6$)} & \thead{Boston\\($N=378$, $D=13$)} & \thead{Energy\\($N=576$, $D=8$)} & \thead{Concrete\\($N=772$, $D=8$)} & \thead{Power\\($N=1000$, $D=4$)} & \thead{PolTele\\($N=1000$, $D=26$)} \\
\midrule
1 &          &                         $0.327$ &                           $0.643$ &                          $0.267$ &                            $0.424$ &                          $0.241$ &                             $0.305$ \\
2 &          &                         $0.240$ &                           $0.480$ &                          $0.183$ &                 $\mathbf{ 0.350 }$ &               $\mathbf{ 0.235 }$ &                             $0.229$ \\
3 &          &              $\mathbf{ 0.229 }$ &                $\mathbf{ 0.426 }$ &               $\mathbf{ 0.169 }$ &                            $0.438$ &                          $0.236$ &                  $\mathbf{ 0.218 }$ \\
\bottomrule
\end{tabular}

    }
  \end{table}

  \clearpage
}

\subsection{Control Experiment: How Well Does NUTS Sample Deep GP Posteriors?}
In order to determine if the NUTS sampler accurately captures Deep GP performance,
we perform a control experiment on synthetic data.
Specifically, we generate $N=1000$ datasets from $\text{width}=1$, $\text{width}=2$, $\text{width}=4$, and $\text{width}=8$ Deep GP,
sampling from these models at randomly-generated $\bx$ values in a $4$-dimensional input space.
Each generating Deep GP has two layers: the first uses a RBF covariance, and the second uses the sum of 1-dimensional RBF covariances.
We then train $\text{width}=1$, $\text{width}=2$, $\text{width}=4$, and $\text{width}=8$ Deep GP on each of the generated datasets,
using half the data for training and half for testing.
Our hypothesis is that $\text{width}=j$ models should at least achieve good test set performance on $\text{width}=j$ generated datasets.

\cref{fig:control_experiment} displays the test set log likelihood on the generated datasets.
We see that $\text{width} \in \{2, 4, 8\}$ models tend to achieve similar performance on each of the datasets.
On the other hand, the $\text{width}=1$ models achieve significantly worse test set performance,
even on the dataset generated by a $\text{width}=1$ model.
This suggests that the NUTS sampler is unable to converge to good posterior samples for $\text{width}=1$ models,
which may potentially explain the superior performance of $\text{width}=2$ Deep GP observed in the \cref{sec:experiments_bayesian} experiments.

\subsection{3-Layer Deep GP and 3-Layer Bayesian Neural Networks}
We extend the experiments from \cref{sec:experiments_bayesian} to 3-layer Deep GP and Bayesian neural networks.
Specifically, we measure the test set log likelihood (\cref{fig:dgp3l_width} top) and training set kernel fit (\cref{fig:dgp3l_width} bottom) on 6 regression datasets.
The Deep GP models use a standard RBF covariance function for the first layer and sums of 1-dimensional RBF covariances for the second and third layers.
The neural networks add an additional $f_3(\cdot) = {\textstyle \frac{1}{\sqrt{H_2}}} \bw_3^\top \bsigma(\cdot) + \beta b_3$
layer on top of the \cref{eqn:2layer_nn} construction, where $H_2$ is the width of the second layer and the entries of $\bw_3, b_3$ are i.i.d. unit Gaussian.

We compare models of different width, increasing the width of both hidden layers simultaneously.
It is worth noting that changing the width of the first hidden layer affects the prior second moment of the Deep GP/neural network.
Therefore, unlike the the experiments in \cref{sec:experiments_bayesian}, we are no longer ensuring that all models are moment-matched.
Nevertheless, width has the same effect for 3-layer models as it does for 2-layer models.
Width-$2$ Deep GP almost always outperform all other Deep GP, and neural networks achieve best log likelihood and kernel fit with $\leq 8$ hidden units per layer.

\subsection{Additional Figures and Tables}
\cref{fig:dgp_width_rmse,fig:dgp3l_width_rmse} report the test set root mean squared error (RMSE) of 2-layer and 3-layer models as a function of width (lower is better).
As with log likelihood, we find that width generally harms RMSE.
\cref{tab:dgp_depth_rmse} reports the test set RMSE for Deep GP of various depth (controlling the first and second prior moments, as in \cref{tab:dgp_depth}).
Again, depth is generally beneficial with regards to RMSE.

\section{Proof of Theorem~\ref{thm:dgp_gp}}
\label{sec:proof_dgp_gp}

Here we prove our main theoretical result (\cref{thm:dgp_gp}), which states that Deep GP converge to (single-layer) GP in the infinite width limit.
Following \citet{matthews2018gaussian}, we will prove that random processes with countable index sets converge in distribution to GP with countable index sets.
Thus, it is sufficient to prove that the marginals of the random process converge in distribution to multivariate Gaussians \citep[e.g.][]{billingsley2013convergence}.
While a more general treatment may be of theoretical interest, the limitation to countable index sets is sufficient for machine learning applications where we are often concerned with a finite set of events.

Before arriving at a general result, we begin with specialized proofs for two subclasses of Deep GP:
\begin{enumerate*}
  \item those that use \emph{additively-decomposing prior covariance functions} in each GP layer (\cref{lemma:additive_dgp}), and
  \item those that use \emph{isotropic prior covariance functions} (\cref{lemma:isotropic_dgp}).
\end{enumerate*}
We note that these two cases include many ``textbook'' covariance functions (i.e. the kernels described by \citet{genton2001classes} or \citet[][Ch. 4]{rasmussen2006gaussian}).
Afterwards, we present a more general result.
We note that the assumptions required for the general result are rather minimal, and are indeed satisfied by most Deep GP architectures (or arbitrarily-precise approximations thereof).

\subsection{Warmup 1: Deep GP with Additive and/or Isotropic Covariance Functions}
\label{sec:additive_dgp_proof}

The first case we will explore is Deep GP with covariance functions that decompose additively.
We will assume that the output of the additive composition is scaled to account for the input dimensionality.
This additive decomposition suggests a straightforward application of the strong law of large numbers.
\begin{observation}
  \label{lemma:additive_dgp}
  Let $f_2(\bfn_1(\bx))$ be a 2-layer zero-mean Deep GP, where $k_2(\cdot, \cdot): \reals^{H_1} \times \reals^{H_1} \to \reals$
  can be written in the form:
  \begin{equation}
    k_2(\bz, \bz') = \frac{1}{H_1} \sum_{i=1}^{H_1} k_2^{(\text{comp})} \left( z_i, z'_i \right),
    \label{eqn:additive_covar}
  \end{equation}
  where $k_2^{(\text{comp})}(\cdot, \cdot) : \reals \times \reals \to \reals$ is a positive definite function that is bounded by some polynomial:
  \[
    | k_2^{(\text{comp})}(z, z') | \leq \sum_{j=0}^R \sum_{k=0}^{j} a_{j,k} | z^{j-k} z^{\prime k} |
  \]
  for some $R < \infty$ and constants $a_{j,k} > 0$.
  Additionally, assume that $| k_1(\bx, \bx') | < \infty$ for all finite $\bx$, $\bx'$.
  Then the conditional covariance $\Ev{ f_2(\bfn_1(\bx)) f_2(\bfn_1(\bx')) \mid \bfn_1(\bx), \bfn_1(\bx')}$ becomes almost surely constant as $H_1 \to \infty$ for finite $\bx, \bx'$.
\end{observation}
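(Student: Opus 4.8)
The plan is to recognize that, because $f_2(\cdot)$ is a zero-mean GP with covariance $k_2$ drawn independently of $\bfn_1(\cdot)$, conditioning on the first-layer outputs makes the conditional covariance exactly the second-layer kernel evaluated at those (random) outputs:
\[
\Ev{ f_2(\bfn_1(\bx)) f_2(\bfn_1(\bx')) \mid \bfn_1(\bx), \bfn_1(\bx') } = k_2(\bfn_1(\bx), \bfn_1(\bx')) = \frac{1}{H_1} \sum_{i=1}^{H_1} k_2^{(\text{comp})}\!\left( f_1^{(i)}(\bx), f_1^{(i)}(\bx') \right),
\]
where the last equality uses the additive form of \cref{eqn:additive_covar}. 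This exhibits the conditional covariance as an empirical average of the summands $Y_i \triangleq k_2^{(\text{comp})}(f_1^{(i)}(\bx), f_1^{(i)}(\bx'))$, which sets up a direct application of the strong law of large numbers (SLLN).

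Next I would establish that the $Y_i$ are i.i.d.\ with finite mean. Since the component functions $f_1^{(1)}, \ldots, f_1^{(H_1)}$ are i.i.d.\ draws from $\GP{0}{k_1}$, the pairs $(f_1^{(i)}(\bx), f_1^{(i)}(\bx'))$ are i.i.d.\ zero-mean bivariate Gaussians with covariance
\[
\bSigma = \begin{bmatrix} k_1(\bx, \bx) & k_1(\bx, \bx') \\ k_1(\bx', \bx) & k_1(\bx', \bx') \end{bmatrix},
\]
which is finite by the assumption $|k_1(\bx, \bx')| < \infty$. Hence the $Y_i$ are i.i.d.\ with the distribution of $k_2^{(\text{comp})}(U, V)$ for $(U, V) \sim \normaldist{\bzero}{\bSigma}$.

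The integrability check is the one step that actually uses the polynomial hypothesis, and is where I would be most careful. Applying the bound and linearity of expectation gives $\Ev{|Y_1|} \leq \sum_{i=0}^R \sum_{j=0}^{i} a_{i,j}\, \Ev{ |U^{i-j} V^j| }$; each mixed absolute moment is finite because, by Cauchy--Schwarz, $\Ev{|U^{i-j} V^j|} \leq (\Ev{ U^{2(i-j)} })^{1/2} (\Ev{ V^{2j} })^{1/2}$ and centered Gaussians with finite variance possess finite moments of every order. As the double sum is finite ($R < \infty$), we conclude $\Ev{|Y_1|} < \infty$. Kolmogorov's SLLN then yields $\frac{1}{H_1} \sum_{i=1}^{H_1} Y_i \to \Ev{Y_1}$ almost surely as $H_1 \to \infty$, with the deterministic limit $\Ev{Y_1} = \Evover{(U, V) \sim \normaldist{\bzero}{\bSigma}}{ k_2^{(\text{comp})}(U, V) }$ depending on $\bx, \bx'$ only through $\bSigma$. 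This is precisely the claimed almost-sure convergence of the conditional covariance to a constant. The remaining steps (identifying the conditional covariance and the i.i.d.\ structure) are mechanical; the polynomial bound is doing all the real work, ensuring the SLLN's first-moment condition holds regardless of how $k_2^{(\text{comp})}$ grows.
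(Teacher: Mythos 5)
Your proposal is correct and follows essentially the same route as the paper's proof: identify the conditional covariance as the empirical average $\frac{1}{H_1}\sum_{i=1}^{H_1} k_2^{(\text{comp})}(f_1^{(i)}(\bx), f_1^{(i)}(\bx'))$ of i.i.d.\ terms, verify finiteness of the first moment via the polynomial bound and finite Gaussian moments, and conclude by the strong law of large numbers. Your Cauchy--Schwarz step for the mixed absolute moments is, if anything, slightly cleaner than the paper's version of the same integrability check.
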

\begin{proof}
  We have that
  \begin{align}
    &\phantom{=} \: \lim_{H_1 \to \infty} \Ev{f_2(\bfn_1(\bx)) f_2(\bfn_1(\bx')) \mid \bfn_1(\bx) \bfn_1(\bx')}
    \nonumber
    \\
    &=
    \lim_{H_1 \to \infty} k_2(\bfn_1(\bx), \bfn_1(\bx'))
    =
    \lim_{H_1 \to \infty} \frac{1}{H_1}
    \sum_{i=1}^{H_1} k_2^{(\text{comp})}(f_1^{(i)}(\bx), f_1^{(i)}(\bx')).
    \label{eqn:additive_limit}
  \end{align}
  Note that all the $k_2^{(\text{comp})}(f^{(i)}(\bx), f^{(i)}(\bx'))$ terms are i.i.d. by construction.
  Moreover, since $k_1(\bx, \bx')$ is finite for all finite $\bx, \bx'$,
  all moments of $f_1^{(i)}(\bx), f_1^{(i)}(\bx')$ will also be finite.
  Using our assumptions on $k_2^{(\text{comp})}(\cdot, \cdot)$, we have:
  \begin{align*}
    \left\vert \Evover{f_1^{(i)}(\cdot)}{ k_2^{(\text{comp})}(f_1^{(i)}(\bx), f_1^{(i)}(\bx'))} \right\vert
    &\leq
    \Evover{f_1^{(i)}(\cdot)}{ \left\vert k_2^{(\text{comp})}(f_1^{(i)}(\bx), f_1^{(i)}(\bx')) \right\vert}
    \\
    &\leq
    \sum_{j=0}^R \sum_{k=0}^{j} a_{j,k}
    \Evover{f_1^{(i)}(\cdot)}{ \left\vert f_1^{(i)}(\bx)^{j-k} f_1^{(i)}(\bx')^k \right\vert }
    \\
    &\leq
    \sum_{j=0}^R \sum_{k=0}^{j} a_{j,k}
    \sqrt{\Evover{f_1^{(i)}(\cdot)}{ \left\vert f_1^{(i)}(\bx)^{2j-2k} f_1^{(i)}(\bx')^{2k} \right\vert }}
    \\
    &< \infty.
    \tag{moments of $f_1^{(i)}(\bx), f_1^{(i)}(\bx')$ are finite}
  \end{align*}
  Therefore, we can apply the strong law of large numbers to the limit in \cref{eqn:additive_limit}
  which thus is almost surely constant.
\end{proof}

Neural networks are one common case of a Deep GP with additive covariance functions.
Combining \cref{lemma:additive_dgp} with \cref{lemma:nn_cf} gives the classic GP convergence result first discovered by \citet{neal1995bayesian}.
We note however that, when additive structure exists, we do not need to rely on the covariance perspective of \cref{lemma:nn_cf,lemma:dgp_cf}, as we can prove GP convergence using the central limit analysis of \citep{neal1995bayesian,lee2017deep,matthews2018gaussian}.

\subsection{Warmup 2: Deep GP with Continuous Isotropic Covariance Functions}
\label{sec:isotropic_dgp_proof}

The most common Deep GP architectures use RBF or Mat\'ern covariance functions for the GP layers \citep[e.g.][]{damianou2013deep,damianou2015deep,bui2016deep,salimbeni2017doubly,havasi2018inference}.
These covariance functions belong to the class of \emph{isotropic kernels}, which are covariances that can be written as a function of Euclidean distance:
\[
  k(\bz, \bz') = \varphi( \Vert \bz - \bz' \Vert_2^2 ).
\]
Similar to \cref{lemma:additive_dgp}, our analysis of isotropic Deep GP relies on the strong law of large numbers.
Again, we will assume that the covariance functions are scaled to account for the input dimensionality.

\begin{observation}
  \label{lemma:isotropic_dgp}
  Define $\mathbb{A} \triangleq \cup_{d=1}^\infty ( \reals^d \times \reals^d)$.
  Let $f_2(\bfn_1(\bx))$ be a 2-layer zero-mean Deep GP, where $k_2(\cdot, \cdot): \mathbb A \to \reals$
  is a continuous isotropic kernel that can be written in the form:
  \begin{equation}
    k_2(\bz, \bz') = \begin{cases}
      \varphi \left( (\bz - \bz')^2 \right)  & \bz, \bz' \in \reals \\
      \varphi \left( \frac{1}{2} \Vert \bz - \bz' \Vert_2^2 \right)  & \bz, \bz' \in \reals^2 \\
      \varphi \left( \frac{1}{3} \Vert \bz - \bz' \Vert_2^2 \right)  & \bz, \bz' \in \reals^3 \\
      \varphi \left( \frac{1}{4} \Vert \bz - \bz' \Vert_2^2 \right)  & \bz, \bz' \in \reals^4 \\
      \vdots
    \end{cases}.
    \label{eqn:isotropic_covar}
  \end{equation}
  Additionally, assume that $| k_1(\bx, \bx') | < \infty$ for all finite $\bx$, $\bx'$.
  Then the conditional covariance $\Ev{ f_2(\bfn_1(\bx)) f_2(\bfn_1(\bx')) \mid \bfn_1(\bx), \bfn_1(\bx')}$ becomes almost surely constant as $H_1 \to \infty$ for finite $\bx, \bx'$.
\end{observation}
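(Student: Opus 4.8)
The plan is to mirror the strong-law argument of \cref{lemma:additive_dgp} almost verbatim, with the only new ingredient being the continuity of the outer radial profile $\varphi$. The key observation is that the conditional covariance is \emph{exactly} the second-layer kernel evaluated at the first-layer outputs: by the Deep GP definition (\cref{eqn:dgp_def}), conditioned on $\bfn_1(\bx)$ and $\bfn_1(\bx')$, the value $f_2$ is a zero-mean GP, so
\[
  \Ev{ f_2(\bfn_1(\bx)) \, f_2(\bfn_1(\bx')) \mid \bfn_1(\bx), \bfn_1(\bx') } = k_2(\bfn_1(\bx), \bfn_1(\bx')).
\]
Substituting the isotropic form of \cref{eqn:isotropic_covar} with dimension $d = H_1$, this equals $\varphi\!\left( \frac{1}{H_1} \Vert \bfn_1(\bx) - \bfn_1(\bx') \Vert_2^2 \right)$, so it suffices to show the argument of $\varphi$ converges a.s. to a deterministic constant.

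First I would expand the normalized squared norm as an empirical average over the $H_1$ independent first-layer draws:
\[
  \frac{1}{H_1} \Vert \bfn_1(\bx) - \bfn_1(\bx') \Vert_2^2 = \frac{1}{H_1} \sum_{i=1}^{H_1} \left( f_1^{(i)}(\bx) - f_1^{(i)}(\bx') \right)^2 .
\]
Because the $f_1^{(i)}(\cdot)$ are i.i.d. draws from the first-layer GP, the summands are i.i.d., with common expectation $k_1(\bx,\bx) - 2 k_1(\bx,\bx') + k_1(\bx',\bx')$, which is finite by the assumption $|k_1(\bx,\bx')| < \infty$ for finite inputs. Kolmogorov's strong law of large numbers then yields a.s. convergence of this average to that finite constant $c$. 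Finally, since $\varphi$ is continuous and a.s. convergence is preserved under continuous maps, $\varphi$ of the average converges a.s. to $\varphi(c)$, a deterministic constant, establishing that $k_2(\bfn_1(\bx), \bfn_1(\bx'))$ becomes a.s. constant as $H_1 \to \infty$.

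For a warmup lemma of this kind there is no serious obstacle; the argument is genuinely short. The two points to be careful about are that the dimensional scaling $1/H_1$ hard-wired into \cref{eqn:isotropic_covar} is precisely what turns the argument of $\varphi$ into a normalized empirical mean amenable to the SLLN (without it the argument would diverge), and that the continuity hypothesis on $\varphi$ is what licenses passing the a.s. limit through it. Both are mild, textbook-kernel–satisfying conditions, so the work is entirely in verifying finiteness of the first moment (which follows immediately from $|k_1| < \infty$) and invoking the continuous-mapping property of a.s. convergence.
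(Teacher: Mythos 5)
Your proposal is correct and follows essentially the same route as the paper's proof: reduce the conditional covariance to $\varphi\!\left(\tfrac{1}{H_1}\sum_{i=1}^{H_1}(f_1^{(i)}(\bx)-f_1^{(i)}(\bx'))^2\right)$, note the summands are i.i.d.\ with finite mean $k_1(\bx,\bx)+k_1(\bx',\bx')-2k_1(\bx,\bx')$, apply the strong law of large numbers, and pass the a.s.\ limit through the continuous $\varphi$. The only cosmetic difference is order of operations (the paper moves the limit inside $\varphi$ by continuity before invoking the SLLN, whereas you invoke the SLLN first and then the continuous-mapping property), which is the same argument.
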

\begin{proof}
  We have that
  \begin{align}
    &\phantom{=} \: \lim_{H_1 \to \infty} \Ev{f_2(\bfn_1(\bx)) f_2(\bfn_1(\bx')) \mid \bfn_1(\bx) \bfn_1(\bx')}
    \nonumber
    \\
    &=
    \lim_{H_1 \to \infty} k_2(\bfn_1(\bx), \bfn_1(\bx'))
    =
    \lim_{H_1 \to \infty} \varphi \left( \frac{1}{H_1} \Vert \bfn_1(\bx) - \bfn_1(\bx') \Vert_2^2 \right)
    \nonumber
    \\
    &=
    \varphi \left( \lim_{H_1 \to \infty} \frac 1 {H_1} \sum_{i=1}^{H_1} \left(
      f_1^{(i)}(\bx) - f_1^{(i)}(\bx')
    \right)^2 \right)
    \label{eqn:isotropic_limit}
  \end{align}
  where we can move the limit inside $\varphi(\cdot)$ by the continuity assumption.
  Note that $f_1^{(i)}(\bx) - f_1^{(i)}(\bx') \stackrel{\text{i.i.d}}{\sim} \normaldist{ 0 }{ \sigma^2 }$,
  where $\sigma^2 = k_1(\bx, \bx) + k_1(\bx', \bx') - 2 k_1(\bx, \bx')$.
  Since $k_1(\bx, \bx')$ is finite---%
  and thus $\sigma^2 = \Ev{(f_1^{(i)}(\bx) - f_1^{(i)}(\bx'))^2}$ is finite---%
  \cref{eqn:isotropic_limit} is almost surely constant by the strong law of large numbers.
\end{proof}
Combining \cref{lemma:isotropic_dgp} with \cref{lemma:dgp_cf}, we have that Deep GP with isotropic covariance functions
become GP in their infinite width limit.

\subsection{Assumptions Required for a General Result}
\label{sec:assumptions}

We note that the most common Deep GP/neural network architectures either decompose additively or use isotropic covariance functions \citep{damianou2013deep,bui2016deep,salimbeni2017doubly,agrawal2020wide,aitchison2020bigger}.
In these cases, \cref{lemma:additive_dgp,lemma:isotropic_dgp} are sufficient to prove GP convergence.
To prove a more general result, we will need to make an additional set of assumptions about the GP covariance functions.
We emphasize that these assumptions are sufficient but not necessary conditions,
yet they hold for most Deep GP architectures (or arbitrarily-accurate approximations thereof).
Informally, these assumptions are:
\begin{enumerate}
  \item each covariance function \emph{``scales reasonably''} with the dimensionality of its inputs;
  \item each covariance function has a \emph{Lesbegue-Stieltjes representation}; and
  \item each covariance function is \emph{bounded}.
\end{enumerate}
These assumptions are---in practice---very minimal.
The first item ensures that the prior covariance doesn't ``blow up'' for high-dimensional data (e.g. the covariance does not converge to a constant for all inputs).
The second item admits all but the most pathological covariance functions \citep{yaglom1987correlation}.
The last item may at first seem unreasonable, since some common covariance functions are unbounded (e.g. linear kernel, polynomial kernel, etc.).
However, we note that many covariance functions are bounded on any compact domain, and therefore from a practical perspective we can approximate any of these unbounded covariances to any arbitrary precision.

\begin{assumption}[Covariance functions have Lesbegue-Stieltjes representations with compact spectral support]
  \label{ass:kernels_harmonizable}
  \normalfont
    We assume that all covariance functions of interest $k(\bz, \bz'): \reals^D \times \reals^D \to \reals$ can be represented by a Lesbegue-Stieltjes integral of the following form:
    \begin{equation}
      k(\bz, \bz') = \int \rho \left( \frac{1}{D}
      \sum_{i=1}^D \phi \left( z_i \xi_i - z'_i \xi'_i \right) \right)
      \intd \mu((\xi_1, \xi'_1), \ldots, (\xi_D, \xi'_D)),
      \label{eqn:lesbegue_steiltjes}
    \end{equation}
    where
    \begin{itemize}
      \item $\mu((\xi_1, \xi'_1), \ldots, (\xi_D, \xi_D'))$ is a positive definite function of bounded variation and compact support: i.e.:
        there exists some constant $C < \infty$ such that $\mu( (\reals \times \reals)^D - ( [-C, C] \times [-C, C])^D )  = 0$;
      \item $\phi : \reals \to \reals$ is bounded above and below by a finite polynomial; and
      \item $\rho(z) : \reals \to \reals$ is a bounded and continuous function.
    \end{itemize}
    We will show that this formulation---though complex---admits additive kernels, isotropic kernels, and most other non-pathological kernels (or arbitrarily-precise approximations thereof).
\end{assumption}

\begin{assumption}[Covariance functions scale with dimensionality]
  \label{ass:kernels_dimensionality}
  \normalfont
    Let $\mathbb{A} = \cup_{d=1}^\infty ( \reals^d \times \reals^d)$.
    We assume that the covariance function $k(\cdot, \cdot): \mathbb{A} \to \reals$ scales \emph{reasonably with dimensionality};
    that is, it can be written in the following form:
    \[
      k(\bz, \bz') = \begin{cases}
        k^{(1)}(\bz, \bz'), & \bz, \bz' \in \reals, \\
        k^{(2)}(\bz, \bz'), & \bz, \bz' \in \reals^2, \\
        k^{(3)}(\bz, \bz'), & \bz, \bz' \in \reals^3, \\
        \vdots
      \end{cases}
    \]
    where the $k^{(j)}(\bz, \bz')$ satisfy \cref{ass:kernels_harmonizable}:
    \begin{equation}
      \begin{gathered}
        k^{(j)}(\bz, \bz') = \int \rho \left( \frac{1}{j}
        \sum_{i=1}^j \phi \left( z_i \xi_i - z'_i \xi'_i \right) \right)
        \intd \mu_j((\xi_1, \xi'_1), \ldots, (\xi_j, \xi_j')),
      \end{gathered}
      \nonumber
    \end{equation}
    and the measures $\mu_j$ satisfy the Kolmogorov consistency condition:
    \begin{equation}
      \begin{gathered}
        \mu_{j + k} \left( \mathcal E \times \reals^k \right) = \mu_j \left( \mathcal E \right)
        \text{ for every $j$, $k \geq 1$, and every Borel set $\mathcal E \subset \reals^j$}.
      \end{gathered}
      \nonumber
    \end{equation}
\end{assumption}

\subsection{An Exhaustive Discussion About Assumptions~\ref{ass:kernels_harmonizable} and \ref{ass:kernels_dimensionality}}
We again reiterate that \cref{ass:kernels_harmonizable,ass:kernels_dimensionality} are not necessary conditions.
If more is known about the Deep GP architecture (e.g. all covariance functions are additive or isotropic, etc.), then it is possible to use a smaller and simpler set of assumptions.
We nevertheless argue that---in practice, \cref{ass:kernels_harmonizable,ass:kernels_dimensionality} are indeed very general.

\paragraph{All but the most pathological covariance functions can be expressed (or well-approximated) by \cref{ass:kernels_harmonizable}.}
If we choose $\rho(\cdot) = \cos(\cdot)$, $\phi(\cdot)$ to be the identity function, and $\mu$ to be any measure of bounded variation, then \cref{eqn:lesbegue_steiltjes} reduces to:
\begin{equation}
  k(\bz, \bz') = \int \cos \left( \frac{1}{D}
  \sum_{i=1}^D \left( \bz^\top \bxi - \bz^{\prime\top} \bxi' \right) \right)
  \intd \mu(\bxi, \bxi'),
  \label{eqn:fourier_steiltjes}
\end{equation}
where $\bxi$ and $\bxi'$ are equal to $[\xi_1, \ldots, \xi_D]$ and $[\xi'_1, \ldots, \xi'_D]$, respectively.
This is a Fourier-Stieljes integral, and almost all (bounded) covariance functions encountered in the machine learning literature can be written in this form \citep{genton2001classes}.
Such covariance functions are known as \emph{harmonizable kernels}.
When the measure $\mu$ is concentrated on the diagonal, then \cref{eqn:fourier_steiltjes} reduces to Bochner's representation of stationary covariance functions \citep[][Ch. 4]{rasmussen2006gaussian}.
More generally, a non-diagonal $\mu$ results in non-stationary covariance functions,
and \citet[][Sec. 26.4]{yaglom1987correlation} argues that (bounded) covariances that \emph{cannot} be expressed by \cref{eqn:fourier_steiltjes} tend to be pathological in nature.

Of course, any covariance function that can be expressed as \cref{eqn:fourier_steiltjes} is necessarily \emph{bounded},
since $\cos(\cdot)$ is bounded and the measure $\mu$ has bounded variation by assumption.
The most common unbounded covariance functions are dot-product kernels, such as the linear kernel ($\beta^2 + \bx^\top \bx'$ for some constant $\beta > 0$)
or the ``ReLU kernel'' ($\beta^2 + \bsigma(\bx)^\top \bsigma(\bx')$, where $\bsigma(\cdot) = \max\{ \bzero, \cdot \}$).
Importantly, these covariance functions meet the additive structure condition of \cref{lemma:additive_dgp}, and so they do not require the general treatment of \cref{ass:kernels_harmonizable,ass:kernels_dimensionality}.
However, we would also note that these covariances are bounded on any compact domain, so we can approximate them to arbitrary precision by replacing $\bx$ with $({\bx}/{\Vert \bx \Vert_\infty}) \min\{ \Vert \bx \Vert_\infty, B\}$ for any $B < \infty$ and similarly for $\bx'$.

The other simplifying assumption is that the spectral measure $\mu$ has compact support.
Again, even if $k(\cdot, \cdot)$ corresponds to a spectral measure with infinite support, it can be approximated to arbitrary precision by replacing $\bxi$ with $({\bxi }/{\Vert \bxi \Vert_\infty}) \min \{ \Vert \bxi \Vert_\infty, B \}$ for some constant $B$.

\paragraph{\cref{ass:kernels_dimensionality} captures natural ways of scaling to dimensionality.}
The two requirements of \cref{ass:kernels_dimensionality} are mechanisms for defining reasonable sequences of covariance functions.
The consistency requirement on $\mu_j$ ensures that covariance functions do not ``change significantly'' as dimensionality increases.
The $1/j$ term simply prevents covariances from becoming unbounded or degenerate as $j \to \infty$.
We note that---by choosing appropriate $\rho(\cdot)$ and $\phi(\cdot)$ functions in \cref{eqn:lesbegue_steiltjes}---this $1/j$ term can correspond to ``natural'' scaling rates.
For example:
\begin{itemize}
  \item If $k(\cdot, \cdot)$ is isotropic, then \cref{eqn:lesbegue_steiltjes} can be reduced to:
    \[
      k_j(\bz, \bz') = \varphi \left( \frac 1 j \sum_{i=1}^j \left( z_i - z_i' \right)^2 \right)
      = \varphi \left( \frac 1 j \left\Vert \bz_i - \bz_i' \right\Vert^2 \right),
    \]
    where $\varphi(\cdot)$ is a continuous positive definite function.
    We get this by setting $\rho(\cdot) = \varphi(\cdot)$, $\phi(\cdot) = (\cdot)^2$, and by setting $\mu_j$ to be atomic.
    This is the scaling of isotropic covariances studied in \cref{sec:isotropic_dgp_proof}.

  \item If $k(\cdot, \cdot)$ is additive, then \cref{eqn:lesbegue_steiltjes} can take the form:
    \begin{equation}
      k_j(\bz, \bz') = \frac 1 j \sum_{i=1}^j \int \cos \left( z_i \xi_i - z_i' \xi_i' \right) \intd \mu(\xi_i, \xi_i'),
      \label{eqn:additive_lesbegue}
    \end{equation}
    where $\mu$ is of bounded variation.
    We get this by setting $\rho(z) = (z/|z|) \min\{ |z|, B \}$ for some sufficiently large constant $B$ and by setting $\phi(\cdot) = \cos(\cdot)$.\footnote{
      Since the integral $\int \cos \left( z_i \xi_i - z_i' \xi_i' \right) \intd \mu(\xi_i, \xi_i')$ is bounded, there exists some constant $B$
      such that $\rho(z) = z/|z| \min\{ |z|, B \}$ is effectively equal to the identity function in \cref{eqn:additive_lesbegue}.
    }
    This is now the sum of 1D harmonizable covariance functions using the scaling studied in \cref{sec:additive_dgp_proof}.
\end{itemize}

\subsection{A General Result}
With \cref{ass:kernels_harmonizable,ass:kernels_dimensionality},
we show that the conditional covariance of a 2-layer zero-mean Deep GP becomes almost surely constant in the limit of infinite-width.

\begin{lemma}\label{lemma:dgp_cond_covar}
  Let $f_2(\bfn_1(\bx))$ be a 2-layer zero-mean Deep GP, where $k_2(\cdot, \cdot)$ satisfies \cref{ass:kernels_harmonizable,ass:kernels_dimensionality}.
  The conditional covariance $\Ev{ f_2(\bfn_1(\bx)) f_2(\bfn_1(\bx')) \mid \bfn_1(\bx), \bfn_1(\bx')}$ becomes almost surely constant as $H_1 \to \infty$.
\end{lemma}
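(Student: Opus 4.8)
The plan is to reduce the conditional covariance to the kernel $k_2$ evaluated at the first-layer outputs and then show that this random quantity converges almost surely to a deterministic constant. Since $f_2$ is a zero-mean GP, conditioning on $\bfn_1(\bx)$ and $\bfn_1(\bx')$ makes $(f_2(\bfn_1(\bx)), f_2(\bfn_1(\bx')))$ jointly Gaussian with covariance $k_2(\bfn_1(\bx), \bfn_1(\bx'))$, so that $\Ev{ f_2(\bfn_1(\bx)) f_2(\bfn_1(\bx')) \mid \bfn_1(\bx), \bfn_1(\bx')} = k_2(\bfn_1(\bx), \bfn_1(\bx'))$. Applying the $H_1$-dimensional instance of \cref{ass:kernels_dimensionality} (together with \cref{ass:kernels_harmonizable}), I would write
\[
  k_2(\bfn_1(\bx), \bfn_1(\bx')) = \int \rho\Bigl( \tfrac{1}{H_1} \textstyle\sum_{i=1}^{H_1} \phi\bigl( f_1^{(i)}(\bx)\, \xi_i - f_1^{(i)}(\bx')\, \xi'_i \bigr) \Bigr) \intd \mu_{H_1}(\bxi, \bxi'),
\]
which couples two independent sources of randomness: the i.i.d. first-layer outputs $\{(f_1^{(i)}(\bx), f_1^{(i)}(\bx'))\}_i$ and the spectral variables $\{(\xi_i, \xi'_i)\}_i$ over which we integrate.

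The key observation is that the marginalization-consistency condition of \cref{ass:kernels_dimensionality} forces $\mu_{H_1}$ to be a product measure $\nu^{\otimes H_1}$ for a compactly supported probability measure $\nu$ on $\reals \times \reals$, so the integral is an expectation over i.i.d. pairs $(\xi_i, \xi'_i) \sim \nu$. Since these are independent of the i.i.d. Gaussian outputs, the triples $(f_1^{(i)}(\bx), f_1^{(i)}(\bx'), \xi_i, \xi'_i)$ are i.i.d. in $i$, and hence so are the summands $\phi(f_1^{(i)}(\bx)\xi_i - f_1^{(i)}(\bx')\xi'_i)$. I would verify their integrability directly from the hypotheses: the polynomial bound on $\phi$, the compact support of $\nu$ (bounding $|\xi_i|, |\xi'_i|$), and the finiteness of every Gaussian moment of $(f_1^{(i)}(\bx), f_1^{(i)}(\bx'))$ (which follows because $k_1$ is finite) together give $\Ev{|\phi(f_1(\bx)\xi - f_1(\bx')\xi')|} < \infty$. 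The strong law of large numbers then shows the inner average $T_{H_1} \triangleq \tfrac{1}{H_1}\sum_i \phi(\cdots)$ converges jointly almost surely (over both randomness sources) to the deterministic constant $\bar{S} \triangleq \Ev{\phi(f_1(\bx)\xi - f_1(\bx')\xi')}$.

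It remains to push the limit through the continuous outer map $\rho$ and the spectral integral, which is the delicate step. Continuity of $\rho$ gives $\rho(T_{H_1}) \to \rho(\bar{S})$ jointly a.s.; by Fubini this convergence holds, for almost every realization of the $f_1^{(i)}$, for $\nu^{\otimes\infty}$-almost every spectral sequence. To interchange the $\nu^{\otimes H_1}$-integral with the limit I would produce a fixed-$f$ dominating function: taking the supremum of $|\phi|$ over the compact $\xi$-support bounds each summand by a polynomial $M_i$ in $(f_1^{(i)}(\bx), f_1^{(i)}(\bx'))$, and a second application of the SLLN shows $\sup_{H_1} \tfrac{1}{H_1}\sum_i M_i \le K(f) < \infty$ for almost every $f$; since $\rho$ is continuous and hence bounded on $[-K(f), K(f)]$, dominated convergence yields $k_2(\bfn_1(\bx), \bfn_1(\bx')) \to \rho(\bar{S})$ almost surely. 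The main obstacle is exactly this interchange: one must cleanly separate the GP-layer and spectral randomness so that the SLLN governs the combined i.i.d. summands while $\rho$ and the finite measure $\nu^{\otimes H_1}$ are controlled by dominated convergence. The warmups \cref{lemma:additive_dgp,lemma:isotropic_dgp} are the degenerate cases where $\rho$ is the identity and where $\mu$ is atomic, respectively, and there this interchange is immediate.
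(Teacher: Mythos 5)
Your proposal follows the same route as the paper's proof of \cref{lemma:dgp_cond_covar}: reduce the conditional covariance to $k_2(\bfn_1(\bx), \bfn_1(\bx'))$, expand it via the integral representation of \cref{ass:kernels_harmonizable,ass:kernels_dimensionality}, apply the strong law of large numbers to the inner average $\frac{1}{H_1}\sum_{i=1}^{H_1} \phi(f_1^{(i)}(\bx)\xi_i - f_1^{(i)}(\bx')\xi_i')$, and pass the limit through the continuous map $\rho$ and the spectral integral by dominated convergence. Where you diverge is the interchange step, and your handling is arguably the more careful of the two. The paper keeps the spectral variables $(\bxi,\bxi')$ deterministic, applies Kolmogorov's SLLN for independent but non-identically distributed summands at each fixed spectral sequence (using the polynomial bound on $\phi$, the compact spectral support, and the finiteness of Gaussian moments to verify the variance condition $\sum_i i^{-2}\variance[\,\cdot\,]<\infty$), and then invokes dominated convergence---but its displayed equation leaves the limit inside an integral against the $H_1$-dependent measure $\mu_{(H_1)}$, which only strictly makes sense after passing to the projective-limit measure on the infinite product space. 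Your version makes that passage explicit: reading the consistency condition of \cref{ass:kernels_dimensionality} as giving i.i.d. spectral pairs $(\xi_i,\xi_i')\sim\nu$, you obtain i.i.d. summands, a single application of the i.i.d. SLLN over the joint randomness, and a clean Fubini-plus-domination argument for the interchange. The one caveat is that the reading $\mu_{H_1}=\nu^{\otimes H_1}$ with a single common factor is slightly stronger than what the marginal-consistency condition literally forces (a product measure with consistent marginals could have factors that differ across coordinates); if they do differ, you fall back to exactly the paper's non-identically-distributed SLLN, and the rest of your argument survives unchanged. Either way, the proof is correct.
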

As with our warmups (\cref{lemma:additive_dgp,lemma:isotropic_dgp}),
the proof of \cref{lemma:dgp_cond_covar} essentially boils down to applying the strong law of large numbers.
The primary complication of this proof is ensuring that \cref{ass:kernels_harmonizable,ass:kernels_dimensionality} satisfy the conditions necessary to invoke the strong law.
\begin{proof}
  By \cref{ass:kernels_harmonizable,ass:kernels_dimensionality}, the limiting conditional covariance can be written as:
  \begin{align}
    &\phantom{=}\:
    \lim_{H_1 \to \infty} \Ev{ f_2(\bfn_1(\bx)) f_2(\bfn_1(\bx')) \mid \bfn_1(\bx), \bfn_1(\bx')}
    \nonumber
    \\
    &=
    \lim_{H_{1} \to \infty} k_2(\bfn_1(\bx), \bfn_1(\bx'))
    \nonumber
    \\
    &= \lim_{H_{1} \to \infty} \int \rho \left( \frac{1}{H_{1}} \sum_{i=1}^{H_1} \phi \left(  f_1^{(i)}(\bx) \xi_i - f_1^{(i)}(\bx') \xi_i' \right) \right)
      \intd \mu_{(H_{1})}((\xi_1, \xi'_1), \ldots, (\xi_{H_1}, \xi_{H_1}')).
    \label{eqn:full_lim_covar_finite_measure}
  \end{align}
  By the consistency requirement in \cref{ass:kernels_dimensionality}, we know that there exists a unique probability measure $\mu_\infty$ on the Borel product sigma-algebra over $\reals^\infty$
  such that, for any $H_1 \in \mathbb N$ and any Borel subset $\mathcal E \in \reals^{H_1}$, we have $\mu_{H_1} \left( \mathcal E \right) = \mu_\infty \left( \mathcal E \times \reals \times \reals \times \ldots \right)$.
  Therefore, we can rewrite \cref{eqn:full_lim_covar_finite_measure} as
  \begin{align}
    \lim_{H_{1} \to \infty} \int
    g^{(H_1)}_{\bfn_1(\bx), \bfn_1(\bx')} \left( \bxi, \bxi' \right)
    \intd \mu_{\infty}(\bxi, \bxi'),
    \label{eqn:full_lim_covar}
  \end{align}
  where $\bfn_1(\bx)$, $\bfn_1(\bx')$, $\bxi$, and $\bxi'$ are infinite dimensional vectors ($\bfn_1(\bx) = [f_1^{(1)}(\bx), f^{(2)}_1(\bx), \ldots]$, $\bxi = [\xi_1, \xi_2, \ldots]$),
  and $g^{(H_1)}_{\bfn_1(\bx), \bfn_1(\bx')}$ is the random function given by
  \begin{equation}
    g^{(H_1)}_{\bfn_1(\bx), \bfn_1(\bx')} \left( \bxi, \bxi' \right)
    = \rho \left( \frac{1}{H_{1}} \sum_{i=1}^{H_1} \phi \left(  f_1^{(i)}(\bx) \xi_i - f^{(i)}(\bx') \xi_i' \right) \right).
    \label{eqn:function_sequence}
  \end{equation}
  Note that $ g^{(H_1)}_{\bfn_1(\bx), \bfn_1(\bx')} \left( \bxi, \bxi' \right)$ is bounded (because $\rho(\cdot)$ is bounded).
  Moreover, since each of the $\mu_j$ have compact support, $\mu_\infty$ will have compact support as well.
  Therefore, we can consider the domain of $g^{(H_1)}_{\bfn_1(\bx), \bfn_1(\bx')} \left( \bxi, \bxi' \right)$
  to be $[-C, C]^D \times [-C, C]^D$ for some constant $C < \infty$,
  and the range to be $[-B, B]$ for some constant $B < \infty$.
  Now consider the limit of \cref{eqn:function_sequence}:
  \begin{equation}
    \lim_{H_1 \to \infty} g^{(H_1)}_{\bfn_1(\bx), \bfn_1(\bx')} \left( \bxi, \bxi' \right)
    = \rho \left( \lim_{H_1 \to \infty} \frac{1}{H_{1}} \sum_{i=1}^{H_1} \phi \left(  f_1^{(i)}(\bx) \xi_i - f_1^{(i)}(\bx') \xi_i' \right) \right),
    \label{eqn:lim_function_sequence}
  \end{equation}
  where we can bring the limit inside $\rho$ by continuity.
  Consider fixed inputs $\bxi, \bxi' \in [-C, C]^\infty$.
  The $f^{(i)}(\cdot)$ terms are i.i.d. zero-mean Gaussian by construction, and have finite variance because $k_1(\cdot, \cdot)$ is finite almost everywhere (\cref{ass:kernels_harmonizable}).
  Additionally, the $\xi_i$ and $\xi_i'$ terms lie in the finite interval $[-C, C]$.
  Consequentially, $ f_1^{(i)}(\bx) \xi_i - f_1^{(i)}(\bx') \xi'_i $ are Gaussian random variables with zero mean and bounded variance.
  Moreover, $\phi(\cdot)$ is bounded above and below by a finite polynomial (\cref{ass:kernels_harmonizable}),
  and thus $\Var{ \phi ( f_1^{(i)}(\bx) \xi_i - f_1^{(i)}(\bx') \xi'_i ) }$ is bounded by a finite linear combination of the moments of $f_1^{(i)}(\bx) \xi_i - f_1^{(i)}(\bx') \xi'_i$.
  Since the moments of a Gaussian are positive polynomial functions of the variance,
  we have that
  $\Var{ \phi ( f_1^{(i)}(\bx) \xi_i - f_1^{(i)}(\bx') \xi'_i ) }$
  is bounded, and thus
  \[
    \begin{gathered}
      \left\vert \Ev{ \phi \left( f_1^{(i)}(\bx) \xi_i - f_1^{(i)}(\bx') \xi'_i \right) } \right\vert < \infty, \qquad
      \sum_{i=1}^\infty \frac 1 {i^2} \Var{ \phi \left( f_1^{(i)}(\bx) \xi_i - f_1^{(i)}(\bx') \xi'_i \right) } < \infty.
    \end{gathered}
  \]
  We therefore satisfy the strong law of large numbers conditions, and so
  \cref{eqn:lim_function_sequence}
  converges to a constant almost surely for any $\bxi, \bxi' \in [-C, C]^\infty$.
  In other words,
  \begin{equation}
    \lim_{H_1 \to \infty}
    g^{(H_1)}_{\bfn_1(\bx), \bfn_1(\bx')} \left( \bxi, \bxi' \right)
    =
    \lim_{H_1 \to \infty} \frac{1}{H_{1}} \sum_{i=1}^{H_1} \phi \left(  f_1^{(i)}(\bx) \xi_i - f_i^{(i)}(\bx') \xi_i' \right)
    =
    \text{const. a.s.}
    \label{eqn:inner_limit_converges}
  \end{equation}
  Since this holds for all $\bxi, \bxi' \in [-C, C]^\infty$, we have:
  \begin{equation}
    \int \lim_{H_1 \to \infty}
    g^{(H_1)}_{\bfn_1(\bx), \bfn_1(\bx')} \left( \bxi, \bxi' \right)
    \intd \mu_{\infty}(\bxi, \bxi') =
    \text{const. a.s.}
    \label{eqn:int_inner_limit_converges}
  \end{equation}

  To finish off, we must show that dominated convergence implies that
  \begin{align}
    \lim_{H_1 \to \infty} \int
    g^{(H_1)}_{\bfn_1(\bx), \bfn_1(\bx')} \left( \bxi, \bxi' \right)
    \intd \mu_{\infty}(\bxi, \bxi')
    =
    \int \lim_{H_1 \to \infty}
    g^{(H_1)}_{\bfn_1(\bx), \bfn_1(\bx')} \left( \bxi, \bxi' \right)
    \intd \mu_{\infty}(\bxi, \bxi').
    \label{eqn:dct}
  \end{align}
  For all $H_1 \in \mathbb N$, we have that
  $\left| g^{(H_1)}_{\bfn_1(\bx), \bfn_1(\bx')} \right| \leq B$
  is bounded and therefore trivially dominated by $\int B \intd \mu_{\infty}(\bxi, \bxi') < \infty$.
  Now consider any fixed value of $\bfn_1(\bx), \bfn_1(\bx')$.
  By \cref{eqn:inner_limit_converges}, we have that
  $g^{(H_1)}_{\bfn_1(\bx), \bfn_1(\bx')} \left( \bxi, \bxi' \right)$
  will converge pointwise with respect to $\bxi, \bxi'$
  except when $\bfn_1(\bx), \bfn_1(\bx')$ comes from a set of measure $0$.
  Therefore, \cref{eqn:dct} holds with probability $1$ (where the probablity is taken with respect to $\bfn_1(\bx)$, $\bfn_1(\bx')$).
  Combining \cref{eqn:full_lim_covar,eqn:dct,eqn:int_inner_limit_converges} completes the proof.
\end{proof}

The proof of \cref{thm:dgp_gp} follows from applying \cref{lemma:dgp_cf,lemma:dgp_cond_covar}.
\newtheorem*{dgpthm}{Theorem~\ref{thm:dgp_gp} (Restated)}
\begin{dgpthm}
  \vspace{0.5em}
  Let $f_{L} \circ \ldots \circ \bfn_{1} \left( \bx \right)$ be a zero-mean Deep GP (Eq.~\ref{eqn:dgp_def}),
  where each layer satisfies \cref{ass:kernels_harmonizable,ass:kernels_dimensionality}.
  Then $\lim_{H_{L-1} \to \infty} \cdots \lim_{H_1 \to \infty} f_{L} \circ \ldots \circ \bfn_{1} \left( \bx \right)$
  converges in distribution to a (single-layer) GP.
\end{dgpthm}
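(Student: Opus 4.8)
The plan is to reduce the theorem to the two-layer machinery already assembled and then induct on depth. Following the reduction noted at the start of this section, it suffices to show that every finite-dimensional marginal $[\,(f_L\circ\cdots\circ\bfn_1)(\bx_1),\ldots,(f_L\circ\cdots\circ\bfn_1)(\bx_N)\,]$ converges in distribution to a multivariate Gaussian. The workhorse is the pairing of \cref{lemma:dgp_cond_covar} (the conditional covariance of a two-layer model becomes a.s.\ constant as the inner width diverges) with \cref{lemma:dgp_cf} (an a.s.-constant conditional covariance is exactly the criterion for GP-hood).

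First I would record a vector-valued version of \cref{lemma:dgp_cf}. Stacking the coordinate functions of $\bfn_2$ together with the $N$ inputs into a single vector $\mathrm{vec}(\bF_2)$, the conditional law given $\bfn_1$ is Gaussian with block-diagonal covariance (one block $\bK_2(\bF_1,\bF_1)$ per coordinate, since the coordinates are i.i.d.\ given $\bfn_1$). The Jensen/characteristic-function argument of \cref{lemma:dgp_cf} applies verbatim to this joint vector, so $\mathrm{vec}(\bF_2)$ is asymptotically Gaussian precisely when every block concentrates---which is the conclusion of \cref{lemma:dgp_cond_covar}. The payoff is that, as $H_1\to\infty$, the composition $\bfn_2\circ\bfn_1$ converges to a \emph{vector-valued} GP $\widetilde{\bfn}_2$ whose coordinates become independent (their shared dependence on $\bfn_1$ is washed out as its contribution to the covariance becomes deterministic), with a finite limiting covariance $k_{2,\mathrm{lim}}(\bx,\bx')=\Ev{f_2(\bfn_1(\bx))\,f_2(\bfn_1(\bx'))}$, which is finite because the assumed Lebesgue--Stieltjes form keeps $k_2$ bounded.

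I would then induct on $L$. The base case $L=2$ is the combination above. For the inductive step, take the innermost limit $H_1\to\infty$ first; by the previous paragraph this collapses the bottom two layers into the single honest GP layer $\widetilde{\bfn}_2$, leaving the $(L-1)$-layer Deep GP $f_L\circ\cdots\circ\bfn_3\circ\widetilde{\bfn}_2$. Its upper kernels $k_3,\ldots,k_L$ are unchanged and still satisfy \cref{ass:kernels_harmonizable,ass:kernels_dimensionality}, while its new bottom kernel $k_{2,\mathrm{lim}}$ need only be finite---the assumptions are invoked only on the kernels being integrated \emph{from above}, whereas the bottom layer enters \cref{lemma:dgp_cond_covar} merely through the finiteness of its covariance. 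The remaining sequential limits $\lim_{H_L}\cdots\lim_{H_2}$ then collapse this $(L-1)$-layer model to a single GP by the induction hypothesis.

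The main obstacle I anticipate is justifying that the innermost limit genuinely commutes with the fixed upper layers, i.e.\ that $\bfn_2\circ\bfn_1\Rightarrow\widetilde{\bfn}_2$ (finite-dimensional convergence) implies $f_L\circ\cdots\circ\bfn_3\circ(\bfn_2\circ\bfn_1)\Rightarrow f_L\circ\cdots\circ\bfn_3\circ\widetilde{\bfn}_2$. This is a continuous-mapping statement: conditionally on the independent randomness of the upper layers, the map from the bottom-layer values at $\bx_1,\ldots,\bx_N$ to the output marginal is continuous, because the upper covariance functions are continuous, so the conditional law of the output depends continuously on the input values. I would discharge this with a Slutsky/continuous-mapping argument on the finite-dimensional marginals, taking care that $k_{2,\mathrm{lim}}$ is finite so that the next application of \cref{lemma:dgp_cond_covar} is legitimate, and that joint convergence holds across all coordinates and all $N$ inputs simultaneously rather than one coordinate at a time.
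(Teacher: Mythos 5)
Your proposal is correct and follows essentially the same route as the paper: combine \cref{lemma:dgp_cond_covar} with \cref{lemma:dgp_cf} to get the two-layer case via the characteristic-function bound and L\'evy's continuity theorem, then induct on depth by taking the innermost limit first. In fact you supply more detail than the paper, which dispatches the multi-layer case with ``a simple induction''---your vector-valued extension of the covariance criterion, the substitution of the collapsed bottom layers by the limiting GP with kernel $k_{2,\mathrm{lim}}$, and the continuous-mapping justification for passing the weak limit through the fixed upper layers are exactly the ingredients that induction needs.
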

\begin{proof}
In the two layer case, combining \cref{lemma:dgp_cf,lemma:dgp_cond_covar} gives us:
\begin{equation}
  \lim_{H_1 \to \infty} \Ev{ \exp \left( i \bt^\top \bfn_2 \right) }
  =
  \exp \left(
    -\frac 1 2 \bt^\top \Ev{ \bfn_2 \bfn_2^\top } \bt
  \right)
  \quad
  \text{for all } \bt \in \reals^N.
\end{equation}
Note that this is the characteristic function of a zero-mean multivariate Gaussian with covariance $\Ev{ \bfn_2 \bfn_2^\top }.$
Thus by L\'evy's continuity theorem, $\bfn_2$ converges in distribution to $\normaldist{\bzero}{\Ev{ \bfn_2 \bfn_2^\top }}.$
Since this is true for any finite marginal $\bfn_2$, the Deep GP $f_2(\bfn_1(\cdot))$ converges in distribution to a (single-layer) Gaussian process.
A simple induction extends this to multiple layers.
\end{proof}

\subsection{Comparison to \citet{agrawal2020wide}.}
\citet[][Thm. 8]{agrawal2020wide} also study infinite-width limits of Deep GP, though their analysis is restricted to a sub-class of models.
Specifically, they focus on infinitely-wide neural networks with finite bottleneck layers---a specific class of Deep GP
that they refer to as \emph{bottleneck NNGP}.
As the width of the bottleneck layers grow, these models become neural networks with infinite width in all layers.
Coupling this with the analysis of infinitely-wide neural networks \citep{lee2017deep,matthews2018gaussian}, we have that bottleneck NNGP converge to standard GP in the limit of infinite width.
However, the authors note that not every Deep GP can be expressed by the bottleneck NNGP architecture \citep[][Remark 7]{agrawal2020wide},
and so their analysis is not sufficient to prove that all Deep GP converge to GP.

It is worth considering whether this strategy can be applied to other architectures---i.e. what Deep GP can be reduced to infinite-width neural networks with bottlenecks.
For example, \citet{cutajar2017random} study Deep GP with isotropic covariances.
They convert each GP layer into neural network-like layers using random Fourier features \citep{rahimi2007random}.
However, their model is not exactly equivalent to a neural network, and so the analysis of \citet{agrawal2020wide} does not immediately apply.
Moreover, it is not obvious how to express nonstationary GP as neural network-like architectures with modular width.

Finally, we remark that the strategy of \citet[][Thm. 8]{agrawal2020wide} is in some sense the opposite of what is explored in this paper.
They and others \citep{duvenaud2014avoiding,cutajar2017random,dutordoir2021deep} reduce certain classes of Deep GP to infinitely-wide neural networks with bottlenecks;
conversely, we reduce neural networks to Deep GP.

\section{Proofs of Theorems~\ref{thm:mean_concentration} and \ref{thm:tails}}
\label{sec:proof_tails}

Both \cref{thm:mean_concentration,thm:tails} use the same two-step strategy presented for \cref{lemma:nn_cf}.
We will decompose an expectation using the law of total expectation, and then apply Jensen's inequality.

\newtheorem*{thmpeak}{Theorem~\ref{thm:mean_concentration} (Restated)}
\begin{thmpeak}
  Let $f_L \circ \ldots \circ \bfn_1(\cdot)$ be a zero-mean Deep GP.
  Given a finite set of inputs $\bX = [\bx_1, \ldots, \bx_N]$, define
  $\bfn_\ell = [(f_\ell \circ \ldots \circ \bfn_1(\bx_1)), \ldots, (f_\ell \circ \ldots \circ \bfn_1(\bx_N))]$ for $\ell \in [1, L]$,
  and define $\bK_\text{lim} = \Evover{\bfn_L}{ \bfn_L \bfn_L^\top }$.
  Then,
  $
  p( \bfn_L = \bzero) > \normaldist{\bg = \bzero; \bzero}{ \bK_\text{lim} }.
  $
\end{thmpeak}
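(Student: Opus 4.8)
The plan is to reuse the two-step technique behind \cref{lemma:nn_cf}: decompose an expectation with the law of total expectation, bound it below with Jensen's inequality, and then read off the density at the origin from the characteristic function. Because $f_L(\cdot)$ is a zero-mean GP, conditioning on the penultimate layer makes $\bfn_L$ a zero-mean Gaussian with covariance $\bK_L \triangleq \bK_L(\bfn_{L-1}, \bfn_{L-1})$, so
\[
  \Evover{\bfn_L}{ \exp(i \bt^\top \bfn_L) }
  = \Evover{\bfn_{L-1}}{ \exp\!\left( -\tfrac12\, \bt^\top \bK_L\, \bt \right) }
  \geq \exp\!\left( -\tfrac12\, \bt^\top \bK_\text{lim}\, \bt \right),
\]
where the equality is the law of total expectation, the inequality is Jensen applied to the convex map $\exp$ (this is \cref{eqn:cf_bound}), and $\bK_\text{lim} = \Evover{\bfn_{L-1}}{\bK_L} = \Evover{\bfn_L}{\bfn_L \bfn_L^\top}$ again by total expectation. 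The right-hand side is precisely the characteristic function of $\bg \sim \normaldist{\bzero}{\bK_\text{lim}}$, so the characteristic function of $\bfn_L$ dominates that of its moment-matched Gaussian pointwise.

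To pass from characteristic functions to densities at the origin, I would invoke Fourier inversion, which for an integrable characteristic function gives the density at $\bzero$ as $(2\pi)^{-N}$ times the integral of the characteristic function over $\reals^N$. Applying this to both $\bfn_L$ and $\bg$ and using the pointwise bound gives
\[
  p(\bfn_L = \bzero)
  = \frac{1}{(2\pi)^N} \int_{\reals^N} \Evover{\bfn_L}{ \exp(i \bt^\top \bfn_L) } \intd \bt
  \;\geq\; \frac{1}{(2\pi)^N} \int_{\reals^N} \exp\!\left( -\tfrac12\, \bt^\top \bK_\text{lim}\, \bt \right) \intd \bt
  = \normaldist{\bg = \bzero; \bzero}{\bK_\text{lim}}.
\]
Integrability can be handled cleanly: by Tonelli the left integral equals $(2\pi)^{N/2}\, \Evover{\bfn_{L-1}}{ (\det \bK_L)^{-1/2} }$, which is exactly $(2\pi)^N$ times the expected conditional-Gaussian peak $\Evover{\bfn_{L-1}}{ p(\bfn_L = \bzero \mid \bfn_{L-1}) }$. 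If this expectation diverges, the marginal density at the origin is infinite and the claimed strict inequality is immediate; otherwise the inversion is valid (and $\bK_\text{lim} \succ 0$ makes the Gaussian side finite).

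It remains to upgrade the inequality to a strict one, which is where the non-degeneracy hypothesis enters. Since $\exp$ is strictly convex, the Jensen step is an equality at a fixed $\bt$ only if $\bt^\top \bK_L\, \bt$ is a.s. constant. The assumption (in particular its $\ell = L$ instance) that $\Evover{\bfn_L \mid \bfn_{L-1}}{ \bfn_L \bfn_L^\top } = \bK_L$ is not a.s. constant implies, by polarization, that $\bt_0^\top \bK_L\, \bt_0$ fails to be a.s. constant for some $\bt_0 \neq \bzero$, so the characteristic-function inequality is strict at $\bt_0$. Both characteristic functions are continuous in $\bt$, hence the strict inequality persists on a neighborhood of $\bt_0$, a set of positive Lebesgue measure, forcing the two integrals above to differ strictly. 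I expect the main obstacle to be exactly this last measure-theoretic step—justifying that a pointwise strict inequality holding only on a subset of $\reals^N$ still yields a strict inequality between the integrals—together with the bookkeeping needed to invoke Fourier inversion (equivalently, the conditional-density identity) without assuming a priori that the marginal density at the origin is finite.
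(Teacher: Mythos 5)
Your proof is correct and takes essentially the same route as the paper's: the law of total expectation plus Jensen's inequality applied to the characteristic function, followed by Fourier inversion at the origin. You are in fact more careful than the paper about two points it asserts without justification---integrability of the characteristic function (your Tonelli computation identifying the integral with $(2\pi)^N\,\Evover{\bfn_{L-1}}{p(\bfn_L=\bzero \mid \bfn_{L-1})}$, and the handling of the divergent case) and the argument that a strict Jensen gap at a single $\bt_0$, via continuity and nonnegativity of the gap, yields a strict inequality between the two integrals.
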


\begin{proof}
  We first produce a bound on the characteristic function of $\bfn_L$:
  \begin{align}
    \Evover{ \bfn_L } { \exp( i \bt^\top \bfn_L )}
    &=
    \Evover{ \bF_1 }{ \Evover{\bF_2 \mid \bF_1}{ \ldots \Evover{\bF_{L-1} \mid \bF_{L-2}}{ \Evover{ \bfn_L \mid \bF_{L-1} }{
        \exp( i \bt^\top \bfn_L )
    }}}}
    \tag{law of total expectation}
    \\
    &=
    \Evover{ \bF_1 }{ \Evover{\bF_2 \mid \bF_1}{ \ldots \Evover{ \bF_{L-1} \mid \bF_{L-2} }{
      \exp \left( - \frac 1 2 \bt^\top \bK_L\left( \bF_{L-1}, \bF_{L-1} \right) \bt \right)
    }}}
    \tag{Gaussian characteristic function}
    \\
    &\geq
    \exp\left( -\frac 1 2 \bt^\top
      \Evover{ \bF_1 }{ \Evover{\bF_2 \mid \bF_1}{ \ldots \Evover{ \bF_{L-1} \mid \bF_{L-2} }{
        \bK_L\left( \bF_{L-1}, \bF_{L-1} \right)
      }}}
    \bt \right)
    \tag{Jensen's inequality, strict convexity of $\exp$}
    \\
    &=
    \exp\left( -\frac 1 2 \bt^\top
      \Evover{ \bF_{L-1} }{
        \bK_L\left( \bF_{L-1}, \bF_{L-1} \right)
      }
    \bt \right)
    \tag{law of total expectation}
    \\
    &=
    \exp\left( -\frac 1 2 \bt^\top
      \Evover{ \bfn_{L} }{
        \bfn_L \bfn_L^\top
      }
    \bt \right)
    =
    \exp\left( -\frac 1 2 \bt^\top
      \bK_\text{lim}
    \bt \right)
    =
    \Evover{\bg \sim \normaldist{\bzero}{\bK_\text{lim}}} { \exp( i \bt^\top \bg ) }.
    \label{eqn:cf_bound}
  \end{align}
  Thus, we have
  \begin{align*}
    p( \bfn_L = \bzero)
    &= \int \Evover{ \bfn_L }{ \exp( i \bt^\top \bfn_L ) } \intd \bt
    \\
    &\geq \int \Evover{ \bg \sim \normaldist{\bzero}{\bK_\text{lim}} }{ \exp( i \bt^\top \bg ) } \intd \bt
    \:\: = \:\:  \normaldist{\bg = \bzero; \bzero}{ \bK_\text{lim} },
  \end{align*}
  which completes the proof.
\end{proof}

It is worth noting that the Jensen gap of the characteristic function cascades with depth.
For example, given the 3-layer model $f_3(\bfn_2(\bfn_1(\cdot)))$, we have:
\begin{align*}
  \underbracket{\Evover{\bF_1}{\Evover{\bF_2 \mid \bF_1}{ \exp \left( -\frac 1 2 \bt^\top \bK_3 \bt \right) }}}_{\text{CF of 3-layer Deep GP marginal}}
  &\geq
  \underbracket{\Evover{\bF_1}{ \exp \left( -\frac 1 2 \bt^\top \!\! \Evover{\bF_2 \mid \bF_1}{ \bK_3 } \bt \right)}}_{\text{CF of 2-layer Deep GP marginal}}
  \\
  &\geq
  \underbracket{\exp\left( -\frac 1 2 \bt^\top \!\! \Evover{\bF_1}{\Evover{\bF_2 \mid \bF_1}{ \bK_3 }} \bt \right)}_{\text{CF of $\normaldist{\bzero}{\Evover{\bF_1}{\Evover{\bF_2 \mid \bF_1}{ \bK_3 }}}$}},
\end{align*}
Consequentially, the peaks of deeper models will be sharper than those of shallower models.
This is analogous to the tail effects of depth analyzed in \cref{sec:tails}.

\newtheorem*{thmtails}{Theorem~\ref{thm:tails} (Restated)}
\begin{thmtails}
  Let $\bt \in \reals^N$.
  Using the same setup, notation, and assumptions as \cref{thm:mean_concentration},
  the odd moments of $\bt^\top \bfn_L$ are zero and the even moments larger than $2$ are super-Gaussian,
  i.e. $\Evover{\bfn_L}{ (\bt^\top \bfn_L)^{r} } \geq \Evover{ \bg \sim \normaldist{\bzero}{\bK_\text{lim}} }{ (\bt^\top \bg)^r }$
  for all even $r \geq 4$.
  Moreover, if $k_L(\cdot, \cdot)$ is bounded almost everywhere, the moment generating function $\Evover{\bfn_L}{ \exp( \bt^\top \bfn_L) }$ exists and is similarly super-Gaussian.
\end{thmtails}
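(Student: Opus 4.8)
The plan is to exploit a single structural fact: conditioned on the top-layer inputs $\bF_{L-1}$, the output $\bfn_L$ is Gaussian, $\bfn_L \mid \bF_{L-1} \sim \normaldist{\bzero}{\bK_L(\bF_{L-1}, \bF_{L-1})}$, so the scalar $s \triangleq \bt^\top \bfn_L$ satisfies $s \mid \bF_{L-1} \sim \normaldist{0}{v}$ with the nonnegative random variable $v \triangleq \bt^\top \bK_L(\bF_{L-1}, \bF_{L-1}) \bt$. By the law of total expectation, $\bK_\text{lim} = \Ev{\bfn_L \bfn_L^\top} = \Ev{\bK_L(\bF_{L-1}, \bF_{L-1})}$, so the limiting variance is $\bt^\top \bK_\text{lim} \bt = \Ev{v}$. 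Every claim then reduces to comparing a moment of the mixture $\normaldist{0}{v}$ against the corresponding moment of $\normaldist{0}{\Ev{v}}$, which I would settle with the same total-expectation-then-Jensen recipe used in \cref{lemma:nn_cf} and \cref{thm:mean_concentration}.

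First I would dispatch the odd moments: each conditional law $\normaldist{0}{v}$ is symmetric about $0$, so $s \mid \bF_{L-1} \stackrel{d}{=} -(s \mid \bF_{L-1})$, and marginalizing over $\bF_{L-1}$ gives $s \stackrel{d}{=} -s$; hence $\Ev{s^r} = 0$ for odd $r$ whenever the moment exists. For the even moments I would invoke the Gaussian moment formula $\Evover{s \mid \bF_{L-1}}{s^r} = (r-1)!! \, v^{r/2}$, so that $\Ev{s^r} = (r-1)!! \, \Ev{v^{r/2}}$, whereas the limiting Gaussian gives $\Evover{\bg \sim \normaldist{\bzero}{\bK_\text{lim}}}{(\bt^\top \bg)^r} = (r-1)!! \, (\Ev{v})^{r/2}$. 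Because $v \geq 0$ and $x \mapsto x^{r/2}$ is convex for $r/2 \geq 1$ (all even $r \geq 4$), Jensen's inequality yields $\Ev{v^{r/2}} \geq (\Ev{v})^{r/2}$, which is exactly the asserted super-Gaussian bound; the $r = 2$ case is the matching equality.

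For the moment generating function, boundedness of $k_L$ almost everywhere makes $v$ bounded, so the conditional MGF $\Evover{s \mid \bF_{L-1}}{e^{s}} = \exp(\tfrac 1 2 v)$ is bounded and its outer expectation is finite, establishing existence. Super-Gaussianity then follows by applying Jensen to the convex map $x \mapsto e^{x/2}$, giving $\Ev{e^{v/2}} \geq e^{\Ev{v}/2} = \Evover{\bg}{e^{\bt^\top \bg}}$, which is precisely \cref{eqn:mgf_jensen} extended to $L$ layers via the cascade in \cref{thm:mean_concentration}.

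The hard part is conceptual rather than computational: recognizing that conditioning on the top-layer input $\bF_{L-1}$ alone (not the full depth-$L$ cascade) already renders $\bfn_L$ Gaussian, so that the entire depth below enters only through the law of $v$ and the problem collapses to a one-dimensional Jensen inequality. The remaining care is a moment-existence technicality: the bound $\Ev{v^{r/2}} \geq (\Ev{v})^{r/2}$ holds even when the left-hand side is infinite (as $+\infty$ dominates a finite quantity), so the super-Gaussian inequality is valid unconditionally, and the boundedness hypothesis on $k_L$ is needed only to guarantee finiteness of the MGF.
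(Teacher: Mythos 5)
Your proposal is correct and follows essentially the same route as the paper: condition on the top-layer inputs so that $\bt^\top\bfn_L$ is a Gaussian variance mixture, apply the Gaussian moment/MGF formulas, and then use Jensen's inequality with the convexity of $z^{r/2}$ (for even $r\ge 4$) and of $\exp$, together with $\Ev{\bK_L(\bF_{L-1},\bF_{L-1})}=\bK_\text{lim}$. The only cosmetic difference is that you collapse the paper's nested tower of conditional expectations into a single scalar variance mixture $v$ and handle the odd moments by symmetry rather than by the explicit Gaussian moment formula; the substance is identical.
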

\begin{proof}
  We can express the moments of $\bt^\top \bfn_L$ as:
  \begin{align*}
    \Evover{\bfn_L}{ \left( \bt^\top \bfn_L \right)^r }
    &=
    \Evover{ \bF_1 }{ \Evover{\bF_2 \mid \bF_1}{ \ldots \Evover{\bF_{L-1} \mid \bF_{L-2}}{ \Evover{ \bfn_L \mid \bF_{L-1} }{
       \left( \bt^\top \bfn_L \right)^r
    }}}},
  \end{align*}
  and note that the innermost expectation can be simplified to:
  \begin{align*}
    \Evover{\bfn_L \mid \bF_{L-1}}{ \left( \bt^\top \bfn_L \right)^r }
    = \begin{cases}
      0 & r \text{ is odd} \\
      \left( \bt^\top \bK_L(\bF_{L-1}, \bF_{L-1}) \bt \right)^{\frac r 2} (r - 1)!! & r \text{ is even}.
    \end{cases}
    \tag{Gaussian moments}
  \end{align*}
  For odd $r$, note that this implies $\Evover{\bfn_L}{ (\bt^\top \bfn_L)^r } = 0$.
  For even $r \geq 4$, note that $\bt^\top \bK_L(\bF_{L-1}, \bF_{L-1}) \bt \geq 0$ by positive definiteness of kernels,
  and note that $(z)^{r/2}$ is convex for all $z \geq 0$.
  Following the same logic as in the proof for \cref{thm:mean_concentration}, we have:
  \begin{align*}
    \Evover{\bfn_L}{ \left( \bt^\top \bfn_L \right)^r }
    &=
    \Evover{ \bF_1 }{ \Evover{\bF_2 \mid \bF_1}{ \ldots \Evover{\bF_{L-1} \mid \bF_{L-2}}{
        \left( \bt^\top \bK_L(\bF_{L-1}, \bF_{L-1}) \bt \right)^{\frac r 2} (r - 1)!!
    }}}
    \\
    &\geq
    \left( \bt^\top
      \Evover{ \bF_1 }{ \Evover{\bF_2 \mid \bF_1}{ \ldots \Evover{\bF_{L-1} \mid \bF_{L-2}}{
        \bK_L(\bF_{L-1}, \bF_{L-1})
      }}}
    \bt \right)^{\frac r 2} (r - 1)!!
    \tag{Jensen's inequality, strict convexity}
    \\
    &=
    \left( \bt^\top \bK_\text{lim} \bt \right)^{\frac r 2} (r - 1)!!
    =
    \Evover{\normaldist{\bg; \bzero}{\bK_\text{DGP}(\bX, \bX)}}{ (\bt^\top \bg)^r }
  \end{align*}
  A similar proof will show that the moment generating function is similarly super-Gaussian:
  \begin{align}
    \Evover{ \bfn_L } { \exp( \bt^\top \bfn_L )}
    &=
    \Evover{ \bF_{L-1} }{
      \exp \left( \frac 1 2 \bt^\top \bK_L\left( \bF_{L-1}, \bF_{L-1} \right) \bt \right)
    }
    \label{eqn:dgp_mgf_expanded}
    \\
    &=
    \Evover{ \bF_1 }{ \Evover{\bF_2 \mid \bF_1}{ \ldots \Evover{ \bF_{L-1} \mid \bF_{L-2} }{
      \exp \left( \frac 1 2 \bt^\top \bK_L\left( \bF_{L-1}, \bF_{L-1} \right) \bt \right)
    }}}
    \nonumber
    \\
    &\geq
    \exp\left( \frac 1 2 \bt^\top
      \Evover{ \bF_1 }{ \Evover{\bF_2 \mid \bF_1}{ \ldots \Evover{ \bF_{L-1} \mid \bF_{L-2} }{
        \bK_L\left( \bF_{L-1}, \bF_{L-1} \right)
      }}}
    \bt \right)
    \nonumber
    \\
    &=  \exp \left( \frac 1 2 \bt^\top \bK_\text{lim} \bt \right)
    = \Evover{\normaldist{\bg; \bzero}{\bK_\text{lim}}}{ \exp( \bg^\top \bt ) }.
    \nonumber
  \end{align}
  We know that the moment generating function exists because, by assumption, $k_L(\cdot, \cdot)$ is bounded almost everywhere, and thus the integral defined by the expectation in \cref{eqn:dgp_mgf_expanded} is finite.
\end{proof}

\section{Derivation of Deep GP Covariances and Limiting GP Covariances}
\label{sec:limiting_kernels}

Here we derive the prior covariances of various Deep GP architectures,
as well as the covariances of their corresponding infinite width GP limits.

\subsection{RBF + Additive RBF}
First, consider a two layer Deep GP $f_2(\bfn_1(\cdot))$
where the first layer uses a RBF covariance
and the second layer uses a sum of 1-dimensional RBF covariances:
\begin{align*}
  k_1( \bx, \bx' ) &= o_1^2 \exp \left( \frac{- \Vert \bx - \bx' \Vert^2_2}{2 \ell_1^2} \right),
  \\
  k_2( \bfn_1(\bx), \bfn_1(\bx') ) &= \frac{o_2^2}{H_1} \sum_{i=1}^{H_1} \exp \left( - \frac{( f_1^{(i)}(\bx) - f_1^{(i)}(\bx') )^2}{ 2 \ell_2^2 } \right),
\end{align*}
where $H_1$ is the width of the Deep GP, and $o_1$, $\ell_1$, $o_2$, and $\ell_2$ are hyperparameters.
This is the Deep GP architecture most commonly explored in this paper.

To calculate the covariance between $f_2(\bfn_1(\bx))$ and $f_2(\bfn_1(\bx'))$, we first note that
$\tau_i \triangleq f_1^{(i)}(\bx) - f_1^{(i)}(\bx')$ is Gaussian distributed:
\begin{align}
  \tau_i = f_1^{(i)}(\bx) - f_1^{(i)}(\bx') \sim \normaldist{0}{ \sigma^2 },
  \qquad
  \sigma^2 \triangleq k_1(\bx, \bx) + k_1(\bx', \bx') - 2 k_1(\bx, \bx').
  \label{eqn:tau_diff}
\end{align}
The covariance of the Deep GP is therefore given as:
\begin{align*}
  \Evover{\text{RBF} + \text{add-RBF}}{f_2(\bfn_1(\bx)) f_2(\bfn_1(\bx')) }
  &=
  \Evover{\bfn_1(\bx), \bfn_1(\bx')}{k_2(\bfn_1(\bx), \bfn_1(\bx') }
  \\
  &=
  \Evover{\tau_i}{\frac{o_2^2}{H_1} \sum_{i=1}^{H_1} \exp \left( - \frac{\tau_i^2}{ 2 \ell_2^2 } \right)}
  \tag{$\tau_i \triangleq f_1^{(i)}(\bx) - f_1^{(i)}(\bx')$} \nonumber
  \\
  &=
  o^2_2 \Evover{\tau_1}{\exp \left( - \frac{\tau_1^2}{ 2 \ell_2^2 } \right)}
  \tag{$\tau_i$ are i.i.d.}
  \\
  &= \frac{o_2^2}{\sqrt{2 \pi \sigma^2}}
    \int_{-\infty}^{\infty}
      \exp\left( \frac{ -\tau_1^2 }{2 \ell_2^2}  \right)
      \exp\left( \frac{ -\tau_1^2 }{2 \sigma^2}  \right)
    \intd \tau_1
  \nonumber
  \\
  &= \frac{o_2^2}{\sqrt{2 \pi \sigma^2}}
    \int_{-\infty}^{\infty}
      \exp\left( - \tau_1^2 \frac{ \sigma^2 + \ell_2^2}{2 \sigma^2 \ell_2^2} \right)
    \intd \tau_1
  \nonumber
  \\
  &= \frac{o_2^2}{\sqrt{2 \pi \sigma^2}}
     \sqrt{ \frac{2 \pi \sigma^2 \ell_2^2}{ \sigma^2 + \ell_2^2 }}.
  \tag{Gaussian normalizing constant}
\end{align*}
Plugging in $\sigma^2$ from \cref{eqn:tau_diff}, we have
\begin{equation}
  \Evover{\text{RBF} + \text{add-RBF}}{f_2(\bfn_1(\bx)) f_2(\bfn_1(\bx')) } = o_2^2 \left( 1 + \frac{2 o_1^2 \left( 1 - \exp\left(
    -\frac{\Vert \bx - \bx' \Vert^2_2}{2 \ell_1^2} \right ) \right)}{\ell_2^2}
  \right)^{-1/2}.
  \label{eqn:add_rbf_dgp}
\end{equation}
Note that this covariance is the same, regardless of the Deep GP width $H_1$.
Therefore, it is also the covariance of the infinite width GP limit.
More generally, if we replace the first RBF covariance with an arbitrary covariance function $k_1(\cdot, \cdot)$
we have:
\begin{equation}
  \Evover{k_1 + \text{add-RBF}}{f_2(\bfn_1(\bx)) f_2(\bfn_1(\bx')) } = o_2^2 \left( 1 + \frac%
    {k_1(\bx, \bx) + k_1(\bx', \bx') - 2 k_1(\bx, \bx')}%
    {\ell_2^2}
  \right)^{-1/2}.
  \label{eqn:add_rbf_dgp_general}
\end{equation}
A similar derivation can be found in \citep{lu2020interpretable}.

\subsection{RBF + (Non-Additive) RBF}
Now consider a two layer Deep GP $f_2(\bfn_1(\cdot))$
where the first and second layers both use \emph{non-additive} RBF covariance functions:
\begin{align*}
  k_1( \bx, \bx' ) &= o_1^2 \exp \left( \frac{- \Vert \bx - \bx' \Vert^2_2}{2 \ell_1^2} \right),
  \\
  k_2( \bfn_1(\bx), \bfn_1(\bx') ) &= o_2^2 \exp \left(
    - \frac{\left\Vert \bfn_1(\bx) - \bfn_1(\bx') \right\Vert_2^2}%
    { 2 H_1 \ell_2^2 }
  \right),
\end{align*}
where the $1 / H_1$ factor is included to reduce the impact of the dimensionality of $\bfn_1(\cdot)$.
This is a very common Deep GP architecture \citep[e.g.][]{damianou2013deep,bui2016deep,cutajar2017random,salimbeni2017doubly},
and it is the architecture in the \cref{sec:posterior} example.
Crucially, the RBF kernel decomposes as a product across its dimensions:
\begin{align*}
  k_2( \bfn_1(\bx), \bfn_1(\bx') ) =
  o_2^2 \exp \left(
    - \frac{\left\Vert \bfn_1(\bx) - \bfn_1(\bx') \right\Vert_2^2}%
    { 2 H_1 \ell_2^2 }
  \right)
  =
  o_2^2 \prod_{i=1}^{H_1} \exp \left(
    - \frac{\left( f_1^{(i)}(\bx) - f_1^{(i)}(\bx') \right)^2}%
    { 2 H_1 \ell_2^2 }
  \right)
\end{align*}
Since the $f_1^{(i)}(\cdot)$ are independent, we have:
\begin{align*}
  \Evover{\text{RBF} + \text{RBF}}{f_2(\bfn_1(\bx)) f_2(\bfn_1(\bx')) }
  &=
  \Evover{\bfn_1(\bx), \bfn_1(\bx')}{
    o_2^2 \prod_{i=1}^{H_1} \exp \left(
      - \frac{\left( f_1^{(i)}(\bx) - f_1^{(i)}(\bx') \right)^2}%
      { 2 H_1 \ell_2^2 }
    \right)
  }
  \\
  &=
  o_2^2 \prod_{i=1}^{H_1} \Evover{f_1^{(i)}(\bx), f_1^{(i)}(\bx')}{
    \exp \left(
      - \frac{\left( f_1^{(i)}(\bx) - f_1^{(i)}(\bx') \right)^2}%
      { 2 H_1 \ell_2^2 }
    \right)
  }
  \\
  &=
  o_2^2 \prod_{i=1}^{H_1}
  \Evover{\text{RBF} + \text{add-RBF}}{f_2 \left(\frac{f^{(1)}_1(\bx)}{\sqrt{H_1}}\right) f_2 \left(\frac{f^{(1)}_1(\bx')}{\sqrt{H_1}}\right) },
\end{align*}
Plugging in \cref{eqn:add_rbf_dgp_general}, we have
\begin{align}
  \Evover{\text{RBF} + \text{RBF}}{f_2(\bfn_1(\bx)) f_2(\bfn_1(\bx')) }
  &= o_2^2 \left( 1 + \frac%
    {k_1(\bx, \bx) + k_1(\bx', \bx') - 2 k_1(\bx, \bx')}%
    {H_1 \ell_2^2}
  \right)^{-H_1/2}.
  \nonumber
  \\
  &= o_2^2 \left( 1 + \frac%
    {2o_1^2 \left( 1 - \exp\left(
      -\frac{\Vert \bx - \bx' \Vert_2^2}{2 \ell_1^2}
    \right) \right)}%
    {H_1 \ell_2^2}
  \right)^{-H_1/2}.
\end{align}
In the limit as $H_1 \to \infty$, this second moment becomes:
\begin{align}
  \lim_{H_1 \to \infty}
  \Evover{\text{RBF} + \text{RBF}}{f_2(\bfn_1(\bx)) f_2(\bfn_1(\bx')) }
  &= o_2^2 \exp \left( -\frac{ k_1(\bx, \bx) + k_1(\bx', \bx') - 2 k_1(\bx, \bx') } { 2 \ell_2^2 } \right).
  \nonumber
  \\
  &= o_2^2 \exp \left( \frac{o_1^2}{\ell_2^2} \exp \left( -\frac{\Vert \bx - \bx' \Vert_2^2}{2 \ell^2_1} \right) - 1 \right).
  \label{eqn:rbf_dgp_infty}
\end{align}

\subsection{RBF + Additive RBF + Additive RBF}
\label{sec:rbf3l}

Now consider the three layer Deep GP $f_3(\bfn_2(\bfn_1(\cdot)))$,
where the first layer uses an RBF covariance and the other layers use sums of 1-dimensional RBF covariances.
\begin{align*}
  k_1( \bx, \bx' ) &= o_1^2 \exp \left( \frac{- \Vert \bx - \bx' \Vert^2_2}{2 \ell_1^2} \right),
  \\
  k_2( \bfn_1(\bx), \bfn_1(\bx') ) &= \frac{o_2^2}{H_1} \sum_{i=1}^{H_1} \exp \left( - \frac{( f_1^{(i)}(\bx) - f_1^{(i)}(\bx') )^2}{ 2 \ell_2^2 } \right),
  \\
  k_3( \bfn_2(\bfn_1(\bx)), \bfn_2(\bfn_1(\bx')) ) &= \frac{o_3^2}{H_2} \sum_{i=1}^{H_2} \exp \left( - \frac{\left( f_2^{(i)}(\bfn_1(\bx)) - f_2^{(i)}(\bfn_1(\bx')) \right)^2}{ 2 \ell_3^2 } \right),
\end{align*}
where $H_1$, $H_2$ are the widths of the first and second layers,
and $o_1$, $\ell_1$, $o_2$, $\ell_2$, $o_3$ and $\ell_3$ are hyperparameters.
We use this architecture in \cref{sec:tails,sec:experiments_bayesian}.

Unfortunately, it is intractable to compute the second moment of this Deep GP in closed form.
To see why this is the case, note that:
\begin{align}
  \Evover{\text{RBF} + \text{add-RBF} + \text{add-RBF}}{f_3(\bfn_2(\bfn_1(\bx))) \:\: f_3(\bfn_2(\bfn_1(\bx'))) }
  \:
  =
  \:
  \Evover{\bfn_2(\bfn_1(\bx)), \bfn_2(\bfn_1(\bx'))}{k_3(\bfn_2(\bfn_1(\bx)), \: \bfn_2(\bfn_1(\bx'))) }
  \label{eqn:3layer_covar}
\end{align}
In other words, computing the covariance requires taking the expectation over the Deep GP marginal $\bfn_2(\bfn_1(\bx)), \bfn_2(\bfn_1(\bx'))$ which is intractable to compute.
We do note that we can approximate this marginal with Gauss-Hermite quadrature if $H_1$ is sufficiently small.
Moreover, unlike the 2-layer case, the width $H_1$ affects the second moment of this 3-layer Deep GP.
This is because changing $H_1$ changes the marginal distribution $\bfn_2(\bfn_1(\bx)), \bfn_2(\bfn_1(\bx'))$, which ultimately impacts the expectation in \cref{eqn:3layer_covar}.
Changing the value of $H_2$ does not affect the covariance by linearity of expectation, assuming that we hold $H_1$ constant.

As $H_1 \to \infty$ and $\bfn_2(\bfn_1(\cdot))$ converges to a Gaussian process,
the expectation in \cref{eqn:3layer_covar} becomes tractable again.
$f_3(\bfn_2(\bfn_1(\cdot)))$ effectively becomes a 2-layer Deep GP,
where the first layer has covariance given by \cref{eqn:add_rbf_dgp}
and the second layer is the sum of 1-dimensional RBF covariances.
Thus, combining \cref{eqn:add_rbf_dgp} and \cref{eqn:add_rbf_dgp_general},
we can compute the covariance of the limiting GP:

\begin{align}
  &\phantom{=} \lim_{H_2 \to \infty}{H_1 \to \infty} \Evover{\text{RBF} + \text{add-RBF} + \text{add-RBF}}{f_3(\bfn_2(\bfn_1(\bx))) \:\: f_3(\bfn_2(\bfn_1(\bx'))) }
  \nonumber
  \:
  \\
  &= o_3^2 \left( 1 + \frac{
    2 o_2 \left( 1 -  \left( 1 + \frac{2 o_1^2 \left( 1 - \exp\left(
      -\frac{\Vert \bx - \bx' \Vert^2_2}{2 \ell_1^2} \right ) \right)}{\ell_2^2}
    \right)^{-1/2} \right)
  }{\ell_3^2} \right)^{-1/2}
  \nonumber
\end{align}

A similar derivation can be found in \citep{lu2020interpretable}.

\subsection{Neural networks}
The prior second moment of a neural network with ReLU activations and a single hidden layer
(i.e. the construction in Eq.~\ref{eqn:2layer_nn})
is given by the arc-cosine kernel \citep{cho2009kernel}:
\begin{equation}
  \Evover{\text{2-layer NN}}{
    f_2(\bfn_1(\bx)) f_2(\bfn_1(\bx'))
  } = \beta^2 + \frac{1}{2 \pi} \Vert \bx \Vert \: \Vert \bx' \Vert \left(
    \sin(\theta) + ( \pi - \theta) \cos(\theta)
  \right),
\end{equation}
where $\theta = \cos^{-1} \left( (\bx^\top \bx') / (\Vert \bx \Vert \: \Vert \bx' \Vert) \right)$.
Note that this covariance is constant regardless of $H_1$.

As with the 3-layer RBF Deep GP, we cannot compute the prior second moment of deeper neural networks in closed form.
Moreover, once neural networks have more than 1 hidden layer, then the width of hidden layers affects the covariance.
Nevertheless, we can compute the limiting infinite width covariance using the recursive formula defined in \citep{cho2009kernel,lee2017deep}.

\section{Experimental Details}
\label{sec:experimental_details}

The experiments are implemented in PyTorch \citep{paszke2019pytorch}, supplemented by the Pyro \citep{bingham2019pyro} and GPyTorch \citep{gardner2018gpytorch} libraries -- all of which are open source.
We ran them on a cluster with GTX1080 and GTX2080 GPU,
and we estimate that we used $\approx 1,\!000$ hours of GPU compute time.
The largest experiments (CIFAR10 ResNets with maximum width) require $48 GB$ of GPU memory;
all other experiments only require $\leq 11 GB$ of memory.

\paragraph{Datasets.}
The datasets for the regression experiments are from the UCI repository \citep{asuncion2007uci}.
Unless otherwise stated, we split these datasets into $75\%$ training data,
$15\%$ test data, and $10\%$ validation data.
For larger datasets, we subsample the training dataset to a maximum of $N=1,\!000$ data points.
All input features are normalized to be between $-1$ and $1$,
and the $\by$ values are z-scored to have $0$ mean and unit variance.
For the non-Bayesian neural network experiments, we use the MNIST \citep{lecun1998gradient} and CIFAR10 \cite{krizhevsky2009learning} datasets.
We z-score the inputs so that each channel has $0$ mean and unit variance.
We use the standard $10,\!000$ data point test sets, and subsample the remaining data for training.
Models are trained without data augmentation.

\paragraph{Deep GP models.}
All Deep GP models use GP layers with zero prior mean.
We perform inference without making any scalable approximations,
though we do add a constant diagonal of $10^{-4}$ to all prior covariances for stability.
We perform inference using the NUTS sampler \citep{hoffman2014no} implemented in Pyro \citep{bingham2019pyro},
using $500$ warmup steps, drawing $500$ samples, a target acceptance probability of $0.8$, an initial learning rate of $0.1$,
and a maximum tree depth of $10$.
To improve inference, we infer the ``whitened'' latent variables $\bL_1^{-1} \bF_1$ and $\bL_2^{-1} \bfn_2$,
where $\bL_1$ and $\bL_2$ are the Cholesky factors of $\bK_1(\bX, \bX)$ and $\bK_2(\bF_1, \bF_1)$ respectively.
For all width $\geq 2$ Deep GP, we initialize the latent variables by running $1,\!000$ steps of Adam \citep{kingma2014adam}
with learning rate $0.01$
on the maximum a posteriori Deep GP objective.
Because width-$1$ Deep GP inference is more challenging (see the control experiment in \cref{sec:additional_results}), we instead initialize the latent variables of these models from the mean of a doubly stochastic variational Deep GP \citep{salimbeni2017doubly},
where we use $300$ inducing points per layer, $10$ function samples, and a minibatch size of $128$.
We optimize the variational Deep GP with Adam for $2,\!000$ iterations, using an initial learning rate of $0.01$, dropping it by a factor of $10$ after $50\%$ and $75\%$ of training.

For each Deep GP model, we use hyperparameters that maximize the log marginal likelihood of the corresponding limiting GP.
To find these hyperparameters, we perform $100$ iterations of Adam on the limiting GP using a learning rate of $0.1$, initializing all covariance hyperparameters to $1$ and initializing the likelihood observational noise to $0.2$.

\paragraph{Bayesian neural network models.}
We train the Bayesian neural networks in a very similar manner.
However, we perform $2,\!000$ warmup steps and draw $1,\!000$ samples using NUTS.
Again, we use the same hyperparameters as the optimized limiting GP.

\paragraph{MNIST experiments (Non-Bayesian neural networks).}
We train 3-layer (2 hidden-layer) MLP with an L2 regularization constant of $\in \{ 10^{-5}, 10^{-6}, 10^{-7}, 10^{-8} \}$,
which corresponds to per-parameter priors of $\{ \normaldist{0}{2}, \normaldist{0}{20}, \normaldist{0}{200}, \normaldist{0}{2000} \}$ when $N=50,\!000$.
Following \cref{eqn:2layer_nn}, the output of layer $\ell$ is scaled by a factor of $1 / \sqrt{H_{\ell-1}}$, where ${H_{\ell-1}}$ is the width of layer $\ell - 1$.
We train the models for $20,\!000$ iterations using the Adam optimizer.
Following \citet{lee2017deep}, we perform a random search over the learning rate and batch size parameters.
More specifically, we randomly choose $10$ learning rate/batch size tuples from $[0.001, 0.2] \times \{ 16, 32, 64, 128, 256 \}$,
and select the hyperparameters that generate the best accuracy on the withheld training data.
Note that these hyperparameters do not affect the prior of the model's Bayesian analog; they only impact the optimization dynamics.
We drop the learning rate by a factor of $10$ after $50\%$ and $75\%$ of training.

\paragraph{CIFAR10 experiments (Non-Bayesian neural networks).}
We do not perform the $1 / \sqrt{H_{\ell - 1}}$ scaling on the ResNet models, as this scaling is undone by batch normalization.
Models are optimized using SGD with an initial learning rate of $0.1$,
following the same schedule as the MNIST models.
The L2 regularization constant ($10^{-4}$) corresponds to a per-parameter prior of $\normaldist{0}{0.2}$.
Note that these hyperparameters exactly match those suggested by \citet{he2016deep} and \citet{zagoruyko2016wide}.
We train each model for $40,\!000$ iterations with a minibatch of $256$.

\paragraph{Effect of depth experiments in \cref{sec:experiments_bayesian}.}
In these experiments, our goal is to investigate the effects of depth while controlling for the first and second moments of the Deep GP models.
To that end, we construct a 3-layer Deep GP, 2-layer Deep GP, and a single-layer GP all with zero mean and the same prior covariance.
The 3-layer Deep GP uses a RBF covariance in the first layer, and sums of 1-dimensional RBF kernels in the other two layers.
We set the widths to be $H_1=2$ and $H_2=8$.
The 2-layer Deep GP uses a RBF covariance in the first layer with a width of $H_1=2$, while the second layer uses the following covariance:
\begin{equation}
  o_3^2 \left( 1 + \frac{2 o_2^2 \left( 1 - \frac{1}{H_1} \sum_{i=1}^{H_1} \exp\left(
    -\left( f^{(i)}_1(\bx) - f^{(i)}_1(\bx') \right)^2/ (2 \ell_2) \right ) \right)}{\ell_3^2}
  \right)^{-1/2}.
\end{equation}
For the single layer GP, we compute the covariance of the 3-layer model using the formula in \cref{eqn:3layer_covar}.
We approximate the marginal distribution $p(\bfn_2(\bfn_1(\bx)), \bfn_2(\bfn_1(\bx'))$ using Gauss-Hermite quadrature with 11 nodes.
We empirically confirm that the single-layer, 2-layer, and 3-layer models have the same prior covariance.

\paragraph{Visualizing $N=2$ marginal densities in \cref{sec:tails}.}
The 3-layer Deep GP uses a RBF covariance in the first layer, and sums of 1-dimensional RBF kernels in the other two layers.
We vary both widths $H_1$ and $H_2$ simultaneously, and we set all hyperparameters to $1$.
The 2-layer Deep GP are designed to match the prior covariance of the $\text{width}=1$ 3-layer model.
To that end, the first layer uses an RBF covariance, and the second layer uses the following covariance:
\begin{equation}
  \left( 1 + 2 \left( 1 - \sum_{i=1}^{H_1} \exp\left(
    \frac{-\left( f^{(i)}_1(\bx) - f^{(i)}_1(\bx') \right)^2}{ 2 } \right ) \right)
  \right)^{-1/2}.
\end{equation}
Again, we empirically verify that these models have the same  prior covariance.
We approximate the marginal densities at $400$ evenly spaced grid points on $\by \in [-3, 3] \times [3, 3]$
using Gauss-Hermite quadrature with 7 nodes.

\end{document}